\documentclass{article}

% if you need to pass options to natbib, use, e.g.:
\PassOptionsToPackage{numbers, sort&compress}{natbib}
% before loading neurips_2023

% ready for submission
\usepackage[final]{neurips_2023}

%custom
\usepackage{amsmath,amsfonts,bm}
\usepackage{pgfplots}
\pgfplotsset{compat=1.18}
\usepackage{natbib}
% Vectors
\def\vzero{{\bm{0}}}

\def\veps{{\bm{\epsilon}}}

\def\ve{{\bm{e}}}

\def\vu{{\bm{u}}}
\def\vv{{\bm{v}}}
\def\vw{{\bm{w}}}

\def\vx{{\bm{x}}}
\def\vy{{\bm{y}}}

% Matrix

% Sets
\def\sA{{\mathbb{A}}}
\def\sB{{\mathbb{B}}}
\def\sC{{\mathbb{C}}}
\def\sD{{\mathbb{D}}}
\def\E{{\mathbb{E}}}
% Don't use a set called E, because this would be the same as our symbol
% for expectation.

\def\sR{{\mathbb{R}}}

\def\sX{{\mathbb{X}}}
\def\sY{{\mathbb{Y}}}
\def\sZ{{\mathbb{Z}}}
%\setcitestyle{numbers}
\usepackage{amsmath}
\usepackage{amssymb}
\usepackage{mathtools}
\usepackage{amsthm}
\usepackage{wrapfig}

\usepackage{thmtools} 
\usepackage{thm-restate}

\theoremstyle{plain}
\newtheorem{theorem}{Theorem}[section]

\newtheorem{lemma}[theorem]{Lemma}

\theoremstyle{definition}
\newtheorem{definition}[theorem]{Definition}
\newtheorem{assumption}[theorem]{Assumption}
\theoremstyle{remark}

\allowdisplaybreaks
%custom

% to compile a preprint version, e.g., for submission to arXiv, add add the
% [preprint] option:
%     \usepackage[preprint]{neurips_2023}

% to compile a camera-ready version, add the [final] option, e.g.:
%     \usepackage[final]{neurips_2023}

% to avoid loading the natbib package, add option nonatbib:
%    \usepackage[nonatbib]{neurips_2023}

\usepackage[utf8]{inputenc} % allow utf-8 input
\usepackage[T1]{fontenc}    % use 8-bit T1 fonts
\usepackage{hyperref}       % hyperlinks
\usepackage{url}            % simple URL typesetting
\usepackage{booktabs}       % professional-quality tables
\usepackage{amsfonts}       % blackboard math symbols
\usepackage{nicefrac}       % compact symbols for 1/2, etc.
\usepackage{microtype}      % microtypography
\usepackage{xcolor}         % colors

\usepackage{color,colortbl}
\definecolor{darkblue}{rgb}{0.0,0.0,0.65}
\definecolor{darkred}{rgb}{0.65,0.0,0.0}
\definecolor{darkgreen}{rgb}{0.0,0.65,0.0}
\hypersetup{
  colorlinks = true,
  citecolor  = darkblue,
  linkcolor  = darkred,
  filecolor  = darkblue,
  urlcolor   = darkblue,
}
\usepackage{subfigure}
\usepackage{enumitem}

\title{Practical Sharpness-Aware Minimization\\ Cannot Converge All the Way to Optima}

% The \author macro works with any number of authors. There are two commands
% used to separate the names and addresses of multiple authors: \And and \AND.
%
% Using \And between authors leaves it to LaTeX to determine where to break the
% lines. Using \AND forces a line break at that point. So, if LaTeX puts 3 of 4
% authors names on the first line, and the last on the second line, try using
% \AND instead of \And before the third author name.

\author{%
  Dongkuk Si%$^{1}$
  \qquad
  \textbf{Chulhee Yun}%$^{1}$
\vspace{.4em}\\
  %$^{1}$
  Korea Advanced Institute of Science and Technology (KAIST)\vspace{.2em}\\
  \texttt{\{dongkuksi, chulhee.yun\}@kaist.ac.kr}
}

\begin{document}

\maketitle

\begin{abstract}
Sharpness-Aware Minimization (SAM) is an optimizer that takes a descent step based on the gradient at a perturbation $\vy_t = \vx_t + \rho \frac{\nabla f(\vx_t)}{\| \nabla f(\vx_t) \|}$ of the current point~$\vx_t$. 
Existing studies prove convergence of SAM for smooth functions, but they do so by assuming decaying perturbation size $\rho$ and/or no gradient normalization in $\vy_t$, which is detached from practice. To address this gap, we study deterministic/stochastic versions of SAM with practical configurations (i.e., constant $\rho$ and gradient normalization in $\vy_t$) and explore their convergence properties on smooth functions with (non)convexity assumptions.
Perhaps surprisingly, in many scenarios, we find out that SAM has \emph{limited} capability to converge to global minima or stationary points.
For smooth strongly convex functions, we show that while deterministic SAM enjoys tight global convergence rates of $\tilde \Theta(\frac{1}{T^2})$, the convergence bound of stochastic SAM suffers an \emph{inevitable} additive term $\mathcal O(\rho^2)$, indicating convergence only up to \emph{neighborhoods} of optima.
In fact, such $\mathcal O(\rho^2)$ factors arise for stochastic SAM in all the settings we consider, and also for deterministic SAM in nonconvex cases; importantly, we prove by examples that such terms are \emph{unavoidable}.
Our results highlight vastly different characteristics of SAM with vs.\ without decaying perturbation size or gradient normalization, and suggest that the intuitions gained from one version may not apply to the other.

\end{abstract}

\vspace*{-5pt}
\section{Introduction} \label{sec:intro}
\vspace*{-5pt}
Modern neural networks are armed with a large number of layers and parameters, having a risk to overfit to training data. In order to make accurate predictions on unseen data, generalization performance has been considered as the most important factor in deep learning models. Based on the widely accepted belief that geometric properties of loss landscape are correlated with generalization performance, studies have proved theoretical and empirical results regarding the relation between \textit{sharpness} measures and generalization~\citep{hochreiter1994simplifying,keskar2016large,neyshabur2017exploring,dinh2017sharp,jiang2019fantastic}. Here, \textit{sharpness} of a loss at a point generally refers to the degree to which the loss varies in the small neighborhood of the point.

Motivated by prior studies that show flat minima have better generalization performance~\citep{keskar2016large,jiang2019fantastic}, \citet{foret2020sharpness} propose an optimization method referred to as \emph{Sharpness-Aware Minimization (SAM)}. A single iteration of SAM consists of one ascent step (perturbation) and one descent step. Starting from the current iterate $\vx_t$, SAM first takes an ascent step $\vy_t = \vx_t + \rho \frac{\nabla f(\vx_t)}{\| \nabla f(\vx_t) \|}$ to (approximately) maximize the loss value in the $\rho$-neighborhood of $\vx_t$, and then uses the gradient at $\vy_t$ to update $\vx_t$ to $\vx_{t+1}$. This two-step procedure gives SAM a special characteristic: the tendency to search for a flat minimum, i.e., a minimum whose neighboring points also return low loss value. Empirical results~\citep{foret2020sharpness,chen2021vision,bahri2021sharpness,kaddourflat, lee2023enhancing} show that SAM demonstrates an exceptional ability to perform well on different models and tasks with high generalization performance. Following the success of SAM, numerous extensions of SAM have been proposed in the literature~\citep{kwon2021asam,du2021efficient,liu2022towards,mi2022make,mollenhoff2022sam,zhuang2022surrogate,jiangadaptive,sun2023adasam,shi2023make,behdin2023msam,kim2023exploring,wang2023sharpness,zhang2023gradient}.

On the theoretical side, various studies have demonstrated different characteristics of SAM~\citep{bartlett2022dynamics,wen2022does,andriushchenko2022towards,compagnoni2023sde,behdin2023sharpness,agarwala2023sam,saddle,dai2023crucial}. However, comprehending the global convergence properties of practical SAM on a theoretical level still remains elusive. In fact, several recent results~\citep{zhuang2022surrogate,mi2022make,andriushchenko2022towards, jiangadaptive,sun2023adasam} attempt to prove the convergence guarantees for SAM and its variants. While these results provide convergence guarantees of SAM on smooth functions, they are somewhat detached to the practical implementations of SAM and its variants. They either (1)~assume \emph{decaying} or \emph{sufficiently small} perturbation size $\rho$~\citep{zhuang2022surrogate,mi2022make,jiangadaptive, sun2023adasam}, whereas $\rho$ is set to constant in practice; or (2)~assume ascent steps \emph{without gradient normalization}~\citep{andriushchenko2022towards}, whereas practical implementations of SAM use normalization when calculating the ascent step $\vy_t$. 

\begin{table*}
\scriptsize
\renewcommand*{\arraystretch}{1.3}
  \centering
  \caption{Convergence of SAM with constant perturbation size $\rho$ after $T$ steps. \textbf{C1}, \textbf{C2}, \textbf{C3}, and \textbf{C4} indicate $\beta$-smoothness, $\mu$-strong convexity, convexity, and Lipschitzness, respectively. \textbf{C5} indicates the bounded variance of the gradient oracle. See Section~\ref{sec:classes} for definitions of \textbf{C1}--\textbf{C5}.}
  \begin{tabular}{@{}cccc@{}}
    \toprule
    Optimizer & Function Class & Convergence Upper/Lower Bounds & Reference \\
    \midrule
    Deterministic SAM & \textbf{C1},\textbf{C2} & $\min_{t \in \{0,\dots,T\}} f(\vx_t)-f^* = \tilde{\mathcal{O}}\left(\exp(-T)+\frac{1}{T^2}\right)$ & Theorem~\ref{thm:fb_sc_ub}  \\
    Deterministic SAM & \textbf{C1},\textbf{C2} & $\min_{t \in \{0,\dots,T\}} f(\vx_t)-f^* = \Omega\left(\frac{1}{T^2}\right)$ & Theorem~\ref{thm:fb_sc_lb}  \\
    Deterministic SAM & \textbf{C1},\textbf{C3} & $\frac{1}{T} \sum_{t=0}^{T-1} \|\nabla f(\vx_t)\|^2 = \mathcal{O}\left(\frac{1}{T} + \frac{1}{\sqrt{T}}\right)$ & Theorem~\ref{thm:fb_cvx}  \\
    Deterministic SAM & \textbf{C1} & $\frac{1}{T} \sum_{t=0}^{T-1}\|\nabla f(\vx_t)\|^2 \leq \mathcal{O}\left(\frac{1}{T}\right) + \beta^2\rho^2$ & Theorem~\ref{thm:fb_smth}  \\
    Stochastic SAM & \textbf{C1},\textbf{C2},\textbf{C5} & $\E f(\vx_T)-f^* \leq \Tilde{\mathcal{O}}\left(\exp\left(-T\right) + \frac{[\sigma^2 - \beta^2 \rho^2]_{+}}{T}\right) + \frac{2\beta^2\rho^2}{\mu}$ & Theorem~\ref{thm:sto_sc}  \\
    Stochastic SAM & \textbf{C1},\textbf{C3},\textbf{C5} & $\frac{1}{T}\sum_{t=0}^{T-1}\E\|\nabla f(\vx_t)\|^2 \leq \mathcal{O}\left(\frac{1}{T} + \frac{\sqrt{[\sigma^2-\beta^2\rho^2]_{+}}}{\sqrt{T}}\right) + 4\beta^2\rho^2$ & Theorem~\ref{thm:sto_cvx}  \\
    Stochastic $n$-SAM & \textbf{C1},\textbf{C5} & $\frac{1}{T}\sum_{t=0}^{T-1} \E \|\nabla f(\vx_t)\|^2 \leq \mathcal{O}\left(\frac{1}{T} + \frac{1}{\sqrt{T}}\right)  + \beta^2 \rho^2$ & Theorem~\ref{thm:sto_smth_nsam}  \\
    Stochastic $m$-SAM & \textbf{C1},\textbf{C4},\textbf{C5} & $\frac{1}{T}\sum_{t=0}^{T-1} \E [(\|\nabla f(\vx_t)\| - \beta\rho)^2 ] \leq  \mathcal{O}\left(\frac{1}{\sqrt{T}}\right) + 5\beta^2\rho^2$ & Theorem~\ref{thm:sto_smth_msam} \\
    \bottomrule
  \end{tabular}
  \label{table:convergenceoverview}
\end{table*}

\vspace*{-5pt}
\subsection{Summary of Our Contributions}
\vspace*{-5pt}
To address the aforementioned limitations of existing results, we investigate convergence properties of SAM using \emph{gradient normalization} in ascent steps and arbitrary \emph{constant perturbation size} $\rho$. 
We note that to the best of our knowledge, the convergence analysis of SAM have not been carried out under the two practical implementation choices.  
Our analyses mainly focus on smooth functions, with different levels of convexity assumptions ranging from strong convexity to nonconvexity.
We summarize our contributions below; 
a summary of our convergence results (upper and lower bounds) can also be found in Table~\ref{table:convergenceoverview}.
\begin{itemize}[leftmargin=15pt,itemsep=1.0pt]
    \item For deterministic SAM, we prove convergence to global minima of smooth strongly convex functions, and show the tightness of convergence rate in terms of $T$. Furthermore, we establish the convergence of SAM to stationary points of smooth convex functions. For smooth nonconvex functions, we prove that SAM guarantees convergence to stationary points up to an additive factor $\mathcal O(\rho^2)$. We provide a worst-case example that always suffers a matching squared gradient norm $\Omega(\rho^2)$, showing that the additive factor is unavoidable and tight in terms of $\rho$.

    \item For stochastic settings, we analyze two versions of stochastic SAM ($n$-SAM and $m$-SAM) on smooth strongly convex, smooth convex, and smooth (Lipschitz) nonconvex functions. We provide convergence guarantees to global minima or stationary points up to additive factors $\mathcal O(\rho^2)$. In case of $m$-SAM, we demonstrate that these factors are inevitable and tight in terms of $\rho$, by providing worst-case examples where SAM fails to converge properly. 
\end{itemize}

\vspace*{-5pt}
\subsection{Related Works: Convergence Analysis of SAM}
\vspace*{-5pt}
% \paragraph{Sharpness and Generalization}
% \citet{hochreiter1994simplifying} propose an algorithm for finding low-complexity models with high generalization performance by finding flat minima.
% \citet{keskar2016large} show that the poor generalization performance of large-batch training is due to the fact that large-batch training tends to converge to sharp minima.
% \citet{jiang2019fantastic} empirically show correlation between sharpness and generalization performance.

% Motivated by prior studies, \citet{foret2020sharpness} propose Sharpness-Aware Minimization (SAM) that focuses on finding flat minima by aiming to (approximately) minimize $f^{\mathrm{SAM}}(\vx) =\max_{\|\veps\| \leq \rho} f(\vx+\veps)$ instead of $f$.
% Empirical results~\citep{foret2020sharpness, kaddourflat, bahri2021sharpness, chen2021vision} show outstanding generalization performance of SAM on various tasks and models, including recent state-of-the-art models such as ViTs, MLP-Mixers, and T5.

% \paragraph{Convergence Analysis.}

Recent results~\citep{zhuang2022surrogate,mi2022make,jiangadaptive, sun2023adasam} prove convergence of stochastic SAM and its variants to stationary points for smooth nonconvex functions. However, these convergence analyses are limited to only \emph{smooth nonconvex} functions and do not address convex functions. Also, as already pointed out, all the proofs require one crucial assumption detached from practice: \emph{decaying} (or \emph{sufficiently small}) perturbation size $\rho$. 
If $\rho$ becomes sufficiently small, then the difference between $f^{\mathrm{SAM}}(\vx) = \max_{\|\veps\| \leq \rho} f(\vx + \veps)$ and $f(\vx)$ becomes negligible, which means that they undergo approximately the same updates. Due to this reason, especially in later iterates, SAM with negligible perturbation size would become almost identical to gradient descent (GD), resulting in a significant difference in the convergence behavior compared to constant $\rho$.

\begin{wrapfigure}{r}{0.43\textwidth}
\vspace{-0.15in}
\centering
\includegraphics[width=1.0\linewidth]{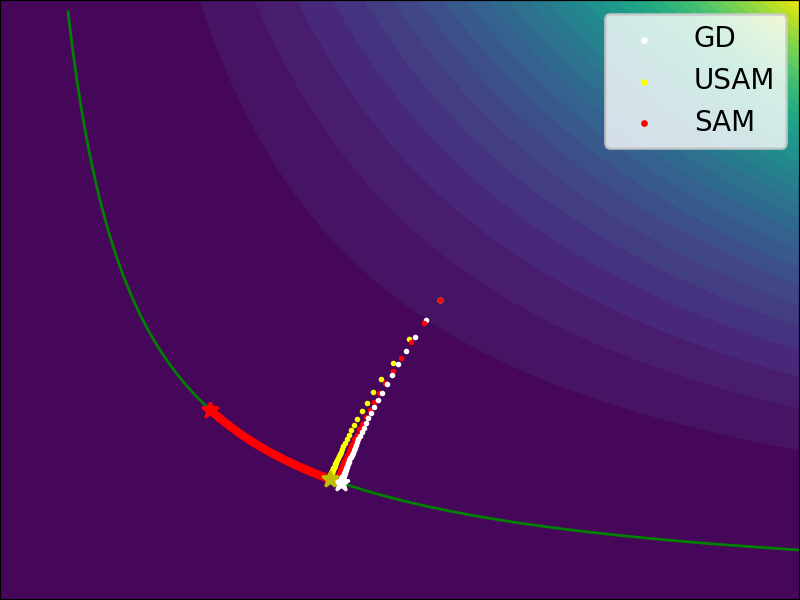}

\caption{Trajectory plot for a function $f(x,y) = (xy-1)^2$. The green line indicates global minima of $f$. SAM and USAM indicates SAM with and without gradient normalization, respectively, and GD indicates gradient descent. When USAM approaches the green line, it converges to a nearby global minimum, which is similar to GD. In contrast, SAM travels along the green line, towards the flattest minimum $(1,1)$.}
\vspace{-0.27in}
\label{fig:sam_vs_usam}
\end{wrapfigure}

As for existing proofs that do not require decaying $\rho$, \citet{andriushchenko2022towards} prove convergence guarantees for \emph{deterministic} SAM with \emph{constant} $\rho$ on smooth nonconvex, smooth Polyak-Łojasiewicz, smooth convex, and smooth strongly convex functions.
Moreover, the authors prove convergence guarantees of \emph{stochastic} SAM, this time with \emph{decaying} $\rho$, on smooth nonconvex and smooth Polyak-Łojasiewicz functions. However, their analyses are also crucially different from the practical implementations of SAM because their proofs are for a variant of SAM \emph{without gradient normalization} in ascent steps, with updates in the form of $\vx_{t+1} = \vx_t - \eta \nabla f(\vx_t + \rho \nabla f(\vx_t))$. This form can be considered as vanilla SAM with an ``effective'' perturbation size $\rho\|\nabla f(\vx_t)\|$, which indicates that even with constant $\rho$, the effective perturbation size will become smaller and smaller as the algorithm converges towards a stationary point. As in the case of decaying $\rho$ discussed above, this can make a huge difference in the convergence behavior. 

% \begin{figure}
%     \centering
%     \includegraphics[width=0.32\textwidth]{figures/sam_vs_usam.png}
%     \caption{Trajectory plot for a function $f(x,y) = (xy-1)^2$. The green line indicates global minima of $f$. SAM and USAM indicates SAM with and without gradient normalization, respectively, and GD indicates gradient descent. When USAM approaches the green line, it converges to a nearby global minimum, which is similar to GD. In contrast, vanilla SAM travels along the green line, towards the flattest minimum $(1,1)$.}
%     \label{fig:sam_vs_usam}
% \end{figure}

Figure~\ref{fig:sam_vs_usam} illustrates a comparison between SAM with and without gradient normalization, highlighting the disparity between their convergence points. As we predicted above, it is evident that they indeed reach entirely distinct global minima, exhibiting different convergence characteristics.

Discussions on the related works on the relationship between sharpness and generalization, as well as other theoretical properties of SAM, are provided in Appendix~\ref{sec:related_works_appendix}.

\vspace*{-5pt}
\subsection{Notation}
\vspace*{-5pt}

The euclidean norm of a vector $\vv$ is denoted as $\|\vv\|$. We let $\sR_+^* \triangleq \{x \in \sR \mid x > 0\}$, and use $[\cdot]_{+}$ to define $\max\{\cdot,0\}$. We use $\mathcal{O}(\cdot)$ and $\Omega(\cdot)$ to hide numerical constants in our upper and lower bounds, respectively. We use $\tilde{\mathcal{O}}(\cdot)$ to additionally hide logarithmic factors. When discussing rates, we sometimes use these symbols to hide everything except $T$, the number of SAM iterations. When discussing additive factors that arise in our convergence bounds, we use these symbols to hide all other variables except the perturbation size $\rho$. For an objective function $f$ and initialization $\vx_0$ of SAM, we define $f^* \triangleq \inf_\vx f(\vx)$ and $\Delta \triangleq f(\vx_0)-f^*$.

\vspace*{-5pt}
\section{Sharpness-Aware Minimization: Preliminaries and Intuitions} \label{sec:preliminary}
\vspace*{-5pt}
Before presenting the main results, we first introduce deterministic and stochastic SAM, and present definitions of function classes considered in this paper.
We next introduce \emph{virtual gradient map} and \emph{virtual loss} that shed light on our intuitive understanding of SAM's convergence behavior.

\vspace*{-5pt}
\subsection{Sharpness-Aware Minimization (SAM)}
\vspace*{-5pt}
We focus on minimizing a function $f: \sR^d \rightarrow\sR$, where the optimization variable is represented by $\vx \in \sR^d$, using the Sharpness-Aware Minimization (SAM) optimizer.
Instead of minimizing the vanilla objective function $f(\vx)$, SAM~\citep{foret2020sharpness} aims to minimize the \textit{SAM objective} $f^{\mathrm {SAM}}(\vx) $, where
\begin{equation}
\label{eq:minimax}
    \min_{\vx} f^{\mathrm {SAM}}(\vx) = \min_{\vx}\max_{\|\veps\| \leq \rho} f(\vx+\veps).
\end{equation}
For $\rho \in \sR_+^*$, the SAM loss $f^{\mathrm {SAM}}(\vx)$ outputs the worst function value in a $\rho$-ball $\{\vw \in \sR^d \mid \|\vx-\vw\| \leq \rho\}$ around $\vx$.
Assuming ``sufficiently small'' $\rho$, the inner maximization of \eqref{eq:minimax} can be solved by Taylor-approximating the objective:
\begin{align*}
    \arg\max_{\| \veps\| \leq \rho} f(\vx + \veps) \approx \arg\max_{\| \veps\| \leq \rho} f(\vx) + \langle\veps, \nabla f(\vx)\rangle = \rho\tfrac{\nabla f(\vx)}{\|\nabla f(\vx)\|} \triangleq \hat{\veps}(\vx).
\end{align*}
Using this approximate solution $\hat{\veps}(\vx)$, we can define the \emph{approximate SAM objective} $\hat f^{\mathrm {SAM}}$ as $\hat f^{\mathrm {SAM}}(\vx) \triangleq f(\vx+\hat{\veps}(\vx))$.
In order to run gradient descent (GD) on $\hat f^{\mathrm {SAM}}(\vx)$, one needs to calculate its gradient; however, from the definition of $\hat{\veps}(\vx)$, we can realize that $\nabla \hat f^{\mathrm {SAM}}(\vx)$ has terms that involve the Hessian of $f$. Here, SAM makes another approximation, by ignoring the Hessian term:
\begin{equation*}
%\label{eq:approxsamobjgrad}
    \nabla \hat f^{\mathrm {SAM}}(\vx) \approx \nabla f(\vx)\vert_{\vx+\hat{\veps}(\vx)}.
\end{equation*}
From these approximations, one iteration of SAM is defined as a set of two-step update equations:
\begin{equation}
\label{eq:sam_update}
    \begin{cases}
    \vy_t = \vx_t + \rho \frac{\nabla f(\vx_t)}{\|\nabla f(\vx_t)\|},\\
    \vx_{t+1} = \vx_t - \eta \nabla f(\vy_t).
    \end{cases}
\end{equation}
As seen in \eqref{eq:sam_update}, we use $\vx_t$ to denote the iterate of SAM at the $t$-th step. We use $T$ to denote the number of SAM iterations.
We refer to the hyperparameter $\rho \in \sR_+^*$ as the \emph{perturbation size} and $\eta \in \sR_+^*$ as the \emph{step size}. 
Note that in \eqref{eq:sam_update}, the perturbation size $\rho$ is a time-invariant constant; in practice, it is common to fix $\rho$ as a constant throughout training \citep{foret2020sharpness,chen2021vision,bahri2021sharpness}.

According to the SAM update in \eqref{eq:sam_update}, the update cannot be defined when $\|\nabla f(\vx)\| = 0$. In practice, we add a small numerical constant (e.g., $10^{-12}$) to the denominator in order to prevent numerical instability. In this paper, we ignore this constant and treat $\frac{\nabla f(\vx_t)}{\|\nabla f(\vx_t)\|}$ as $0$ whenever $\|\nabla f(\vx_t)\|=0$.
\vspace*{-5pt}
\subsection{SAM under Stochastic Settings}
\label{sec:sto_sam_def}
\vspace*{-5pt}
To analyze stochastic SAM, we suppose that the objective is given as $f(\vx) = \E_{\xi} [l(\vx; \xi)]$, where $\xi$ is a stochastic parameter (e.g., data sample) and $l(\vx;\xi)$ indicates the loss at point $\vx$ with a random sample $\xi$. Based on the SAM update in \eqref{eq:sam_update}, we can define stochastic SAM under this setting:
\begin{equation}
\label{eq:sto_sam_update}
    \begin{cases}
    \vy_t = \vx_t + \rho \frac{g (\vx_t)}{\|g (\vx_t)\|},\\
    \vx_{t+1} = \vx_t - \eta \tilde{g} (\vy_t).
    \end{cases}
\end{equation}
We define $g(\vx) = \nabla_{\vx} l(\vx ; \xi)$ and $\tilde{g}(\vx) = \nabla_{\vx} l(\vx ; \tilde{\xi})$. Here, $\xi$ and $\tilde{\xi}$ are stochastic parameters, queried from any distribution(s) which satisfies $\E_{\tilde{\xi}} l(\vx ; \tilde{\xi}) = \E_{\xi} l(\vx ; \xi) = f(\vx)$.

There are two popular variants of stochastic SAM, introduced in \citet{andriushchenko2022towards}. Stochastic $n$-SAM algorithm refers to the update equation \eqref{eq:sto_sam_update} when $\xi$ and $\tilde{\xi}$ are independent. In contrast, practical SAM algorithm in \citet{foret2020sharpness} employs stochastic $m$-SAM algorithm, which follows the update equation \eqref{eq:sto_sam_update} where $\xi$ is equal to $\tilde{\xi}$. We will consider both versions in our theorems.

\vspace*{-5pt}
\subsection{Function Classes}
\label{sec:classes}
\vspace*{-5pt}
We state definitions and assumptions of function classes of interest, which are fairly standard.% in optimization literature. 

%We first start with convex functions.
\begin{definition} [Convexity]
\label{def:cvx}
A function $f : \sR^d \rightarrow \sR$ is convex if, for any $\vx,\vy \in \sR^d$ and $\lambda \in [0,1]$, it satisfies $f(\lambda\vx+(1-\lambda)\vy) \leq \lambda f(\vx) + (1-\lambda) f(\vy)$.
    %, \lambda \in [0,1]$.
%\end{align*}
%for all $\vx,\vy \in \sR^d$ and $\lambda \in [0,1]$. 
\end{definition}

%We next define a ``stronger'' notion of convexity.
\begin{definition} [$\mu$-Strong Convexity]
\label{def:sc}
A differentiable function $f : \sR^d \rightarrow \sR$ is $\mu$-strongly convex if there exists $\mu>0$ such that
%\begin{align*}
    $f(\vx)\!\geq\! f(\vy)\! +\! \langle\nabla f(\vy), \vx-\vy \rangle + \frac{\mu}{2}\|\vx-\vy\|^2$,
%\end{align*}
for all $\vx,\vy \in \sR^d$.
\end{definition}

%We next define Lipschitz continuity or Lipschitzness.
\begin{definition} [$L$-Lipschitz Continuity]
\label{def:lipschitz}
A function $f : \sR^d \rightarrow \sR$ is $L$-Lipschitz continuous if there exists $L \geq 0$ such that
%\begin{align*}
    $\|f(\vx) - f(\vy)\| \leq L\|\vx-\vy\|$, 
    %\forall \vx,\vy \in \sR^d.
%\end{align*}
for all $\vx,\vy \in \sR^d$. 
%For differentiable $f$, Lipschitz continuity implies bounded gradient: $\|\nabla f(\vx)\| \leq L$ for all $\vx \in \sR^d$.
\end{definition}

%Next we define smoothness, which is equivalent to gradient being Lipschitz continuous.
\begin{definition} [$\beta$-Smoothness]
\label{def:smth}
A differentiable function $f : \sR^d \rightarrow \sR$ is $\beta$-smooth if there exists $\beta \geq 0$ such that
%\begin{align*}
    $\|\nabla f(\vx)-\nabla f(\vy)\| \leq \beta\|\vx-\vy\|$, 
    %\forall \vx,\vy \in \sR^d.
%\end{align*}
for all $\vx,\vy \in \sR^d$.
%, which is equivalent to 
%\begin{equation*}
%    f(\vx) \leq f(\vy) + \langle\nabla f(\vy), \vx-\vy\rangle + \frac{\beta}{2}\|\vx-\vy\|^2. %, \forall \vx,\vy \in \sR^d.
%\end{equation*}
\end{definition}

Next we define an assumption considered in the analysis of stochastic SAM~\eqref{eq:sto_sam_update}.
\begin{assumption} [Bounded Variance]
\label{assumption:variance}
The gradient oracle of a differentiable function $f : \sR^d \rightarrow \sR$ has bounded variance if there exists $\sigma \geq 0$ such that %for any $\xi$, $\tilde{\xi}$,
\begin{equation*}
    \E_{\xi} \|\nabla f(\vx)-\nabla l(\vx ; \xi)\|^2 \leq \sigma^2,\quad\E_{\tilde{\xi}} \|\nabla f(\vx)-\nabla l(\vx ; \tilde{\xi})\|^2 \leq \sigma^2,\quad\forall \vx \in \sR^d.
\end{equation*}
%holds for all $\vx \in \sR^d$ and $t \in [T]$.
\end{assumption}

\vspace*{-5pt}
\subsection{SAM as GD on Virtual Loss}
\vspace*{-5pt}
Convergence analysis of SAM is challenging due to the presence of normalized ascent steps.
%normalizing factor in the ascent step and time-invariant perturbation size $\rho$. 
In order to provide intuitive explanations of SAM's convergence properties, we develop tools referred to as \emph{virtual gradient map} and \emph{virtual loss} in this section. These tools can provide useful intuitions when we discuss our main results.
%To resolve these issues, by defining virtual loss $J_f(\vx)$, we can obtain intuitions on the convergence aspects of SAM.

In order to define the new tools, we first need to define Clarke subdifferential~\citep{clarke2008nonsmooth}, whose definition below is from Theorem~6.2.5 of \citet{borwein2010convex}.
\begin{definition} [Clarke Subdifferential]
\label{def:clarke}
Suppose that a function $f: \sR^d \to \sR$ is locally Lipschitz around a point $\vx$ and differentiable on $\sR^d \setminus W$ where $W$ is a set of measure zero. Then, the Clarke subdifferential of $f$ at $\vx$ is
%\begin{equation*}
    $\partial f(\vx) \triangleq \mathrm{cvxhull}\left \{ \lim_{t \to \infty} \nabla f(\vx_t) \mid \vx_t \to \vx, \vx_t \notin W \right \}$,
%\end{equation*}
where $\mathrm{cvxhull}$ denotes the convex hull of a set.
\end{definition}
Clarke subdifferential $\partial f(\vx)$ is a convex hull of all possible limits of $\nabla f(\vx_t)$ as $\vx_t$ approaches $\vx$. It can be thought of as an extension of gradient to a nonsmooth function. It is also known that for a convex function $f$, the Clarke subdifferential of $f$ is equal to the subgradient of $f$.

Using the definition of Clarke differential, we now define virtual gradient map and virtual loss of $f$, which can be employed for understanding the convergence of SAM.

\begin{definition} [Virtual Gradient Map/Loss]
\label{def:virtual}
For a differentiable function $f: \sR^d \to \sR$, we define the \emph{virtual gradient map} $G_f:  \sR^d \to \sR^d $ of $f$ to be 
%\begin{equation*}
    $G_f(\vx) \triangleq \nabla f\big(\vx+\rho\tfrac{\nabla f(\vx)}{\|\nabla f(\vx)\|}\big)$.
%    \nabla f\bigg(\vx+\rho\frac{\nabla f(\vx)}{\|\nabla f(\vx)\|}\bigg) = \nabla J(\vx),
%\end{equation*}
Additionally, if there exists a function $J_f: \sR^d \to \sR$ whose Clarke subdifferential is well-defined and $\partial J_f (\vx) \ni G_f(\vx)$ for all $\vx$, then we call $J_f$ a \emph{virtual loss} of $f$.
%is a gradient map of a scalar-valued function, and $\nabla J(\vx)$ is a gradient map of a scalar-valued function, then we call $J(\vx)$ the virtual loss of $f(\vx)$.
\end{definition}

%For instance, consider $f(x) = \frac{1}{2}x^2$, whose virtual gradient map is given by
%\begin{equation}
%\label{gmap}
%    G_f(x) = 
%    \begin{cases}
%    x + \rho & \text{ if } x > 0,\\
%    0 & \text{ if } x = 0,\\
%    x - \rho & \text{ if } x < 0,
%    \end{cases}
%\end{equation}
%hence being discontinuous at $x = 0$. For this example, there exists a virtual loss function $J_f(x) = %\frac{1}{2}(x+\rho \mathrm{sign}(x))^2$ that satisfies $\partial J_f (x) \ni G_f(x)$. Specifically, at %$x = 0$, we have $\partial J_f(0) = [-\rho, \rho]\ni 0 = G_f(0)$.
%We also note that Definition~\ref{def:virtual} naturally extends to Lipschitz convex functions, by %choosing $G_f(\vx)$ to be any member of subgradient of $f$ at $\vx+\rho\frac{\nabla f(\vx)}{\|\nabla %f(\vx)\|}$.
%Notice that the virtual loss is \emph{different} from the SAM objective~\eqref{eq:samobj} or the %approximate SAM objective~\eqref{eq:approxsamobj} defined in Section~\ref{sec:preliminary}, because %the virtual gradient map $G_f$ is only an approximation of the gradient of approximate SAM %objective~\eqref{eq:approxsamobj}, as seen in \eqref{eq:approxsamobjgrad}.

If a virtual loss $J_f$ is well-defined for $f$, the update of SAM~\eqref{eq:sam_update} on $f$ is equivalent to a (sub)gradient descent update on the virtual loss $J_f(\vx)$, which means,
%\begin{equation*}
    $\vx_{t+1} = \vx_t - \eta G_f(\vx_t)$.
%\end{equation*}
The reason why we have to use Clarke subdifferential to define the virtual loss is because even for a differentiable and smooth function $f$, there are cases where the virtual gradient map $G_f$ is discontinuous and the virtual loss $J_f$ (if exists) is nonsmooth; see the discussion below Theorem~\ref{thm:fb_sc_ub}.

Note that if the differentiable function $f$ is one-dimensional ($d = 1$), the virtual loss $J_f(x)$ is always \emph{well-defined} because it can be obtained by simply (Lebesgue) integrating $G_f(x)$. However, in case of multi-dimensional functions ($d > 1$), there is no such guarantee, % for the existence of a well-defined virtual loss function $J_f$
although $G_f$ is always well-defined.
We emphasize that the virtual gradient map $G_f$ and virtual loss $J_f$ are mainly used for a better intuitive understanding of our (non-)convergence results. In formal proofs, we use them for analysis only if $J_f$ is well-defined.

Lastly, we note that \citet{bartlett2022dynamics} also employ a similar idea of virtual loss. In case of convex quadratic objective function $f(\vx)$, the authors define $\vu_t = (-1)^{t}\vx_t$ and formulate a (different) virtual loss $\tilde J_f(\vu)$ such that a SAM update on $f(\vx_t)$ is equivalent to a GD update on $\tilde J_f(\vu_t)$.

\vspace*{-5pt}
\section{Convergence Analysis Under Deterministic Settings}
\vspace*{-5pt}
\label{sec:fb_results}
In this section, we present the main results on the (non-)convergence of deterministic SAM with \textit{constant} perturbation size $\rho$ and gradient normalization. We study four function classes: smooth strongly convex, smooth convex, smooth nonconvex, and nonsmooth Lipschitz convex functions.
%Throughout the section, we use $f^* \triangleq \inf_\vx f(\vx)$ and $\Delta \triangleq f(\vx_0)-f^*$.

\vspace*{-5pt}
\subsection{Smooth and Strongly Convex Functions}
\vspace*{-5pt}
For smooth strongly convex functions, we prove a global convergence guarantee for the best iterate.%of SAM.
\begin{restatable}{theorem}{thmfbscub}
\label{thm:fb_sc_ub}
Consider a $\beta$-smooth and $\mu$-strongly convex function $f$. If we run deterministic SAM starting at $\vx_0$ with any perturbation size $\rho > 0$ and step size $\eta = \min \left \{\frac{1}{\mu T} \max \left \{1, \log \left (\frac{\mu^5 \Delta T^2}{\beta^6 \rho^2}\right) \right \}, \frac{1}{2\beta} \right \}$ to minimize $f$, we have
% \begin{equation*}
% \eta = \min \left \{\frac{1}{\mu T} \max \left \{1, \log \left (\frac{\mu^5 \Delta T^2}{\beta^6 \rho^2}\right) \right \}, \frac{1}{2\beta} \right \}
% \end{equation*}
% to minimize $f$, we have
\vspace*{-3pt}
\begin{equation*}
    \min_{t \in \{0,\dots, T\}} f(\vx_t) - f^*
= \tilde{\mathcal{O}}\left (\exp\left (- \frac{\mu T}{2\beta} \right) \Delta + \frac{\beta^6 \rho^2}{\mu^5 T^2}\right ).
\end{equation*}
\end{restatable}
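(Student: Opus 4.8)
The plan is to treat one SAM step as an inexact gradient step on $f$ and run a standard descent-lemma analysis, but to carefully retain a first-order ``gain'' term coming from strong convexity, which is what ultimately produces the $1/T^2$ (rather than $1/T$) rate. Write $\vg_t = \nabla f(\vx_t)$, $G_t = \|\vg_t\|$, $a_t = f(\vx_t)-f^*$, and recall $\vy_t = \vx_t + \rho\vg_t/G_t$. The first step is to establish the clean, dimension-free inequality
\begin{equation*}
\langle\nabla f(\vx_t),\nabla f(\vy_t)\rangle \ge G_t^2 + \mu\rho\,G_t .
\end{equation*}
This follows immediately from monotonicity of $\nabla f$: applying $\langle\nabla f(\vy_t)-\nabla f(\vx_t),\vy_t-\vx_t\rangle\ge\mu\|\vy_t-\vx_t\|^2=\mu\rho^2$ with $\vy_t-\vx_t = \rho\vg_t/G_t$ and rescaling by $G_t/\rho$ gives $\langle\nabla f(\vy_t)-\nabla f(\vx_t),\vg_t\rangle\ge\mu\rho G_t$. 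Together with $\|\nabla f(\vy_t)\|\le G_t+\beta\rho$ (from $\beta$-smoothness and $\|\vy_t-\vx_t\|=\rho$), this is essentially all the problem structure I need.

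Next I would plug the update $\vx_{t+1}=\vx_t-\eta\nabla f(\vy_t)$ into the $\beta$-smoothness descent lemma and substitute the two bounds above. After collecting terms, using $\eta\le\frac{1}{2\beta}$ (and, in the main regime, $\eta\lesssim\mu/\beta^2$ so that the cross term $\beta^2\eta^2\rho G_t$ is dominated by the gain term) and the PL inequality $G_t^2\ge 2\mu a_t$, this yields a one-step inequality of the form
\begin{equation*}
a_{t+1}\le \Bigl(1-\tfrac{3\eta\mu}{2}\Bigr)a_t \;-\; c_1\,\eta\mu\rho\,G_t \;+\; c_2\,\beta^3\eta^2\rho^2 .
\end{equation*}
The crucial move is to \emph{not} discard the negative term $-c_1\eta\mu\rho G_t$: bounding $G_t\ge\sqrt{2\mu a_t}$ turns it into a penalty $-c_1'\eta\mu^{3/2}\rho\sqrt{a_t}$, giving
\begin{equation*}
a_{t+1}\le \Bigl(1-\tfrac{3\eta\mu}{2}\Bigr)a_t - c_1'\,\eta\mu^{3/2}\rho\sqrt{a_t} + c_2\,\beta^3\eta^2\rho^2 .
\end{equation*}

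I would then solve this recursion by a two-regime argument. Let $a^\flat \triangleq C\beta^6\eta^2\rho^2/\mu^3$ be the level at which the negative $\sqrt{a_t}$ term balances the positive constant $c_2\beta^3\eta^2\rho^2$. Whenever $a_t\ge a^\flat$ the constant is absorbed and we get pure geometric decay $a_{t+1}\le(1-\frac{3\eta\mu}{2})a_t$; whenever $a_t<a^\flat$ the iterate cannot escape an $\mathcal{O}(a^\flat)$-ball, since the per-step increment is itself $\mathcal{O}(a^\flat)$. Consequently $\min_{t\le T} a_t \le \exp(-\frac{3\eta\mu}{2}T)\,\Delta + \mathcal{O}(\beta^6\eta^2\rho^2/\mu^3)$, which matches the $\min$ form of the statement. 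Substituting the prescribed $\eta$ finishes the proof: the cap $\min\{\cdot,\frac{1}{2\beta}\}$ controls the exponent, while the factor $\max\{1,\log(\mu^5\Delta T^2/\beta^6\rho^2)\}$ is engineered so that $\exp(-\frac{3\eta\mu}{2}T)\Delta$ falls to the same $\tilde{\mathcal{O}}(\beta^6\rho^2/\mu^5 T^2)$ order as the floor $\beta^6\eta^2\rho^2/\mu^3$ (using $\eta^2\sim\log^2/(\mu^2T^2)$).

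The main obstacle is obtaining the $\eta^2\rho^2$ floor rather than an $\eta\rho^2$ floor: a naive analysis that drops the strong-convexity gain $-\eta\mu\rho G_t$ leaves only the additive error $\beta^3\eta^2\rho^2$ against contraction $\eta\mu$, producing a steady state $\propto\eta\rho^2$ and hence merely an $\mathcal{O}(1/T)$ rate. Retaining the gain and carrying it as a $\sqrt{a_t}$-penalty is exactly what upgrades the rate to $1/T^2$, and pinning down its interaction with the error constant (the balance defining $a^\flat$) is the delicate part. A secondary point is step-size bookkeeping: in the large-$\eta$ (cap) regime, i.e.\ small $T$, the sign condition $\eta\lesssim\mu/\beta^2$ on the cross term may fail, so one must verify separately that there the exponential term already dominates the target bound — which holds because the $\log$-calibrated $\eta$ keeps the two terms matched at the cap.
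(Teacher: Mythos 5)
Your proposal follows essentially the same route as the paper's proof: your strong-monotonicity bound $\langle\nabla f(\vx_t),\nabla f(\vy_t)-\nabla f(\vx_t)\rangle\ge\mu\rho\|\nabla f(\vx_t)\|$ is exactly Lemma~\ref{lma:fb_cvx_prop1}, your one-step inequality with the retained first-order gain plays the role of Lemma~\ref{lma:fb_smth_cvx_prop1}, your dichotomy at the level $a^\flat\sim\beta^6\eta^2\rho^2/\mu^3$ is the paper's Case~1/Case~2 split (stated there as a gradient threshold $\|\nabla f(\vx_t)\|\ge\eta\beta^3\rho/\mu$, equivalent via the PL inequality), and the log-calibrated step-size substitution is the same. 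Your diagnosis that retaining the $-\eta\mu\rho\|\nabla f(\vx_t)\|$ gain is what upgrades $1/T$ to $1/T^2$ is precisely the paper's mechanism.

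There is, however, a genuine gap in your step-size bookkeeping, and it originates in bounding $\|\nabla f(\vy_t)\|\le\|\nabla f(\vx_t)\|+\beta\rho$ and squaring: this replaces the inner product $\langle\nabla f(\vx_t),\nabla f(\vy_t)-\nabla f(\vx_t)\rangle$ inside the quadratic term by its Cauchy--Schwarz majorant, which creates the cross term $+\eta^2\beta^2\rho\|\nabla f(\vx_t)\|$ and hence the sign condition $\eta\lesssim\mu/\beta^2$. You assert this condition can fail only at the cap $\eta=\frac{1}{2\beta}$; that is false. With the prescribed $\eta=\frac{1}{\mu T}\max\{1,\log(\cdot)\}$, the sign condition fails whenever $\eta>\mu/\beta^2$, i.e.\ (for $\beta\ge2\mu$) whenever $T\lesssim\frac{\beta^2}{\mu^2}\max\{1,\log(\cdot)\}$, whereas the cap is active only for $T\lesssim\frac{2\beta}{\mu}\max\{1,\log(\cdot)\}$. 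In the intermediate range between these two thresholds your one-step recursion cannot be derived (the net coefficient of $\rho\|\nabla f(\vx_t)\|$ is positive), and your cap-regime patch does not apply since $\eta$ is not at the cap there. The hole is repairable with tools you already have: absorb the wrong-signed cross term via Young's inequality into $-\frac{\eta}{4}\|\nabla f(\vx_t)\|^2$, which degrades the floor to $\mathcal{O}(\eta\beta^3\rho^2/\mu)=\mathcal{O}\bigl(\beta^3\rho^2\max\{1,\log(\cdot)\}/(\mu^2T)\bigr)$; this is still $\tilde{\mathcal{O}}(\beta^6\rho^2/(\mu^5T^2))$ exactly because $T\lesssim\beta^2\max\{1,\log(\cdot)\}/\mu^2$ in that range. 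The cleaner fix is the paper's: when expanding $\|\nabla f(\vy_t)\|^2$, keep the inner product intact; its total coefficient is $-\eta(1-\eta\beta)\le-\frac{\eta}{2}$ for $\eta\le\frac{1}{2\beta}$, so together with $\|\nabla f(\vy_t)-\nabla f(\vx_t)\|\le\beta\rho$ one obtains
\begin{equation*}
f(\vx_{t+1})\le f(\vx_t)-\frac{\eta}{2}\|\nabla f(\vx_t)\|^2-\frac{\eta\mu\rho}{2}\|\nabla f(\vx_t)\|+\frac{\eta^2\beta^3\rho^2}{2}
\end{equation*}
with no sign condition at all, after which your two-regime argument goes through verbatim for every $\eta\le\frac{1}{2\beta}$.
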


\vspace*{-3pt}
The proof of Theorem~\ref{thm:fb_sc_ub} can be found in Appendix~\ref{sec:proof_fb_sc_ub}. 
As for relevent existing studies, Theorem~1 of \citet{dai2023crucial} can be adapted to establish the convergence guarantee $f(\vx_T) - f^* = \tilde{\mathcal{O}}\left(\frac{1}{T}\right)$, by selecting the step size $\eta = \min \left \{\frac{1}{\mu T} \max \left \{1, \log \left (\frac{\mu^2 \Delta T}{\beta^3 \rho^2}\right) \right \}, \frac{1}{\beta} \right \}$ and employing a similar proof technique as ours. We highlight that Theorem~\ref{thm:fb_sc_ub} achieves a faster convergence rate compared to the concurrent bound by \citet{dai2023crucial}.
Moreover, Theorem~11 of \citet{andriushchenko2022towards} proves the convergence guarantee for this function class: $\|\vx_T-\vx^*\|^2 = \mathcal{O}\left(\exp(-T)\right)$, but assuming SAM \emph{without gradient normalization}, and the boundedness of $\rho$.
Here, we get a slower convergence rate of $\tilde{\mathcal{O}} \left(\exp(-T) + \frac{1}{T^2}\right)$, but with any $\rho > 0$ and with normalization.

%Notice that this convergence rate is much slower than the exponential convergence rate of GD on smooth and strongly convex functions: $\tilde{\mathcal{O}} \left(\exp(-T)\right)$ \citep{bubeck2015convex}.
By viewing SAM as GD on virtual loss, we can get an intuition why SAM cannot achieve exponential convergence. %rate as GD in this setting. 
Consider a smooth and strongly convex function: $f(x)=\frac{1}{2}x^2$.
%In \eqref{gmap}, we already showed that one possible virtual loss of $f$ is $J_f(x) = \frac{1}{2}(x + \rho \mathrm{sign} (x))^2$. 
One possible virtual loss of $f$ is $J_f(x) = \frac{1}{2}(x + \rho \mathrm{sign} (x))^2$, which is a \emph{nonsmooth} and strongly convex function, for which the exponential convergence of GD is impossible~\citep{bubeck2015convex}.
%From the fact that gradient-based methods cannot achieve exponential convergence rate for nonsmooth and strongly convex functions~\citep{bubeck2015convex}, we get the intuition that SAM would not be able to achieve linear convergence.

Indeed, we can show that the sublinear convergence rate in Theorem~\ref{thm:fb_sc_ub} is not an artifact of our analysis. 
Interestingly, the $\tilde{\mathcal{O}} \left(\exp(-T) + \frac{1}{T^2}\right)$ rate given in Theorem~\ref{thm:fb_sc_ub} is \emph{tight} in terms of $T$, up to logarithmic factors.
Our next theorem provides a lower bound for smooth strongly convex functions.
\begin{restatable}{theorem}{thmfbsclb}
\label{thm:fb_sc_lb}
Suppose $\frac{\beta}{\mu} \geq 2$. For any choice of perturbation size $\rho$, step size $\eta$, and initialization $\vx_0$, there exists a differentiable, $\beta$-smooth, and $\mu$-strongly convex function $f$ such that
\vspace*{-3pt}
\begin{equation*}
    \min_{t \in \{0,\dots, T\}} f(\vx_t) - f^* = \Omega\left (  \frac{\beta^3 \rho^2}{\mu^2 T^2} \right)
\end{equation*}
\vspace*{-3pt}

holds for deterministic SAM iterates.
\end{restatable}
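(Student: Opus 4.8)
The plan is to construct, for each $T$, a one-dimensional quadratic hard instance tailored to the prescribed $\rho$, $\eta$, and $\vx_0$. I would take $f(x) = \frac{c}{2}(x - x^*)^2$ with curvature $c \in [\mu,\beta]$ and minimizer $x^*$ to be chosen; this is automatically $c$-smooth and $c$-strongly convex, hence $\beta$-smooth and $\mu$-strongly convex. Writing $z_t = x_t - x^*$ and observing that $y_t - x^* = z_t + \rho\,\mathrm{sign}(z_t)$ always shares the sign of $z_t$, the SAM update collapses to the \emph{exact} scalar recursion $z_{t+1} = (1-\eta c)z_t - \eta c\rho\,\mathrm{sign}(z_t)$, with no Taylor/approximation error. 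I would first record two structural facts about this map: (i) for $0 < \eta c < 2$ it admits a two-cycle $\{+a,-a\}$ with $a = \frac{\eta c\rho}{2-\eta c}$, and if $x^*$ is placed so that $z_0 = a$ exactly, then $|z_t| = a$ for \emph{every} $t$, so $f(x_t) - f^* = \frac{c}{2}a^2$ is constant in $t$; (ii) whenever $|z_t| \ge \rho$ and $\eta c \le \tfrac12$ one has $|z_{t+1}| \ge (1-2\eta c)|z_t|$ with no sign flip, while for $\eta c > 2$ the magnitudes strictly increase. Fact (i) pins the entire trajectory onto the oscillation floor, guaranteeing that $\min_t f(x_t) - f^*$ cannot dip below $\frac{c}{2}a^2$; fact (ii) controls the far-from-optimum regime.

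The proof then splits on the size of the given $\eta$, with the key threshold at $\eta \asymp \frac{1}{\mu T}$. For moderate/large $\eta$ I would use the two-cycle floor with $x_0$ placed on the cycle. Concretely, if $\frac{c_0}{\mu T} \le \eta \le \frac{2}{\beta}$, take $c = \beta$; then $2 - \eta\beta \ge$ const forces $f(x_t) - f^* = \frac{\beta^3\eta^2\rho^2}{2(2-\eta\beta)^2} \ge \frac{\beta^3\eta^2\rho^2}{8} = \Omega\!\left(\frac{\beta^3\rho^2}{\mu^2T^2}\right)$ for all $t$, using $\eta \ge \frac{c_0}{\mu T}$. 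For $\eta \ge \frac{2}{\beta}$ there is ample room: if $\eta < \frac{2}{\mu}$ I choose $c \in [\mu,\beta]$ with $\eta c$ just below $2$, making the amplitude $a = \frac{\eta c\rho}{2-\eta c}$ (hence the floor) as large as desired; and if $\eta \ge \frac{2}{\mu}$, where every admissible $c$ is unstable, I invoke fact (ii) and start far, $z_0 = D$, so magnitudes only grow and $\min_t f(x_t) - f^* = \frac{c}{2}D^2$ at $t=0$, with $D$ chosen to clear the target.

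For small $\eta \lesssim \frac{1}{\mu T}$ the oscillation floor $\eta^2\beta^3\rho^2$ is too low, so I instead exploit slow progress: take $c = \mu$ and start far, $z_0 = D \asymp \frac{\beta^{3/2}\rho}{\mu^{3/2}T}$ (or $D \asymp \rho$, whichever is larger). Since $\eta\mu T$ is then bounded, $(1-2\eta\mu)^T$ stays above an absolute constant, so fact (ii) gives $|z_t| \ge \frac12 D$ for all $t \le T$, whence $\min_t f(x_t) - f^* \ge \frac{\mu}{8}D^2 = \Omega\!\left(\frac{\beta^3\rho^2}{\mu^2T^2}\right)$. The two regimes meet exactly at $\eta \sim \frac{1}{\mu T}$, which is precisely where the oscillation floor $\eta^2\beta^3\rho^2$ and the slow-progress value $\mu D^2$ both equal $\frac{\beta^3\rho^2}{\mu^2T^2}$; this balance is the source of the $T^{-2}$ rate and explains why the lower bound matches the upper bound of Theorem~\ref{thm:fb_sc_ub} in $T$ and $\rho$.

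The main obstacle will be making this case analysis airtight and seam-free: I must choose the regime boundaries (in terms of $\frac{1}{\beta}$, $\frac{1}{\mu}$, $\frac{1}{\mu T}$) so that every $\eta \in (0,\infty)$ lands in a case whose construction provably clears $\Omega\!\left(\frac{\beta^3\rho^2}{\mu^2T^2}\right)$, while respecting each construction's validity conditions simultaneously --- stability $\eta c < 2$ versus divergence $\eta c > 2$, the monotonicity/sign condition $|z_t|\ge\rho$ needed for fact (ii), and the requirement $c \in [\mu,\beta]$ (this is where the hypothesis $\frac{\beta}{\mu}\ge 2$ buys enough curvature room to realize both $c=\beta$ and $c=\mu$ as genuinely distinct choices). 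A secondary subtlety is establishing fact (i) as an exact identity rather than an asymptotic statement, since it is what guarantees that the \emph{best} iterate, not merely the last one, stays on the floor; in the far-start cases I must instead verify that the last iterate, being the one closest to $x^*$, still satisfies the bound. Once the bookkeeping of constants and the handling of small-$T$ edge cases are in place, the claimed $\Omega\!\left(\frac{\beta^3\rho^2}{\mu^2T^2}\right)$ follows in every regime.
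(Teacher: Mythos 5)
Your proposal is correct and takes essentially the same route as the paper's proof: one-dimensional quadratic hard instances, a case split on $\eta$ with thresholds at $\Theta\left(\frac{1}{\mu T}\right)$ and $\frac{2}{\beta}$, the exact two-cycle $\pm\frac{\eta c \rho}{2-\eta c}$ serving as the oscillation floor in the middle regime (the paper's Case 2 iterates oscillating between $\pm\frac{\eta\beta\rho}{4-\eta\beta}$), slow contraction from a far initialization for small $\eta$, and magnitude non-decrease for $\eta c \geq 2$ for large $\eta$. The only differences are bookkeeping: the paper uses curvature $\beta/2$ in the middle regime---which is precisely where the hypothesis $\frac{\beta}{\mu}\geq 2$ is needed for $\mu$-strong convexity, and which keeps the cycle well-defined up to and including $\eta = \frac{2}{\beta}$---whereas your choice $c=\beta$ degenerates at that endpoint and forces you to absorb it into your large-$\eta$ sub-cases.
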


For the proof of Theorem~\ref{thm:fb_sc_lb}, refer to Appendix~\ref{sec:proof_fb_sc_lb}. By comparing the rates in Theorems~\ref{thm:fb_sc_ub} and \ref{thm:fb_sc_lb}, we can see that the two bounds are tight in terms of $T$ and $\rho$. The bounds are a bit loose in terms of $\beta$ and $\mu$, but we believe that this may be partly due to our construction; in proving lower bounds, we used one-dimensional quadratic functions as worst-case examples. A more sophisticated construction may improve the tightness of the lower bound, which is left for future work.
%Characterizing the tight dependence of convergence rate in terms of $\beta$ and $\mu$ is left for future work.

\vspace*{-5pt}
\subsection{Smooth and Convex Functions}
\label{sec:smooth-convex-determ}
\vspace*{-5pt}
For smooth and convex functions, proving convergence of the function value to global optimum becomes more challenging, due to the absence of strong convexity. In fact, we can instead prove that the gradient norm converges to zero.

\begin{restatable}{theorem}{thmfbcvx}
\label{thm:fb_cvx}
Consider a $\beta$-smooth and convex function $f$. If we run deterministic SAM starting at $\vx_0$ with any perturbation size $\rho > 0$ and step size 
%\begin{equation*}
$\eta = \min \big \{ \tfrac{\sqrt{2\Delta}}{\sqrt{\beta^3 \rho^2 T}}, \tfrac{1}{2\beta} \big \}$
%\end{equation*}
to minimize $f$, we have
%with any perturbation constant $\rho > 0$ and step size $\eta=\frac{\eta_0}{\sqrt{T}}$ where $\eta_0 \leq \frac{1}{\beta}$, we have 
\vspace*{-3pt}
\begin{equation*}
\frac{1}{T} \sum\nolimits_{t=0}^{T-1} \| \nabla f(\vx_t) \|^2 = \mathcal{O} \bigg ( \frac{\beta \Delta}{T} + \frac{\sqrt{\Delta\beta^3\rho^2}}{\sqrt{T}} \bigg ).
    %\frac{1}{T}\sum_{t=0}^{T-1}\|\nabla f(\vx_t)\|^2 = \mathcal{O}\bigg(\frac{1}{\sqrt{T}}\bigg)
\end{equation*}
\end{restatable}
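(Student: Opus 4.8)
The plan is to run the standard descent-lemma bookkeeping but to exploit convexity at the single place where the normalized ascent step would otherwise create an unavoidable $\rho^2$ error. First I would apply $\beta$-smoothness (Definition~\ref{def:smth}) to the descent step $\vx_{t+1} = \vx_t - \eta\nabla f(\vy_t)$ to get
\[
f(\vx_{t+1}) \leq f(\vx_t) - \eta\langle\nabla f(\vx_t), \nabla f(\vy_t)\rangle + \tfrac{\beta\eta^2}{2}\|\nabla f(\vy_t)\|^2.
\]
All the difficulty lives in the inner-product term: for a general smooth function it could be as negative as $-\beta\rho\|\nabla f(\vx_t)\|$ (via $\|\nabla f(\vy_t)-\nabla f(\vx_t)\|\le\beta\rho$), producing a persistent $\beta^2\rho^2$ additive term after telescoping, which is exactly what is unavoidable in the nonconvex case (Theorem~\ref{thm:fb_smth}).

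The crucial step, which I expect to be the conceptual heart of the argument, is to show that convexity removes this negative contribution entirely. Because the ascent step moves along the gradient, the displacement $\vy_t-\vx_t=\rho\,\nabla f(\vx_t)/\|\nabla f(\vx_t)\|$ is parallel to $\nabla f(\vx_t)$; monotonicity of the gradient of a convex function (Definition~\ref{def:cvx}) then gives $\langle\nabla f(\vy_t)-\nabla f(\vx_t),\,\vy_t-\vx_t\rangle\ge 0$, and dividing out the positive scalar $\rho/\|\nabla f(\vx_t)\|$ yields
\[
\langle\nabla f(\vx_t), \nabla f(\vy_t)\rangle \geq \|\nabla f(\vx_t)\|^2,
\]
with the degenerate case $\|\nabla f(\vx_t)\|=0$ handled trivially by the convention $\vy_t=\vx_t$. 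This converts the dangerous cross term into clean descent.

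Combining this with $\|\nabla f(\vy_t)\|^2 \leq 2\|\nabla f(\vx_t)\|^2 + 2\beta^2\rho^2$ (again from $\|\nabla f(\vy_t)-\nabla f(\vx_t)\|\le\beta\rho$ together with $(a+b)^2\le 2a^2+2b^2$) and using $\eta\le\tfrac{1}{2\beta}$ to absorb $\beta\eta^2\|\nabla f(\vx_t)\|^2$ into half of the $-\eta\|\nabla f(\vx_t)\|^2$ gain, I would obtain
\[
f(\vx_{t+1}) \leq f(\vx_t) - \tfrac{\eta}{2}\|\nabla f(\vx_t)\|^2 + \beta^3\eta^2\rho^2.
\]
Telescoping from $0$ to $T-1$ and dividing by $\eta T/2$ then gives $\frac{1}{T}\sum_{t=0}^{T-1}\|\nabla f(\vx_t)\|^2 \leq \frac{2\Delta}{\eta T} + 2\beta^3\eta\rho^2$.

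Finally I would substitute the prescribed $\eta=\min\{\sqrt{2\Delta}/\sqrt{\beta^3\rho^2 T},\,1/(2\beta)\}$ and split into two cases. When the first argument is active, the two terms balance at order $\sqrt{\Delta\beta^3\rho^2/T}$. When $\eta=1/(2\beta)$ is active, the defining inequality $1/(2\beta)\le\sqrt{2\Delta}/\sqrt{\beta^3\rho^2 T}$ forces $\beta^2\rho^2=\mathcal O(\beta\Delta/T)$, so the residual $2\beta^3\eta\rho^2=\beta^2\rho^2$ is absorbed into the $\beta\Delta/T$ term while $2\Delta/(\eta T)=4\beta\Delta/T$. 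Either way the bound reads $\mathcal O\big(\beta\Delta/T + \sqrt{\Delta\beta^3\rho^2}/\sqrt{T}\big)$, as claimed. The only subtlety beyond routine algebra is the case split on the step size; the essential point distinguishing this result from the nonconvex analysis is the convexity argument that eliminates the $\rho^2$ additive term.
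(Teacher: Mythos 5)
Your proposal is correct and takes essentially the same route as the paper's proof: the same descent-lemma starting point, the same key use of gradient monotonicity along the normalized ascent direction (the paper's Lemma~\ref{lma:fb_cvx_prop1} with $\mu=0$, which is exactly your inequality $\langle\nabla f(\vx_t),\nabla f(\vy_t)\rangle \geq \|\nabla f(\vx_t)\|^2$), followed by the same telescoping and case split on the prescribed step size. The only cosmetic difference is that you bound $\|\nabla f(\vy_t)\|^2 \leq 2\|\nabla f(\vx_t)\|^2 + 2\beta^2\rho^2$ via Young's inequality, whereas the paper (Lemma~\ref{lma:fb_smth_cvx_prop1}) expands this term exactly and applies monotonicity a second time to the cross term; this changes only the numerical constant in the per-step error ($\beta^3\eta^2\rho^2$ versus $\tfrac{1}{2}\beta^3\eta^2\rho^2$) and not the final rate.
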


\vspace*{-3pt}
The proof of Theorem~\ref{thm:fb_cvx} is given in Appendix~\ref{sec:proof_fb_cvx}. As for relevant existing studies, Theorem~11 of \citet{andriushchenko2022towards} proves the convergence guarantee for this function class: $f(\bar{\vx})-f^* = \mathcal{O}\left(\frac{1}{T}\right)$, where $\bar{\vx}$ indicates the averaged $\vx$ over $T$ iterates, while assuming SAM \emph{without gradient normalization} and bounded $\rho$. Here, we prove a weaker result: a convergence rate of $\mathcal{O}(\frac{1}{\sqrt{T}})$ to \emph{stationary points}, albeit for any $\rho > 0$ and with normalization.
%Theorem~\ref{thm:fb_cvx} implies that the squared gradient norm at the best iterate $\min_{t \in \{0, \dots, T-1\}} \| \nabla f(\vx_t) \|^2$ converges to zero as $T$ grows.

One might expect that a reasonably good optimizer should converge to global minima of smooth convex functions. However, it turns out that both showing convergence to global minima and finding a non-convergence example are quite challenging in this function class. 

Indeed, we later show non-convergence examples for other relevant settings. For stochastic SAM, we provide a non-convergence example (Theorem~\ref{thm:sto_cvx_counter}) in the same smooth and convex function class, showing that the suboptimality gap in terms of function values cannot have an upper bound, and therefore rendering convergence to global minima impossible.
For deterministic SAM in nonsmooth Lipschitz convex functions, we show an example (Theorem~\ref{thm:fb_nonsmth_counter}) where convergence to the global minimum is possible only up to a certain distance proportional to $\rho$.

Given these examples, we suspect that there may also exist a non-convergence example for the determinstic SAM in this smooth convex setting. We leave settling this puzzle to future work. 

\vspace*{-5pt}
\subsection{Smooth and Nonconvex Functions}
\vspace*{-5pt}
Existing studies prove that SAM (and its variants) with decaying or sufficiently small perturbation size $\rho$ converges to stationary points for smooth nonconvex functions~\citep{zhuang2022surrogate,mi2022make,sun2023adasam, jiangadaptive, andriushchenko2022towards}. Unfortunately, with constant perturbation size $\rho$, SAM exhibits a different convergence behavior: it does not converge all the way to stationary points.
%We provide a ``relaxed'' convergence guarantee which considers both $\|\nabla f(\vx_t)\|^2$ and $\|\nabla f(\vy_t)\|^2$ into account.
\begin{restatable}{theorem}{thmfbsmth}
\label{thm:fb_smth}
Consider a $\beta$-smooth function $f$ satisfying $f^* = \inf_\vx f(\vx) > -\infty$.
If we run deterministic SAM starting at $\vx_0$ with any perturbation size $\rho > 0$ and step size $\eta = \frac{1}{\beta}$ to minimize $f$, we have
% \begin{equation*}
% \eta = \frac{\sqrt{2 \Delta}}{\sqrt{\beta L^2 T}}
% \end{equation*}
% to minimize $f$, we have
%with any perturbation constant $\rho > 0$ and step size $\eta=\frac{1}{\sqrt{T}}$, we have 
\vspace*{-3pt}
\begin{equation*}
    %\frac{1}{T} \sum\nolimits_{t=0}^{T-1}\left(\|\nabla f(\vx_t)\|^2 \!+ \|\nabla f(\vy_t)\|^2\right) 
    \frac{1}{T} \sum\nolimits_{t=0}^{T-1}\|\nabla f(\vx_t)\|^2 
    \leq 
    %\mathcal O\left ( \frac{\sqrt{\beta \Delta L^2}}{\sqrt{T}} \right ) + \beta^2 \rho^2
    \mathcal{O}\left(\frac{\beta \Delta}{T}\right) + \beta^2 \rho^2.
\end{equation*}
\end{restatable}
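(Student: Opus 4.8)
The plan is to run the standard descent-lemma argument for smooth nonconvex optimization, but to track the perturbation error carefully so that the additive term comes out as exactly $\beta^2\rho^2$ rather than a larger constant multiple. Starting from $\beta$-smoothness and using $\vx_{t+1}-\vx_t = -\eta\nabla f(\vy_t)$, the descent inequality reads
\[
f(\vx_{t+1}) \le f(\vx_t) - \eta\langle \nabla f(\vx_t), \nabla f(\vy_t)\rangle + \tfrac{\beta\eta^2}{2}\|\nabla f(\vy_t)\|^2 .
\]
The central quantity to control is the mismatch between the gradient used in the update, $\nabla f(\vy_t)$, and the gradient at the current iterate, $\nabla f(\vx_t)$.

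First I would write $\nabla f(\vy_t) = \nabla f(\vx_t) + \ve_t$. Since the ascent step satisfies $\|\vy_t - \vx_t\| \le \rho$ (with equality whenever $\nabla f(\vx_t)\neq \vzero$, and $\vy_t = \vx_t$ under the stated convention when $\nabla f(\vx_t)=\vzero$), $\beta$-smoothness yields $\|\ve_t\| = \|\nabla f(\vy_t) - \nabla f(\vx_t)\| \le \beta\rho$. Substituting this decomposition expands both the inner product and the squared norm into terms in $\|\nabla f(\vx_t)\|^2$, $\langle \nabla f(\vx_t), \ve_t\rangle$, and $\|\ve_t\|^2$.

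The key observation is that with the specific step size $\eta = 1/\beta$, the two cross terms involving $\langle \nabla f(\vx_t), \ve_t\rangle$ cancel exactly: their net coefficient is $-\eta + \beta\eta^2 = \eta(\beta\eta-1) = 0$. This leaves
\[
f(\vx_{t+1}) \le f(\vx_t) - \tfrac{1}{2\beta}\|\nabla f(\vx_t)\|^2 + \tfrac{1}{2\beta}\|\ve_t\|^2 \le f(\vx_t) - \tfrac{1}{2\beta}\|\nabla f(\vx_t)\|^2 + \tfrac{\beta\rho^2}{2},
\]
with no application of Young's inequality (which would inflate the constant). Telescoping over $t = 0,\dots,T-1$, bounding $f(\vx_0)-f(\vx_T) \le \Delta$, and multiplying through by $2\beta/T$ then gives $\frac{1}{T}\sum_{t=0}^{T-1}\|\nabla f(\vx_t)\|^2 \le \frac{2\beta\Delta}{T} + \beta^2\rho^2$, as claimed.

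There is no genuine analytic obstacle here; the content lies entirely in the exact cancellation, which is precisely what forces the tuning $\eta = 1/\beta$ and pins the additive term at $\beta^2\rho^2$ with constant one. The only point requiring minor care is the degenerate case $\nabla f(\vx_t) = \vzero$, where the normalized ascent direction is undefined: the convention $\vy_t = \vx_t$ keeps $\ve_t = \vzero$ (so $\|\ve_t\|\le\beta\rho$ trivially), ensuring the per-step inequality holds uniformly across all $t$.
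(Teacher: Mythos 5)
Your proof is correct and follows essentially the same route as the paper's: the descent lemma, the smoothness bound $\|\nabla f(\vy_t)-\nabla f(\vx_t)\|\leq\beta\rho$, and telescoping, arriving at the identical bound $\frac{2\beta\Delta}{T}+\beta^2\rho^2$. The only cosmetic difference is that the paper expands $-\eta\langle\nabla f(\vx_t),\nabla f(\vy_t)\rangle$ via the polarization identity and drops the nonpositive term $-\frac{\eta}{2}(1-\eta\beta)\|\nabla f(\vy_t)\|^2$ (so its per-step inequality holds for any $\eta\leq\frac{1}{\beta}$), whereas your explicit error decomposition with exact cross-term cancellation is the same algebra specialized to $\eta=\frac{1}{\beta}$.
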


\vspace*{-3pt}
Refer to the Appendix~\ref{sec:proof_fb_smth} for the proof of Theorem~\ref{thm:fb_smth}. For a comparison, Theorem~9 of \citet{andriushchenko2022towards} proves the convergence for this function class: $\frac{1}{T}\sum_{t=0}^T\|\nabla f(\vx_t)\|^2 = \mathcal{O}\left(\frac{1}{T}\right)$, but again assuming SAM \emph{without gradient normalization}, and boundedness of $\rho$. Our Theorem~\ref{thm:fb_smth} guarantees $\mathcal{O}(\frac{1}{T})$ convergence up to an additive factor $\beta^2\rho^2$. %for any $\rho>0$.

One might speculate that the undesirable additive factor $\beta^2\rho^2$ is an artifact of our analysis. The next theorem presents an example which proves that this extra term is in fact unavoidable.

\begin{restatable}{theorem}{thmfbnoncvxcounter}
\label{thm:fb_noncvx_counter}
For any $\rho > 0$ and $\eta \leq \frac{1}{\beta}$, there exists a $\beta$-smooth and $\Theta(\beta \rho)$-Lipschitz continuous function such that, if deterministic SAM is initialized at a point $\vx_0$ sampled from a continuous probability distribution, then deterministic SAM converges to a nonstationary point, located at a distance of $\Omega(\rho)$ from a stationary point, with probability $1$.
\end{restatable}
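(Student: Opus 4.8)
The plan is to work entirely in one dimension ($d=1$), where the \emph{virtual loss} $J_f$ always exists, and to exploit the \emph{virtual gradient map} $G_f(x) = f'(x + \rho\,\mathrm{sign}(f'(x)))$ so that the SAM iteration becomes the scalar map $x_{t+1} = \phi(x_t) := x_t - \eta G_f(x_t)$. The key idea is to build $f$ so that $G_f$ has a zero at a point $x^\star$ that is \emph{not} a stationary point of $f$ (equivalently, a local minimum of the nonsmooth $J_f$ away from any minimum of $f$), while $x^\star$ is a \emph{stable} fixed point of $\phi$ and every genuine stationary point of $f$ is \emph{unstable} under $\phi$. Note that $G_f(x^\star)=0$ with $f'(x^\star)\ne 0$ holds precisely when the perturbed point $y^\star := x^\star + \rho\,\mathrm{sign}(f'(x^\star))$ is a stationary point of $f$; thus $x^\star$ automatically lies at distance $\rho$ from a stationary point, which already points to the $\Omega(\rho)$ gap.

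Concretely, I would take a $P$-periodic, piecewise-quadratic $f$ with period $P=\Theta(\rho)$ whose derivative $f'$ is a (smoothed) triangle-like wave of amplitude $\Theta(\beta\rho)$, so that $f$ is $\beta$-smooth, $\Theta(\beta\rho)$-Lipschitz, and bounded (hence $f^* > -\infty$). Within one period I place a local minimum and a local maximum of $f$ separated by a distance $d \in (0,\rho)$, e.g.\ a minimum at $0$ and a maximum at $-d$ with $d=\rho/2$ and $P=2\rho$. Taking $x^\star = -\rho$, this point lies on the increasing branch so $f'(x^\star)>0$; its perturbation is $y^\star = x^\star + \rho = 0$ (the local minimum), giving $G_f(x^\star)=f'(0)=0$, while the nearest stationary point (the local maximum at $-d$) sits at distance $\rho-d=\Omega(\rho)$ and $|f'(x^\star)| = \Theta(\beta\rho)$, matching the additive term in Theorem~\ref{thm:fb_smth}.

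I would then verify local stability by differentiating $\phi$ on the branch where $\mathrm{sign}(f'(x))$ is constant: there $G_f(x)=f'(x+\rho)$, so $G_f'(x^\star)=f''(y^\star)\in(0,\beta]$ and $\phi'(x^\star)=1-\eta f''(y^\star)\in[0,1)$ because $\eta\le 1/\beta$, making $x^\star$ a local contraction. Crucially, I would show each true stationary point $s$ of $f$ is \emph{repelling}: since $\mathrm{sign}(f'(x))$ flips at $s$, $G_f$ is discontinuous there, with one-sided limits $f'(s\pm\rho)\ne 0$, so $\phi$ takes a finite jump \emph{across} $s$ rather than converging to it. Together these facts drive iterates away from stationary points and into the contracting neighborhood of some $x^\star$.

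Finally, to obtain the probability-$1$ claim I would analyze the global orbit structure of the piecewise-linear map $\phi$: its repelling fixed points (the stationary points of $f$, together with the continuous repellers at which $y^\star$ lands on a local maximum) partition $\sR$, and I would argue that each interval between consecutive repellers is funneled, after at most one jump across the intervening local maximum, into the unique attractor $x^\star$ it contains. The only initializations failing to converge to a nonstationary $x^\star$ are the repelling fixed points and their $\phi$-preimages; as $\phi$ is piecewise linear this set is countable, hence null, so a continuously distributed $x_0$ avoids it almost surely. I expect the main obstacle to be exactly this global step: because $\phi$ is discontinuous at the stationary points of $f$ and only piecewise smooth, I must rule out positive-measure sets of initializations that get trapped oscillating near a discontinuity or are carried toward a stationary point, and I must check that the jumps across local maxima, whose length is at most $\eta\cdot\max|f'|\le\rho$ under $\eta\le 1/\beta$ and amplitude $\Theta(\beta\rho)$, always land inside the correct basin.
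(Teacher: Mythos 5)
Your high-level strategy is the same as the paper's: a one-dimensional oscillatory function, the virtual gradient map $G_f(x)=f'(x+\rho\,\mathrm{sign}(f'(x)))$, and an attractor $x^\star$ whose perturbation $x^\star+\rho$ lands exactly on a stationary point of $f$ (the paper uses $f(x)=\tfrac{9\beta\rho^2}{25\pi^2}\sin(\tfrac{5\pi}{3\rho}x)$, with attractors at $(-0.1+1.2k)\rho$ and $(0.7+1.2k)\rho$). Your local pieces are sound: the contraction rate $\phi'(x^\star)=1-\eta f''(x^\star+\rho)\in[0,1)$ for $\eta\le 1/\beta$, the $\Omega(\rho)$ distance, and the $\Theta(\beta\rho)$ Lipschitz/gradient scales all check out for your construction. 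The genuine gap is precisely the global step you defer to the last paragraph, and your sketch of it is incorrect for your own geometry. With period $2\rho$ and min--max separation $\rho/2$, the stationary points of $f$ are \emph{not} two-sided repellers but one-directional ``flow-through'' points: on one side of each local minimum and each local maximum, $G_f$ has a nonzero one-sided limit pushing iterates \emph{across} the stationary point. Concretely, orbits starting in $(-2\rho,-1.5\rho)$ drift left, jump over the local \emph{minimum} at $-2\rho$, then traverse the decreasing branch $(-2.5\rho,-2\rho)$ and jump over the local \emph{maximum} at $-2.5\rho$, before finally entering the basin of the attractor $-3\rho$ — two crossings, one of them over a local minimum, contradicting your claim of ``at most one jump across the intervening local maximum.'' Completing the proof therefore requires: jump-size control guaranteeing each crossing lands in the correct region; a measure-zero argument that no orbit ever hits a stationary point \emph{exactly} (where SAM sets the normalized gradient to $0$, so the iterate freezes forever — a failure mode, not mere repulsion); and care with branches where $\phi$ is locally constant (your piecewise-quadratic $f$ has $f''\equiv\beta$ on intervals, so at $\eta=1/\beta$ whole intervals collapse to a point, and one must verify they collapse onto attractors, not stationary points). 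None of this is carried out.

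The idea you are missing — and the reason the paper's proof is short — is that the example can be \emph{designed} so that no global orbit analysis is needed: choose the geometry so that every stationary point of $f$ repels from \emph{both} sides, making each interval between consecutive stationary points invariant under the SAM map. With the paper's sine of period $\tfrac{6}{5}\rho$ (stationary points every $0.6\rho$), perturbing by $\rho$ from just inside either endpoint of such an interval lands where the gradient pushes back \emph{inward}; the invariance check reduces to showing that $z\mapsto z-\eta\beta\cos(z+\tfrac{5\pi}{3})$ maps $(-\tfrac{\pi}{2},\tfrac{\pi}{2})$ into itself whenever $\eta\beta\le 1$. Inside an invariant interval the virtual loss $J_f$ is smooth, the standard descent lemma gives $\sum_t\|G_f(x_t)\|^2<\infty$, hence $G_f(x_t)\to 0$, and the unique zero of $G_f$ in the interval is the non-stationary attractor. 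The probability-one claim is then immediate: the only bad initializations are the stationary points themselves, a finite-per-period (measure-zero) set, with no need to control preimages, oscillations near discontinuities, or basin-landing of jumps. I recommend either adopting an invariance-based construction of this type, or honestly completing the multi-crossing orbit analysis your construction demands; as written, the proof is not complete.
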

Here we present a brief outline for the proof of Theorem~\ref{thm:fb_noncvx_counter}. For a given $\rho$, consider a one-dimensional function $f(x) = \frac{9 \beta \rho^2}{25 \pi^2} \sin\left (\frac{5\pi}{3\rho}x\right )$. Figure~\ref{fig:subfigures_1}(a) demonstrates the virtual loss for this example. By examining the virtual loss $J_f$ and its stationary points, we observe that SAM iterates $x_t$ converge to a non-stationary point, located at a distance of $\Omega(\rho)$ from the stationary point. This means that the limit point of SAM will have gradient norm of order $\Omega(\beta\rho)$, thereby proving that the additive factor in Theorem~\ref{thm:fb_smth} is tight in terms of $\rho$. A detailed analysis of Theorem~\ref{thm:fb_noncvx_counter} is provided in Appendix~\ref{sec:proof_fb_noncvx_counter}.

\vspace*{-5pt}
\paragraph{Remark: why do we ignore $\eta = \Omega (\frac{1}{\beta})$?} As the reader may have noticed, Theorem~\ref{thm:fb_noncvx_counter} only considers the case $\eta = \mathcal O (\frac{1}{\beta} )$, and hence does not show that the additive factor is inevitable for \emph{any} choice of $\eta > 0$. However, one can notice that if $\eta = \Omega (\frac{1}{\beta})$, then we can consider a one-dimensional $\Omega(\beta)$-strongly convex quadratic function and show that the SAM iterates blow up to infinity. For the same reason, in our other non-convergence results (Theorems~\ref{thm:sto_sc_counter} and \ref{thm:sto_cvx_counter}), we only focus on $\eta = \mathcal O (\frac{1}{\beta} )$.

% Define 
% $\sX \triangleq \{x \in \sR \mid \nabla f(x) = 0\}$ and 
% $\sY \triangleq \left \{x \in \sR \mid \nabla f\left(x+\rho\tfrac{\nabla f(x)}{\|\nabla f(x)\|}\right) = 0, \nabla f(x) \neq 0 \right \}$.
% Considering this example function, we can show that with $\eta \leq \frac{1}{\beta}$, for any initialization $x_0 \notin \sX$, SAM does not converge to a stationary point of $f$. Instead, in this example, SAM iterates $x_t$ converge to a non-stationary point $x \in \sY$ as $t \to \infty$! 
% %Since $\sX$ is a set of measure zero and $x_0$ is drawn from a density, this occurs with probability $1$. Also, 
% Since $\|\nabla f(x)\|^2 = \Omega(\rho^2)$ for non-stationary point $x\in \sY$, it gives the additive factor in Theorem~\ref{thm:fb_smth} a lower bound of $\Omega(\rho^2)$, proving the tightness.

\begin{figure}[t!]
    \centering
    \subfigure[]{\includegraphics[width=0.32\textwidth]{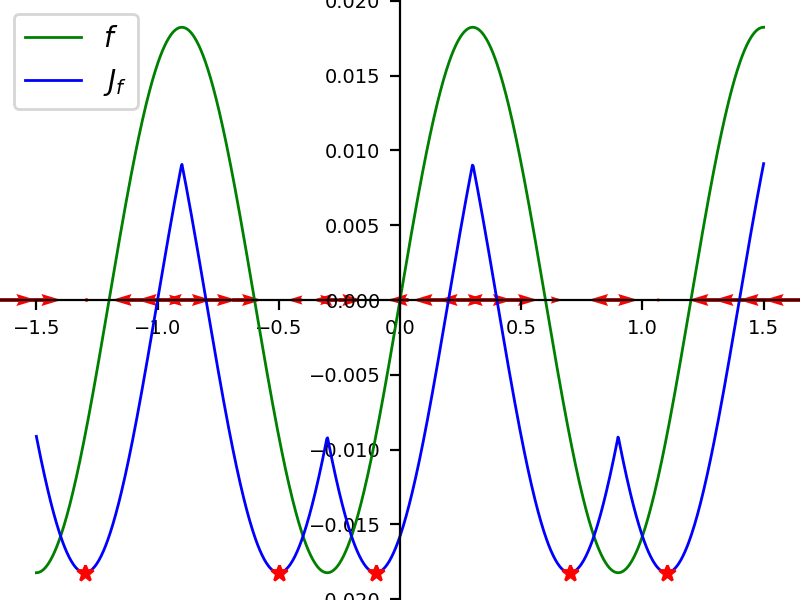}}\hspace{0.1em}
    %\vspace{-0.55em}
    \subfigure[]{\includegraphics[width=0.32\textwidth]{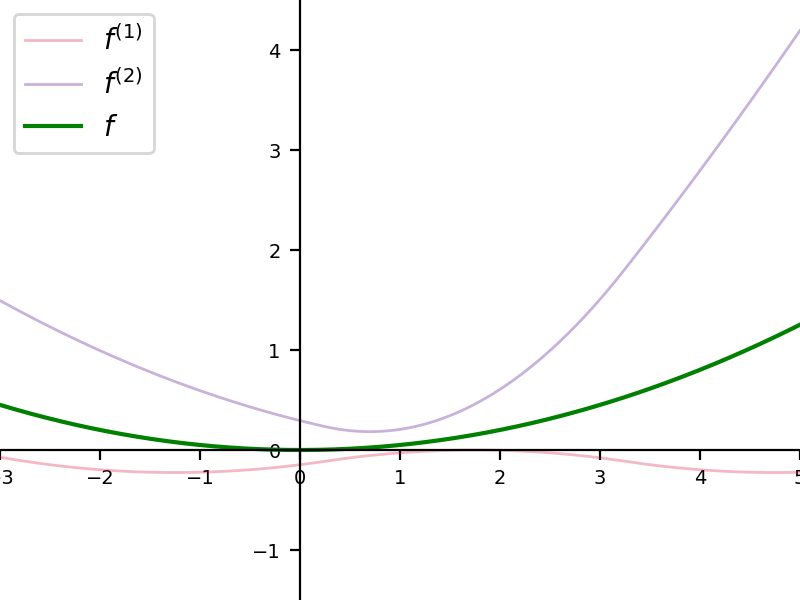}}\hspace{0.1em}
    %\vspace{-0.55em}
    \subfigure[]{\includegraphics[width=0.32\textwidth]{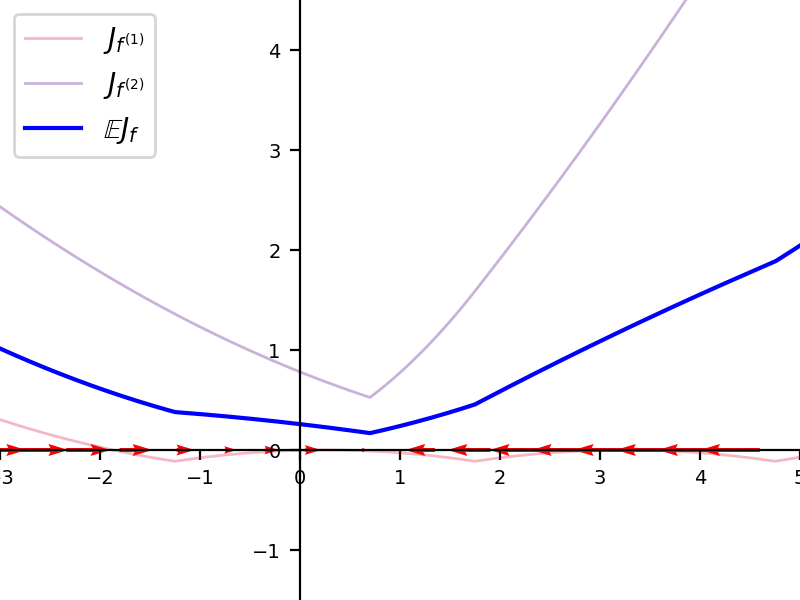}}\hspace{0.1em}
    %\vspace{-0.55em}
    \subfigure[]{\includegraphics[width=0.32\textwidth]{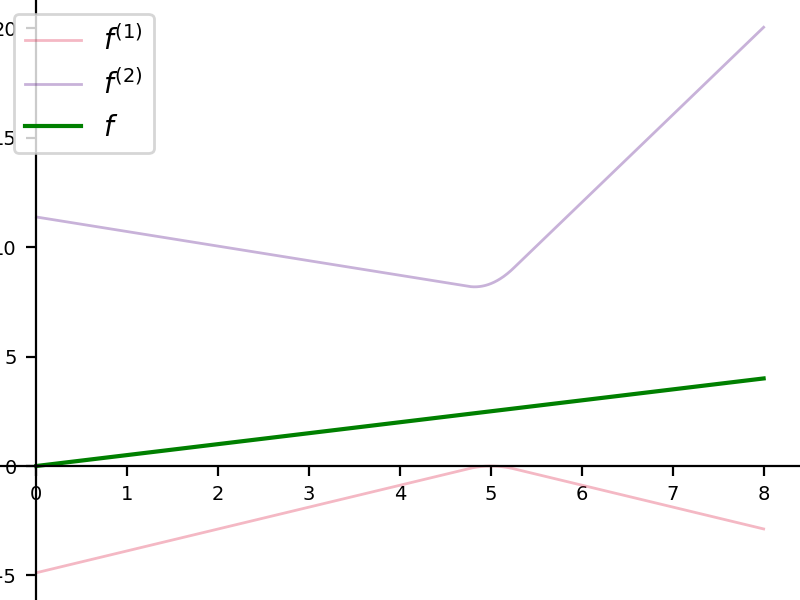}}\hspace{0.1em}
    % \vspace{-0.5em}
    \subfigure[]{\includegraphics[width=0.32\textwidth]{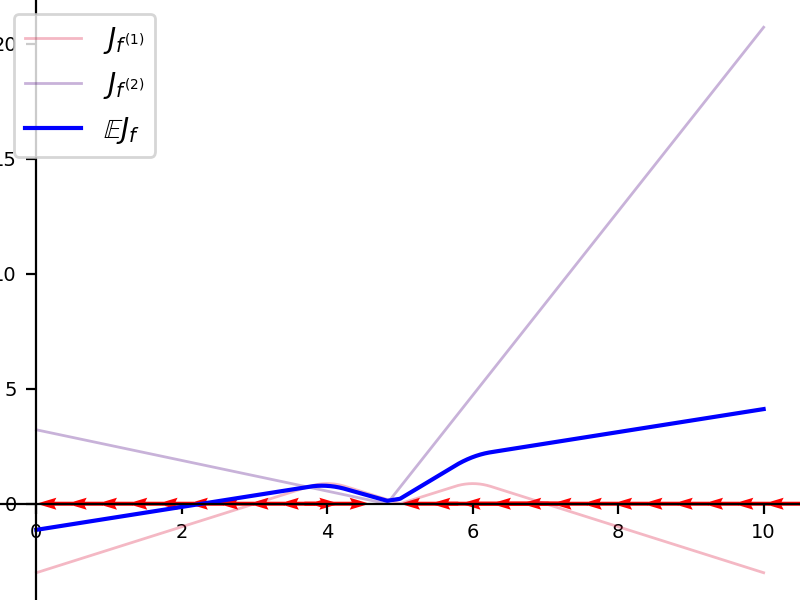}}\hspace{0.1em}
    % \vspace{-0.5em}
    \caption{Examples of virtual loss plot for deterministic and stochastic SAM. The graph drawn in green indicates $f$, and the graph drawn in blue indicates $J_f$ (or $\E J_f$). Red arrows indicate the (expected) directions of SAM update. The (expected) updates are directed to red stars. (a) $f$ and $J_f$ in Theorem~\ref{thm:fb_noncvx_counter}. (b) $f$ and its component functions $f^{(1)}$, $f^{(2)}$ in Theorem~\ref{thm:sto_sc_counter}. (c) $\E J_f$ and its component functions $J_{f^{(1)}}$, $J_{f^{(2)}}$ in Theorem~\ref{thm:sto_sc_counter}. (d) $f$ and its component functions $f^{(1)}$, $f^{(2)}$ in Theorem~\ref{thm:sto_cvx_counter}. (e) $\E J_f$ and its component functions $J_{f^{(1)}}$, $J_{f^{(2)}}$ in Theorem~\ref{thm:sto_cvx_counter}. For the simulation results of SAM trajectories on these functions, refer to Appendix~\ref{sec:d}.}
%\vspace{-0.1in}
\label{fig:subfigures_1}
\end{figure}

\vspace*{-5pt}
\subsection{Nonsmooth Lipschitz Convex Functions}
\vspace*{-5pt}
Previous theorems study convergence of SAM assuming smoothness. %Smoothness property is important factor on the convergence of SAM.
%Up next, we consider convergence of SAM on nonsmooth Lipschitz convex functions.
The next theorem shows an example where SAM on a nonsmooth convex function converges only up to $\Omega(\rho)$ distance from the global minimum $\vx^*$. This means that for constant perturbation size $\rho$, there exist \emph{convex} functions that prevent SAM from converging to global minima.
%depending on the initialization, for constant perturbation size $\rho$, there exists an example SAM does not converge towards minimum $\vx^*$. 
In Appendix~\ref{sec:proof_fb_nonsmth_counter}, we prove the following:
\begin{restatable}{theorem}{thmfbnonsmthcounter}
\label{thm:fb_nonsmth_counter}
For any $\rho>0$ and $\eta<\frac{7\rho}{4}$, there exists a nonsmooth Lipschitz convex function $f$ such that for some initialization, deterministic SAM converges to suboptimal points located at a distance of $\Omega(\rho)$ from the global minimum.
\end{restatable}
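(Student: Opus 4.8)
The plan is to prove Theorem~\ref{thm:fb_nonsmth_counter} by exhibiting an explicit one-dimensional, piecewise-linear convex $f$ (depending on $\rho$ and $\eta$) and tracking the SAM iterates directly. Since $d=1$, the virtual gradient map $G_f(x)=f'\!\big(x+\rho\,\mathrm{sign}(f'(x))\big)$ of Definition~\ref{def:virtual} is well-defined everywhere (using the convention that the normalized ascent direction is $0$ at the kink), so SAM reduces to the scalar iteration $x_{t+1}=x_t-\eta\,G_f(x_t)$, i.e.\ subgradient descent on the virtual loss. I would place the unique minimizer at $x^\star=0$ and analyze this scalar map.

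The guiding observation is that for convex $f$ the perturbed gradient $G_f(x)$ always has the \emph{same sign} as $x-x^\star$, because the normalized ascent step moves uphill; hence SAM can only fail to reach $x^\star$ by \emph{overshooting}, and it can never settle at a suboptimal fixed point (indeed $G_f(\bar x)=0$ would force the perturbed point $\bar x+\rho\,\mathrm{sign}(f'(\bar x))$ to be the minimizer, which is impossible for $\bar x\neq x^\star$). I would exploit this by making $f$ shallow in a tiny neighborhood $(-\epsilon,\epsilon)$ of $0$ but steep at distance about $\rho$ on either side: a symmetric ``notch'' whose subgradient has small magnitude on $(-\epsilon,\epsilon)$ and large magnitude $\Lambda$ outside, with $\epsilon<\rho$ and the slopes tuned so that the overshoot jump $\eta\Lambda$ is of order $\rho$. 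Because the ascent step displaces $x_t$ outward by exactly $\rho$, for every $x_t$ near $0$ the point $y_t=x_t+\rho\,\mathrm{sign}(f'(x_t))$ lands in the steep piece, so $G_f(x_t)=\Lambda\,\mathrm{sign}(x_t)$ and the map collapses to $x_{t+1}=x_t-\eta\Lambda\,\mathrm{sign}(x_t)$, a step of size $\Theta(\rho)$ that jumps past $0$ rather than approaching it.

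Having reduced SAM to this sign-driven map, I would show that its limiting behavior is a $2$-cycle straddling $0$: choosing the initialization $x_0$ to have a prescribed residue modulo the jump size pins the iterates to a cycle whose two points both lie at distance $\Omega(\rho)$ from $x^\star$, each with strictly positive suboptimality gap. The role of the hypothesis $\eta<\tfrac{7\rho}{4}$ is precisely to make this trap \emph{effective}: it bounds the jump size so that every ascent step $y_t$ provably remains inside the steep piece on which the closed form $G_f(x)=\Lambda\,\mathrm{sign}(x)$ holds, and so that the post-update iterate stays inside a forward-invariant interval bounded away from $0$ — i.e.\ the overshoot neither collapses back onto the optimum nor escapes the region where the analysis is valid. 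Convexity (monotonicity of the subgradient) and Lipschitzness of the constructed $f$ follow immediately once the finitely many slopes are fixed.

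The main obstacle is the region bookkeeping needed to make the above rigorous for \emph{all} admissible $\eta$: I must choose the breakpoint $\epsilon$, the slope $\Lambda$, and the initial residue so that every $x_t$ and every ascent point $y_t$ lands in the intended linear piece, and so that the candidate trapping interval is genuinely invariant and attracting. This is a finite but delicate case analysis over which pieces of the piecewise-linear $f$ the pair $(x_t,y_t)$ occupies, and it is exactly where the sharp threshold $\tfrac{7\rho}{4}$ is extracted; once the invariant interval is established, the conclusion that SAM converges to suboptimal points at distance $\Omega(\rho)$ from the global minimum is immediate.
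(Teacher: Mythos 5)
Your proposal is correct as a proof of the theorem as stated, but it takes a genuinely different route from the paper's. You work in one dimension with a piecewise-linear convex function whose outer slope $\Lambda$ is tuned so that $\eta\Lambda=\Theta(\rho)$, and you trap SAM in a two-point cycle straddling the minimizer: with $\epsilon<\rho$ the ascent point always lands in a steep linear piece, so the iteration collapses to $x_{t+1}=x_t-\eta\Lambda\,\mathrm{sign}(x_t)$, and initializing at $x_0=\eta\Lambda/2$ gives the cycle $\{\pm\eta\Lambda/2\}$ at distance $\Theta(\rho)$ from the optimum while never touching a kink (in fact your inner ``notch'' is superfluous; $f(x)=\Lambda|x|$ already works). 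This is legitimate because the theorem's quantifiers allow $f$ to depend on both $\rho$ and $\eta$. The paper instead uses a single two-dimensional function $f(\vx)=\max\{|x^{(1)}|,|2x^{(1)}+x^{(2)}|\}$, \emph{independent} of $\rho$ and $\eta$, and shows by a region analysis that once the iterates enter a certain strip the descent direction is always $\pm\sqrt{5}\vv_2$, so the $\vv_1$-coordinate freezes below $-7\rho/2+\eta$ and all iterates remain at distance at least $|7\rho/2-\eta|/\sqrt{5}$ from the optimum; that is where the hypothesis $\eta<7\rho/4$ genuinely enters. Two caveats about your route. First, your claim that the $7\rho/4$ threshold is ``extracted'' from your bookkeeping is inaccurate: in your construction the hypothesis on $\eta$ is never used, since $\Lambda$ adapts to $\eta$ (this is harmless, but the narrative should be corrected). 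Second, and more substantively, your example demonstrates less: your Lipschitz constant $\Lambda\asymp\rho/\eta$ blows up as $\eta\to 0$, and along your trajectory $f'(y_t)=f'(x_t)$, so the perturbation plays no role — plain constant-stepsize subgradient descent oscillates identically on your function, a classical phenomenon that SAM merely inherits. The paper's construction keeps the function and its Lipschitz constant ($\sqrt{5}$) fixed, and the $\Omega(\rho)$ gap persists for arbitrarily small $\eta$, where vanilla subgradient descent would approach the optimum; it thus isolates a failure caused specifically by the $\rho$-perturbation and supports the stronger ``one function defeats all hyperparameters'' reading discussed in the paper's Appendix~B.3.
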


\vspace*{-5pt}
\section{Convergence Analysis Under Stochastic Settings}
\label{sec:sto_results}
\vspace*{-5pt}
In this section, we present the main results on the convergence analysis of stochastic SAM, again with \emph{time-invariant (constant)} perturbation size $\rho$ and \emph{gradient normalization}. We consider both types of stochastic SAM: $n$-SAM and $m$-SAM, defined in Section~\ref{sec:sto_sam_def}. We study four types of function classes: smooth strongly convex, smooth convex, smooth nonconvex, and smooth Lipschitz nonconvex functions, under the assumption that the gradient oracle has bounded variance of $\sigma^2$ (Assumption~\ref{assumption:variance}). 
Our results in this section reveal that stochastic SAM exhibits different convergence properties compared to deterministic SAM. 

%Throughout the section, we define $f^* \triangleq \inf_\vx f(\vx)$ and $\Delta \triangleq f(\vx_0)-f^*$.

\vspace*{-5pt}
\subsection{Smooth and Strongly Convex Functions}
\vspace*{-5pt}
Theorem~\ref{thm:fb_sc_ub} shows the convergence of deterministic SAM to global optima, for smooth strongly convex functions. Unlike this result, we find that under stochasticity and constant perturbation size $\rho$, both $n$-SAM and $m$-SAM ensure convergence \emph{only up to an additive factor} $\mathcal O(\rho^2)$.

\begin{restatable}{theorem}{thmstosc}
\label{thm:sto_sc}
Consider a $\beta$-smooth, $\mu$-strongly convex function $f$, and assume Assumption~\ref{assumption:variance}. Under $n$-SAM, starting at $x_0$ with any perturbation size $\rho>0$ and step size $\eta = \min\left\{\frac{1}{\mu T} \cdot \max\left\{1,\log\left(\frac{\mu^2 \Delta T}{\beta [\sigma^2-\beta^2\rho^2]_{+}}\right)\right\},\frac{1}{2 \beta}\right\}$ to minimize $f$, we have
% \begin{equation*}
%     \eta = \min\left\{\frac{1}{\mu T} \cdot \max\left\{1,\log\left(\frac{\mu^2 \Delta T}{\beta [\sigma^2-\beta^2\rho^2]_{+}}\right)\right\},\frac{1}{2 \beta}\right\}
% \end{equation*}
% to minimize $f$, we have
\begin{equation*}
    \E f(\vx_T)-f^* \leq \tilde{\mathcal{O}}\left(\exp\left(-\frac{\mu T}{2\beta}\right)\Delta + \frac{\beta [\sigma^2 - \beta^2 \rho^2]_{+}}{\mu^2 T}\right) + \frac{2\beta^2\rho^2}{\mu}.
\end{equation*}
Under $m$-SAM, additionally assuming $l(\cdot,\xi)$ is $\beta$-smooth for any $\xi$, the inequality continues to hold.
\end{restatable}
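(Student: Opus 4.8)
The plan is to mimic the standard descent-lemma analysis for smooth strongly convex SGD, but carefully tracking the extra error introduced by the normalized perturbation with constant $\rho$. Starting from the update $\vx_{t+1} = \vx_t - \eta \tilde g(\vy_t)$, I would first take the squared distance to the optimum $\vx^*$ and expand $\E \|\vx_{t+1} - \vx^*\|^2 = \E\|\vx_t - \vx^*\|^2 - 2\eta \E \langle \tilde g(\vy_t), \vx_t - \vx^*\rangle + \eta^2 \E \|\tilde g(\vy_t)\|^2$. The key conceptual point is that $\vy_t = \vx_t + \rho \frac{g(\vx_t)}{\|g(\vx_t)\|}$ differs from $\vx_t$ by exactly $\rho$ in norm, so by $\beta$-smoothness we have $\|\nabla f(\vy_t) - \nabla f(\vx_t)\| \leq \beta \rho$. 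This single bound is what generates the irreducible $\mathcal O(\rho^2)$ term, and I would introduce it early.

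The main work is the inner-product (cross) term. I would write $\E \langle \tilde g(\vy_t), \vx_t - \vx^*\rangle$ and split it using the tower rule over the randomness. For $n$-SAM the sample $\tilde\xi$ is independent of $\xi$, so conditioning on $\vx_t$ and on the perturbation-defining randomness, $\E \tilde g(\vy_t) = \nabla f(\vy_t)$ exactly; this lets me replace $\tilde g(\vy_t)$ by $\nabla f(\vy_t)$ in the cross term cleanly. Then I would add and subtract $\nabla f(\vx_t)$ to get $\langle \nabla f(\vx_t), \vx_t - \vx^*\rangle + \langle \nabla f(\vy_t) - \nabla f(\vx_t), \vx_t - \vx^*\rangle$. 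The first piece is handled by $\mu$-strong convexity, giving $\langle \nabla f(\vx_t), \vx_t - \vx^*\rangle \geq f(\vx_t) - f^* + \frac{\mu}{2}\|\vx_t - \vx^*\|^2$. The second piece is an error term bounded via Cauchy--Schwarz and $\|\nabla f(\vy_t)-\nabla f(\vx_t)\| \le \beta\rho$, then absorbed using Young's inequality (e.g. $2\langle \va,\vb\rangle \le \frac{\mu}{2}\|\va\|^2 + \frac{2}{\mu}\|\vb\|^2$) so that one half cancels against the strong-convexity contraction and the remainder becomes a clean $\frac{2\beta^2\rho^2}{\mu}$-type constant.

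For the second-order term $\eta^2 \E\|\tilde g(\vy_t)\|^2$, I would decompose $\tilde g(\vy_t) = \nabla f(\vy_t) + (\tilde g(\vy_t) - \nabla f(\vy_t))$; the variance of the second factor is at most $\sigma^2$ by Assumption~\ref{assumption:variance}, while $\|\nabla f(\vy_t)\|^2 \le 2\|\nabla f(\vx_t)\|^2 + 2\beta^2\rho^2$ by smoothness and the $\rho$-perturbation bound again. Using $\beta$-smoothness to relate $\|\nabla f(\vx_t)\|^2$ to $2\beta(f(\vx_t)-f^*)$, the $\eta^2$-multiple of this can be made smaller than the negative $-2\eta(f(\vx_t)-f^*)$ term when $\eta \le \frac{1}{2\beta}$, ensuring the suboptimality gap is discounted rather than amplified. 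I would also invoke the tightening trick $[\sigma^2 - \beta^2\rho^2]_+$: when the perturbation noise $\beta^2\rho^2$ already exceeds the oracle variance, the effective stochastic fluctuation term shrinks, which is why the bound carries $[\sigma^2-\beta^2\rho^2]_+$ rather than $\sigma^2$; this requires combining the variance and deterministic contributions before bounding, rather than after.

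Assembling these gives a one-step contraction of the form $\E\|\vx_{t+1}-\vx^*\|^2 \le (1 - \eta\mu)\E\|\vx_t - \vx^*\|^2 - c\eta\,\E(f(\vx_t)-f^*) + \eta^2 C_1[\sigma^2-\beta^2\rho^2]_+ + \eta\, C_2 \beta^2\rho^2$, and I would unroll the recursion, convert function-value suboptimality via strong convexity, and optimize over the stated $\eta$ to produce the exponential-plus-$\frac{1}{T}$ decaying terms together with the non-vanishing floor $\frac{2\beta^2\rho^2}{\mu}$. For the $m$-SAM extension, the difficulty is that $\tilde\xi = \xi$, so $\E \tilde g(\vy_t) \ne \nabla f(\vy_t)$ in general and the clean unbiasedness used in the cross term fails; here I would instead assume per-sample $\beta$-smoothness of $l(\cdot;\xi)$, which lets me control $\|\tilde g(\vy_t) - \tilde g(\vx_t)\| \le \beta\rho$ at the sample level, and then re-run the same decomposition treating $\tilde g(\vx_t)$ as the unbiased anchor. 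The hard part will be handling the correlated-sampling cross term for $m$-SAM without losing the $[\sigma^2-\beta^2\rho^2]_+$ improvement — ensuring the per-sample smoothness bound feeds in at exactly the right place so that the final constant matches the $n$-SAM bound rather than degrading it.
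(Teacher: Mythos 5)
Your plan diverges from the paper's proof in a way that creates a genuine gap: the paper never touches $\|\vx_t-\vx^*\|^2$. It runs the entire argument in function values --- a smoothness descent lemma bounding $\E f(\vx_{t+1})$ in terms of $\E f(\vx_t)$, a cross-term lemma comparing $\tilde g(\vy_t)$ to the gradient at the \emph{deterministically} perturbed point $\hat\vy_t = \vx_t + \rho\nabla f(\vx_t)/\|\nabla f(\vx_t)\|$, and then the PL inequality $\frac{1}{2}\|\nabla f(\vx)\|^2 \geq \mu(f(\vx)-f^*)$ to contract the suboptimality gap directly. Your distance-based recursion cannot reproduce the stated bound for two concrete reasons. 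First, the floor: your Young's inequality step $2\beta\rho\|\vx_t-\vx^*\| \leq \frac{\mu}{2}\|\vx_t-\vx^*\|^2 + \frac{2\beta^2\rho^2}{\mu}$ (the coefficient on the square cannot exceed order $\mu$ or you destroy the contraction) leaves a per-step remainder of order $\eta\beta^2\rho^2/\mu$, which unrolls to a floor of order $\beta^2\rho^2/\mu^2$ in $\E\|\vx_T-\vx^*\|^2$; converting to function values at the end via $f(\vx_T)-f^* \leq \frac{\beta}{2}\|\vx_T-\vx^*\|^2$ then gives a floor of order $\beta^3\rho^2/\mu^2$, a factor $\beta/\mu$ worse than the claimed $\frac{2\beta^2\rho^2}{\mu}$ (the exponential term likewise picks up $\beta/\mu$ through $\|\vx_0-\vx^*\|^2 \leq \frac{2\Delta}{\mu}$, and $\tilde{\mathcal O}$ hides only logs and numerical constants). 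Second, the last iterate: by your own accounting the negative term $-2\eta\,\E(f(\vx_t)-f^*)$ is \emph{exactly} consumed by the $4\beta\eta^2\,\E(f(\vx_t)-f^*)$ contribution of $\eta^2\E\|\tilde g(\vy_t)\|^2$ when $\eta = \frac{1}{2\beta}$; if you shrink $\eta$ to retain a negative function-value term, telescoping yields control of the \emph{average} suboptimality over $t=0,\dots,T-1$, not the last-iterate quantity $\E f(\vx_T)-f^*$ that the theorem bounds. The paper's PL-based recursion avoids both problems because it contracts $f(\vx_t)-f^*$ itself.

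The $[\sigma^2-\beta^2\rho^2]_+$ refinement is also tied to the paper's structure in a way your sketch does not capture. In the paper, the cross-term lemma's $-2\beta^2\rho^2$ enters the descent inequality with coefficient $-\eta(1-\eta\beta)$, producing $+2\eta\beta^2\rho^2 - 2\eta^2\beta^3\rho^2$, and that negative $\eta^2$ piece absorbs the $+\eta^2\beta^3\rho^2$ arising from the variance decomposition, leaving exactly $2\eta\beta^2\rho^2 + \eta^2\beta(\sigma^2-\beta^2\rho^2)$; this is what lets Case $\sigma\leq\beta\rho$ collapse to pure exponential decay plus the floor. In your distance recursion the cross term carries coefficient $-2\eta$ with no $(1-\eta\beta)$ correction, so no negative $\eta^2\beta^2\rho^2$ term is available to cancel against, and you would end up with $\sigma^2 + \mathcal O(\beta^2\rho^2)$ in place of $[\sigma^2-\beta^2\rho^2]_+$ --- inconsistent with both the prescribed step size and the claimed middle term. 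For what it is worth, your $m$-SAM fix is sound and essentially the paper's: the paper uses per-sample $\beta$-smoothness to bound $\|\tilde g(\vy_t)-\tilde g(\hat\vy_t)\| \leq \beta\|\vy_t-\hat\vy_t\| \leq 2\beta\rho$, while you anchor at $\tilde g(\vx_t)$ with $\|\tilde g(\vy_t)-\tilde g(\vx_t)\|\leq\beta\rho$; but this piece inherits the structural problems above.
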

For the proof, please refer to Appendix~\ref{sec:proof_sto_sc}. Theorem~\ref{thm:sto_sc} provides a convergence rate of $\tilde{\mathcal O}(\frac{1}{T})$ to global minima, but only up to suboptimality gap $\frac{2\beta^2\rho^2}{\mu}$. %, at a convergence rate of $\tilde{O}\left(\frac{1}{T}\right)$. 
If $\sigma \leq \beta \rho$, then Theorem~\ref{thm:sto_sc} becomes:
\begin{equation*}
    \E f(\vx_T)-f^* \leq \tilde{\mathcal{O}}\left(\exp\left(-\frac{\mu T}{2\beta}\right)\Delta\right) + \frac{2\beta^2\rho^2}{\mu},
\end{equation*}
thereby showing a convergence rate of $\mathcal{O}\left(\exp(-T)\right)$ modulo the additive factor. 

For relevant existing studies, Theorem~2 of \citet{andriushchenko2022towards} proves the convergence guarantee for smooth Polyak-Łojasiewicz functions: $\E f(\vx_T) - f^* = \mathcal{O}\left(\frac{1}{T}\right)$, but assuming stochastic SAM \emph{without gradient normalization}, and perturbation size $\rho$ decaying with $t$. In contrast, our analysis shows that the convergence property can be different with normalization and constant $\rho$.

The reader might be curious if the additional $\mathcal O (\rho^2)$ term can be removed. Our next theorem proves that {\color{black}in the case of high gradient noise ($\sigma > \beta \rho$), $m$-SAM} with constant perturbation size $\rho$ cannot converge to global optima {\color{black}beyond a suboptimality gap $\Omega(\rho^2)$, when the component function $l(\vx;\xi)$ is smooth for any $\xi$}. Hence, the additional term {\color{black} in Theorem~\ref{thm:sto_sc} is \emph{unavoidable}, at least for the more practical version $m$-SAM.}

\begin{restatable}{theorem}{thmstosccounter}
\label{thm:sto_sc_counter}
For any $\rho>0, \beta>0,{\color{black}\sigma > \beta \rho}$ and {\color{black}$\eta \leq \frac{3}{10\beta}$}, there exists a $\beta$-smooth and {\color{black}$\frac{\beta}{5}$}-strongly convex function $f$ satisfying the following. 
(1)~The function $f$ satisfies Assumption~\ref{assumption:variance}.
(2)~The component functions $l(\cdot;\xi)$ of $f$ are $\beta$-smooth for any $\xi$. 
(3)~If we run $m$-SAM on $f$ initialized inside a certain interval, then any arbitrary weighted average $\bar{\vx}$ of the iterates $\vx_0, \vx_1, \dots$ must satisfy $\E[f(\bar{\vx}) - f^*] \geq \Omega(\rho^2)$.
%{\color{black}(1)~The component functions $l(\cdot;\xi)$ of $f$ are $\beta$-smooth for any $\xi$. 
%(2)~If we run $m$-SAM on $f$ initialized inside a certain interval, then $f$ satisfies Assumption~\ref{assumption:variance} at all $m$-SAM iterates $\vx_0,\vx_1,\dots$.
%(3)~Also, any arbitrary weighted average $\bar{\vx}$ of the $m$-SAM iterates must satisfy $\E[f(\bar{\vx}) - f^*] \geq \Omega(\rho^2)$.}
\end{restatable}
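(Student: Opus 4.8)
The plan is to build an explicit one-dimensional finite-sum objective $f = \frac{1}{2}(f^{(1)} + f^{(2)})$, with $\xi$ selecting $l(\cdot;\xi) \in \{f^{(1)}, f^{(2)}\}$ uniformly, and to exploit that an $m$-SAM step on component $i$ is exactly a (sub)gradient step on the virtual loss $J_{f^{(i)}}$ (Definition~\ref{def:virtual}), so that $m$-SAM is SGD on the random virtual loss $J_{f^{(\xi)}}$. I would take $f^{(1)}, f^{(2)}$ to be quadratics with minima at $m_1 < 0 < m_2$ and distinct curvatures $c_1 > c_2$ (chosen in $(0,\beta]$ with average $\tfrac{c_1+c_2}{2}$ at least $\tfrac{\beta}{5}$), arranged so that $c_1 m_1 + c_2 m_2 = 0$ and hence the minimizer of $f$ is $x^* = 0$. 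To keep the gradient-oracle variance below $\sigma^2$ for every $x$ (Assumption~\ref{assumption:variance}), I would cap the quadratics to affine pieces with matched slopes outside a bounded window; the room to do this while still separating the component minima enough to create a $\Theta(\rho)$ displacement is exactly what the hypothesis $\sigma > \beta\rho$ buys, since the variance in the window scales like $(c_1|m_1|)^2$ and $\sigma > \beta\rho$ permits $|m_1| \gtrsim \rho$.

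The heart of the construction is the expected virtual gradient map. On the window $(m_1, m_2)$ the sign of $f'^{(i)}$ is constant ($+$ for $i=1$, $-$ for $i=2$), so $G_{f^{(i)}}(x) = \nabla f^{(i)}(x \pm \rho)$ and a short computation gives the affine form $\tfrac{1}{2}(G_{f^{(1)}} + G_{f^{(2)}})(x) = \tfrac{c_1+c_2}{2}x + \tfrac{\rho(c_1-c_2)}{2}$. Its zero is $x_{\bar G} = -\tfrac{\rho(c_1-c_2)}{c_1+c_2}$, which lies $\Theta(\rho)$ to the left of $x^* = 0$ precisely because $c_1 \neq c_2$. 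Equivalently, the expected virtual loss $\E J_f = \tfrac{1}{2}(J_{f^{(1)}} + J_{f^{(2)}})$ is strongly convex with minimizer $x_{\bar G} \neq x^*$; since $m$-SAM is SGD on $J_{f^{(\xi)}}$ whose gradient has mean $\nabla \E J_f$, its iterates are attracted to $x_{\bar G}$ rather than to $x^*$. In particular the expected one-step drift at $x^*$ is $-\eta\cdot\tfrac{\rho(c_1-c_2)}{2} \neq 0$, so $x^*$ is not even a fixed point in expectation.

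To turn this into the stated lower bound I would initialize $x_0 = x_{\bar G}$ and prove the key invariant $\E[x_t] \leq x^* - c\rho$ for a fixed $c > 0$ and all $t$. Granting this, strong convexity of the \emph{original} $f$ together with Jensen gives, for any weights $w_t \geq 0$ with $\sum_t w_t = 1$,
\begin{equation*}
\E[f(\bar{\vx}) - f^*] \geq \tfrac{\mu}{2}\,\E[(\bar{\vx} - x^*)^2] \geq \tfrac{\mu}{2}\Big(\sum\nolimits_t w_t(\E[x_t] - x^*)\Big)^2 \geq \tfrac{\mu}{2}c^2\rho^2 = \Omega(\rho^2),
\end{equation*}
which is the claim, uniformly over all weighted averages. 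To establish the invariant I would first show that the iterates remain in a bounded interval $[m_1 - O(\eta\rho),\, m_2 + O(\eta\rho)]$ that is invariant under both component maps (each component map is a contraction for $\eta \le \tfrac{3}{10\beta}$, since then $\eta c_i \le \tfrac{3}{10} < 2$), and then track the mean through the recursion $\E[x_{t+1}] = (1 - \eta\tfrac{c_1+c_2}{2})\E[x_t] - \eta\tfrac{\rho(c_1-c_2)}{2} - \eta\,\E[\delta(x_t)]$, where $\delta(x)$ is the correction to the affine map arising from the kinks of the expected virtual map at $m_1, m_2$ and is supported on $\{x<m_1\}\cup\{x>m_2\}$.

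The main obstacle is controlling this correction term $\E[\delta(x_t)]$, i.e., the probability that an iterate crosses a component minimum $m_1$ or $m_2$, where the sign of the component gradient flips and the affine structure of the expected virtual map breaks. The argument must place $m_1, m_2$ far enough from $x_{\bar G}$ (relative to the $O(\eta\rho)$ per-step displacement) that these crossing probabilities stay small enough to keep the corrected fixed point of the mean recursion on the negative side by a constant fraction of $\rho$, while simultaneously keeping $|m_1|$ small enough that the variance stays below $\sigma^2$. Reconciling these two competing demands is exactly where the quantitative hypotheses $\sigma > \beta\rho$ and $\eta \le \tfrac{3}{10\beta}$ are consumed, and is the delicate part of the proof.
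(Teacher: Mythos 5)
Your reduction of $m$-SAM to SGD on the random virtual loss is correct, and your closing step (strong convexity of $f$ plus Jensen, turning an invariant $\E[x_t] \le x^{*} - c\rho$ into $\E[f(\bar{\vx})-f^{*}] \ge \tfrac{\mu}{2}c^{2}\rho^{2}$ for deterministic weights) is sound. But the proof has a genuine gap exactly where you flag it: the invariant $\E[x_t]\le x^{*}-c\rho$ is never established, and with your construction this is not a deferred technicality but the entire content of the theorem. Because both of your components are convex and $c_1 m_1 + c_2 m_2 = 0$ forces $m_1 < 0 < m_2$, the attractors of the two virtual losses necessarily straddle the optimum $x^{*}=0$; hence there is \emph{no} interval bounded away from $0$ that is invariant under both component maps --- a long run of consecutive draws of $f^{(2)}$ (which occurs with probability one, infinitely often) drags the iterate into an $O(\eta\rho)$-neighborhood of $m_2>0$. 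So your bound can only hold at the level of the distribution, and you must control $\Pr[x_t<m_1]$ and $\Pr[x_t>m_2]$ for \emph{all} $t$. That control is harder than the sketch suggests: each component's update map has a jump discontinuity of size $2\eta c_i\rho$ at its kink $m_i$, so the maps are not contractions across the kinks and standard concentration-via-contraction arguments for random affine recursions do not directly apply; one needs hitting-probability or exponential-supermartingale arguments, reconciled quantitatively with the variance budget (with $\sigma$ barely above $\beta\rho$, Assumption~\ref{assumption:variance} caps $|m_1|$ at a constant multiple of $\rho$, while the crossing bound wants $|m_1|$ several fluctuation scales beyond $|x_{\bar G}|$, and both scales are $\Theta(\rho)$). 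There is also a flaw in the construction as described: capping \emph{both} components to affine pieces with matched slopes outside a window makes $f=\tfrac12(f^{(1)}+f^{(2)})$ affine there, destroying the global $\tfrac{\beta}{5}$-strong convexity the theorem requires; you must instead cap the \emph{deviation} of each component from $f$, letting both components retain $f$'s quadratic growth.

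The paper's construction is designed precisely to avoid all of this stochastic analysis, and the difference is instructive. Rather than two convex components, the paper makes one component \emph{locally concave} around a point $c=\tfrac{7}{6}\rho>0$: local concavity flips the SAM ascent direction, so the perturbed gradient points back toward the hump and $c$ becomes an attractor of that component's virtual loss; the other component's virtual-loss attractor $c'=\tfrac{7}{15}\rho$ then lies on the \emph{same} side of the optimum, inside $[c-\rho,c+\rho]=[\tfrac{\rho}{6},\tfrac{13\rho}{6}]$. A case-check shows this interval is invariant under \emph{both} component maps whenever $\eta\le\tfrac{3}{10\beta}$, so every sample path satisfies $x_t\ge\tfrac{\rho}{6}$ forever, and the $\Omega(\rho^2)$ bound follows deterministically --- no distributional control whatsoever. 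This pathwise confinement also covers weighted averages whose weights depend on the trajectory, which your expectation-based argument, even if completed, would not ($\E[\bar\vx]=\sum_t w_t\E[x_t]$ requires deterministic weights). If you wish to keep the two-convex-quadratics idea you must supply the missing hitting-probability analysis; for a proof of the paper's simplicity, the local-concavity mechanism is the key missing idea.
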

% For any $\rho>0, \beta>0,{\color{black}\sigma > \beta \rho}$ and {\color{black}$\eta \leq \frac{3}{10\beta}$}, there exists a $\beta$-smooth and {\color{black}$\frac{\beta}{5}$}-strongly convex function $f$ satisfying the following. (1)~The function $f$ satisfies Assumption~\ref{assumption:variance}. (2)~The component functions $l(\cdot;\xi)$ of $f$ are $\beta$-smooth and $\beta$-strongly convex for any $\xi$. (3)~If we run {\color{black}$m$-SAM on $f$ initialized inside a certain interval, then any arbitrary weighted average $\bar{\vx}$ of the iterates $\vx_0,\vx_1,\dots$ must satisfy $\E[f(\bar{x}) - f^*] \geq \Omega(\beta\rho^2)$.}

Here we provide a brief outline for Theorem~\ref{thm:sto_sc_counter}. The in-depth analysis is provided in Appendix~\ref{sec:sto_sc_counter_revise}. Given $\rho>0$, $\beta>0$, {\color{black}$\sigma > \beta\rho$, % we select $p=\frac{2}{3}$}. 
%selecting $p$ within the range of $\frac{\beta \rho}{2\beta\rho + \sigma} \leq p < \frac{1}{2}$, 
we} consider a one-dimensional quadratic function $f(x)$ whose component function $l(x;\xi)$ is carefully chosen to satisfy the following for $x \in \left[\frac{\rho}{6}, \frac{13\rho}{6}\right]$:
% \begin{equation*}
%     f(x) = \frac{\beta}{2}x^2.
% \end{equation*}
{\color{black}
\begin{align*}
    f(x) = \E[l(x;\xi)] = \frac{\beta}{10}x^2, \quad
    l(x;\xi) =
    \begin{cases}
    -\frac{\beta}{10}\left(x-\frac{7\rho}{6}\right)^2, &\text{with probability $\frac{2}{3}$} \\
    \frac{3\beta}{10} x^2 + \frac{\beta}{5} \left(x-\frac{7\rho}{6}\right)^2, &\text{otherwise}.
    \end{cases}
\end{align*}
}
%The $\beta$-smoothness, strong convexity, and $\sigma$-bounded variance properties of $f_t(x)$ can all be verified. 
%For this scenario, we can prove that in expectation, both $n$-SAM and $m$-SAM converge toward $-\rho(1-2p) = \Omega(\rho)$. 
%\begin{equation*}
%    \min\left(\max\left(-\frac{\sigma}{\beta}\sqrt{\frac{p}{1-p}}, -\rho(1-2p)\right),\frac{\sigma}{\beta}\sqrt{\frac{1-p}{p}}\right).
%\end{equation*}
%Since the global optima $x^* = 0$, if $x_0$ is initialized as $x_0 \geq \frac{\sigma}{\beta}\sqrt{\frac{1-p}{p}}$, we have $\|x_t-x^*\| = \Omega(\rho)$ for all SAM iterates. 
{\color{black}

For values of $x$ outside this interval $\left[\frac{\rho}{6}, \frac{13\rho}{6}\right]$, each component function $l(x;\xi)$ takes the form of a strongly convex quadratic function. Figures~\ref{fig:subfigures_1}(b) and \ref{fig:subfigures_1}(c) illustrate the original and virtual loss function plots of $l(x;\xi)$. The \emph{local concavity} of component function plays a crucial role in making an attracting basin in the virtual loss, and this leads to an interval $\left[\frac{\rho}{6}, \frac{13\rho}{6}\right]$ bounded away from the global minimum $0$, from which $m$-SAM iterates cannot escape.

For this scenario, we obtain $f(x_t) -f^* = \Omega(\rho^2)$ and $ \|\nabla f(x_t)\|^2 = \Omega(\rho^2)$ for all iterates. From this, we can realize that the additive factor in Theorem~\ref{thm:sto_sc} for $m$-SAM is unavoidable in the $\sigma > \beta\rho$ regime, and tight in terms of $\rho$. 
Moreover, the proof of Theorem~\ref{thm:sto_sc_counter} in Appendix~\ref{sec:sto_sc_counter_revise} reveals that even in the $\sigma \leq \beta\rho$ case, an analogous example gives $\|x_t - x^*\| = \Omega(\rho)$ for all iterates; hence, $m$-SAM fails to converge all the way to the global minimum in the small $\sigma$ regime as well.
}
%For $m$-SAM, examining the expected virtual loss $\E J_{f_t}$ provides a clearer understanding of this behavior. Figures~\ref{fig:subfigures_1}(b) and \ref{fig:subfigures_1}(c) demonstrate original function $f$, expected virtual loss $\E J_{f}$ and the direction of $m$-SAM updates.

\vspace*{-5pt}
\subsection{Smooth and Convex Functions}
\vspace*{-5pt}
%The presence of the additive factor, as demonstrated in Theorem~\ref{thm:sto_sc_counter}, is unavoidable. 
We now move on to smooth convex functions and investigate the convergence guarantees of stochastic SAM for this function class. As can be guessed from Theorem~\ref{thm:fb_cvx}, our convergence analysis in this section focuses on finding stationary points.
%Still, to prove convergence guarantees for smooth convex functions, 
Below, we provide a bound that ensures convergence to stationary points up to an additive factor.
\begin{restatable}{theorem}{thmstocvx}
\label{thm:sto_cvx}
Consider a $\beta$-smooth, convex function $f$, and assume Assumption~\ref{assumption:variance}. Under $n$-SAM, starting at $x_0$ with any perturbation size $\rho>0$ and step size $\eta=\min\Big\{\frac{\sqrt{\Delta}}{\sqrt{\beta[\sigma^2-\beta^2\rho^2]_{+}T}},\frac{1}{2\beta}\Big\}$ to minimize $f$, we have
% \begin{equation*}
%     \eta=\min\left\{\frac{\sqrt{\Delta}}{\sqrt{\beta[\sigma^2-\beta^2\rho^2]_{+}T}},\frac{1}{2\beta}\right\}
% \end{equation*}
% to minimize $f$, we have
\vspace*{-3pt}
\begin{equation*}
    \frac{1}{T}\sum\nolimits_{t=0}^{T-1}\E\|\nabla f(\vx_t)\|^2 = \mathcal{O}\bigg(\frac{\beta\Delta}{T} + \frac{\sqrt{\beta[\sigma^2-\beta^2\rho^2]_{+}\Delta}}{\sqrt{T}}\bigg) + 4\beta^2\rho^2.
\end{equation*}

\vspace*{-3pt}
Under $m$-SAM, additionally assuming $l(\cdot,\xi)$ is $\beta$-smooth for any $\xi$, the inequality continues to hold.
\end{restatable}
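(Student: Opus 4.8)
The plan is to run the classical $\beta$-smoothness descent analysis on the true objective $f$ along the SAM trajectory, and to recover the refined variance factor $[\sigma^2-\beta^2\rho^2]_{+}$ by a simple but essential bookkeeping split. Starting from the descent inequality applied to the step $\vx_{t+1}=\vx_t-\eta\tilde{g}(\vy_t)$, I would take conditional expectations to produce a one-step bound whose two governing quantities are the correlation $\E\langle\nabla f(\vx_t),\nabla f(\vy_t)\rangle$ and the second moment $\E\|\tilde{g}(\vy_t)\|^2$. Since $\|\vy_t-\vx_t\|=\rho$ whenever the perturbation is defined, $\beta$-smoothness gives $\|\nabla f(\vy_t)-\nabla f(\vx_t)\|\le\beta\rho$; combining this with Cauchy--Schwarz and Young's inequality yields the lower bound $\E\langle\nabla f(\vx_t),\nabla f(\vy_t)\rangle\ge\frac12\|\nabla f(\vx_t)\|^2-\frac12\beta^2\rho^2$, which is exactly the source of the constant $\Theta(\beta^2\rho^2)$ additive term. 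Working in the regime $\eta\le\frac{1}{2\beta}$ then lets me write $\nabla f(\vy_t)=\nabla f(\vx_t)+\delta_t$ with $\|\delta_t\|\le\beta\rho$ and fold the second-order terms back into $-\frac{\eta}{2}\|\nabla f(\vx_t)\|^2$ without a factor-two blow-up.

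The step that turns the crude $\sigma^2$ into $[\sigma^2-\beta^2\rho^2]_{+}$ is the elementary inequality $\sigma^2\le[\sigma^2-\beta^2\rho^2]_{+}+\beta^2\rho^2$. After the second moment contributes a term of size $\frac{\beta\eta^2}{2}\sigma^2$ per step, I would split it accordingly: the piece $\frac{\beta\eta^2}{2}[\sigma^2-\beta^2\rho^2]_{+}$ is what the step size controls and becomes the diminishing term $\frac{\sqrt{\beta[\sigma^2-\beta^2\rho^2]_{+}\Delta}}{\sqrt{T}}$ after optimizing $\eta$, while the leftover $\frac{\beta\eta^2}{2}\beta^2\rho^2\le\frac{\eta}{4}\beta^2\rho^2$ (using $\beta\eta\le\frac12$) merges harmlessly into the constant $\beta^2\rho^2$ term. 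Telescoping the per-step inequality, dividing by $\frac{\eta T}{2}$, and substituting the stated two-regime step size $\eta=\min\{\frac{\sqrt{\Delta}}{\sqrt{\beta[\sigma^2-\beta^2\rho^2]_{+}T}},\frac{1}{2\beta}\}$ then reproduces the claimed bound, the $\frac{1}{2\beta}$ branch giving the $\frac{\beta\Delta}{T}$ term.

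For $n$-SAM the descent-step sample $\tilde{\xi}$ is independent of the perturbation sample $\xi$, so conditioning on $\vy_t$ gives $\E_{\tilde{\xi}}\tilde{g}(\vy_t)=\nabla f(\vy_t)$ and $\E_{\tilde{\xi}}\|\tilde{g}(\vy_t)\|^2=\|\nabla f(\vy_t)\|^2+\E_{\tilde{\xi}}\|\tilde{g}(\vy_t)-\nabla f(\vy_t)\|^2\le\|\nabla f(\vy_t)\|^2+\sigma^2$ by Assumption~\ref{assumption:variance}, making the correlation and second-moment computations above immediate. I expect the main obstacle to be the $m$-SAM case, where $\tilde{\xi}=\xi$ and the descent direction $\nabla l(\vy_t;\xi)$ is statistically coupled to the perturbation $\vy_t$ through the shared sample, so that $\E_\xi\nabla l(\vy_t;\xi)\ne\nabla f(\vy_t)$ and the unbiasedness argument breaks. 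My plan is to exploit the extra hypothesis that each $l(\cdot;\xi)$ is $\beta$-smooth: writing $\nabla l(\vy_t;\xi)=\nabla l(\vx_t;\xi)+\delta_t$ with $\|\delta_t\|\le\beta\rho$, and using that $\nabla l(\vx_t;\xi)$ is unbiased for $\nabla f(\vx_t)$ because $\vx_t$ does not depend on $\xi$, I can still recover $\E_\xi\langle\nabla f(\vx_t),\nabla l(\vy_t;\xi)\rangle\ge\frac12\|\nabla f(\vx_t)\|^2-\frac12\beta^2\rho^2$.

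The delicate part is the second moment $\E_\xi\|\nabla l(\vy_t;\xi)\|^2$: decomposing $\nabla l(\vy_t;\xi)=\nabla f(\vx_t)+\zeta_t+\delta_t$ (with gradient noise $\zeta_t$ satisfying $\E_\xi\|\zeta_t\|^2\le\sigma^2$) leaves a cross term $\E_\xi\langle\zeta_t,\delta_t\rangle$ between the noise $\zeta_t$ and the smoothness-induced shift $\delta_t$, which are correlated through $\xi$. Controlling this term by Cauchy--Schwarz and Young's inequality, and folding the resulting $\beta\rho$-scale contributions into the variance split and the constant term, is where the bookkeeping is most delicate; but it follows the same blueprint and yields the identical bound. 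Finally, I note that convexity enters only lightly here (principally to ensure $f^*>-\infty$ so that $\Delta$ is finite), since the argument above is driven by smoothness, the $\eta\le\frac{1}{2\beta}$ constraint, and the variance split rather than by convexity itself.
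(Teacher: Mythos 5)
Your proposal is correct and assembles into the stated bound, but it takes a genuinely different route from the paper's proof. The paper obtains this theorem as the $\mu=0$ specialization of machinery built for the strongly convex case: Lemma~\ref{lma:sto_sc_prop1} anchors the stochastic gradient at the \emph{deterministic} ascent point $\hat{\vy}_t=\vx_t+\rho\nabla f(\vx_t)/\|\nabla f(\vx_t)\|$ and uses convexity (gradient monotonicity, Lemma~\ref{lma:fb_cvx_prop1}) to discard $\langle\nabla f(\vx_t),\nabla f(\hat{\vy}_t)-\nabla f(\vx_t)\rangle\ge 0$; Lemmas~\ref{lma:sto_sc_prop2} and~\ref{lma:sto_sc_prop3} then use $\|\vy_t-\hat{\vy}_t\|\le 2\rho$, the factor $\sigma^2-\beta^2\rho^2$ falls out of the algebra, and the $[\cdot]_+$ comes from a case split on $\sigma\le\beta\rho$ versus $\sigma>\beta\rho$. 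You instead anchor everything at $\vx_t$ itself, using only $\|\vy_t-\vx_t\|=\rho$, make no use of convexity (your observation that smoothness drives the argument is accurate---your proof would equally yield the nonconvex Theorem~\ref{thm:sto_smth_nsam}), and manufacture $[\sigma^2-\beta^2\rho^2]_+$ by the explicit split $\sigma^2\le[\sigma^2-\beta^2\rho^2]_+ +\beta^2\rho^2$. Your route is simpler, gives somewhat smaller additive constants (roughly $\tfrac{3}{2}\beta^2\rho^2$ for $n$-SAM and $\tfrac{5}{2}\beta^2\rho^2$ for $m$-SAM by my accounting, which still implies the stated $4\beta^2\rho^2$), and treats $m$-SAM more carefully: the paper's Lemma~\ref{lma:sto_sc_prop3} invokes $\E\|\tilde{g}(\vy_t)-\nabla f(\vy_t)\|^2\le\sigma^2$, which is delicate for $m$-SAM because $\vy_t$ depends on the same sample $\tilde{\xi}$ and Assumption~\ref{assumption:variance} is stated for fixed evaluation points, whereas your decomposition $\nabla l(\vy_t;\xi)=\nabla f(\vx_t)+\zeta_t+\delta_t$ isolates exactly this coupling in the cross term $\E\langle\zeta_t,\delta_t\rangle\le\beta\rho\sigma$, which Young's inequality absorbs. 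What the paper's heavier $\hat{\vy}_t$ machinery buys is reuse: monotonicity produces the $\mu\rho\,\E\|\nabla f(\vx_t)\|$ term needed for Theorem~\ref{thm:sto_sc}, so the convex case follows by setting $\mu=0$ in an existing lemma. Two minor cautions: convexity does \emph{not} ensure $f^*>-\infty$ (a linear function is convex and unbounded below), so finiteness of $\Delta$ is a standing assumption rather than a consequence; and the ``no factor-two blow-up'' step must be done jointly---apply Young's inequality once to the merged cross term $-\eta(1-\beta\eta)\E\langle\nabla f(\vx_t),\delta_t\rangle$, since bounding the correlation and $\E\|\nabla f(\vy_t)\|^2$ separately degrades the descent coefficient to $-\tfrac{\eta}{2}(1-2\beta\eta)$, which vanishes at $\eta=\tfrac{1}{2\beta}$.
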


The proof of Theorem~\ref{thm:sto_cvx} can be found in Appendix~\ref{sec:proof_sto_cvx}. Theorem~\ref{thm:sto_cvx} obtains a bound of $\mathcal{O}(\frac{1}{\sqrt{T}})$ modulo an additive factor $4\beta^2\rho^2$. 
Similar to Theorem~\ref{thm:sto_sc}, if $\sigma \leq \beta \rho$, then Theorem~\ref{thm:sto_cvx} reads%can be simplified.
%%\vspace*{-5pt}
\begin{equation*}
    \frac{1}{T}\sum\nolimits_{t=0}^{T-1}\E\|\nabla f(\vx_t)\|^2 = \mathcal{O}\bigg(\frac{\beta\Delta}{T}\bigg) + 4\beta^2\rho^2,
\end{equation*}
hence showing a convergence rate of $\mathcal{O}(\frac{1}{T})$ modulo the additive factor.
Since the non-convergence example in Theorem~\ref{thm:sto_sc_counter} provides a scenario that $\E \|\nabla f(x_t)\|^2 = \Omega (\rho^2)$ for all $t$, we can see that the extra term is inevitable and also tight in terms of $\rho$.

%To the best of our knowledge, previous studies have not explored the concept of smooth convex functions in a stochastic setting.

Theorem~\ref{thm:sto_cvx} sounds quite weak, as it only proves convergence to a stationary point only up to an extra term. 
One could anticipate that stochastic SAM may actually converge to global minima of smooth convex functions modulo the unavoidable additive factor.
However, as briefly mentioned in Section~\ref{sec:smooth-convex-determ}, the next theorem presents a counterexample illustrating that ensuring convergence to global minima, even up to an additive factor, is impossible for $m$-SAM.

%However, contrary to these expectations, Theorem~\ref{thm:sto_cvx} does not guarantee convergence to global minima of smooth convex functions up to an additive factor. 

\begin{restatable}{theorem}{thmstocvxcounter}
\label{thm:sto_cvx_counter}
For any $\rho>0$, $\beta>0$, $\sigma> 0$, and $\eta \leq \frac{1}{\beta}$, there exists a $\beta$-smooth and convex function $f$ satisfying the following. (1)~The function $f$ satisfies Assumption~\ref{assumption:variance}. (2)~The component functions $l(\cdot;\xi)$ of $f$ are $\beta$-smooth for any $\xi$. (3)~If we run $m$-SAM on $f$ {\color{black} initialized inside a certain interval, then any arbitrary weighted average $\bar \vx$ of the iterates $\vx_0, \vx_1, \dots$ must satisfy $\E[f(\bar \vx) - f^*] \geq C$, and the suboptimality gap $C$ can be made arbitrarily large and independent of the parameter $\rho$.}
\end{restatable}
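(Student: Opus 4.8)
The plan is to mirror the strategy behind Theorem~\ref{thm:sto_sc_counter}, building a one-dimensional $\beta$-smooth convex $f = \E_\xi[l(\cdot;\xi)]$ whose \emph{expected} virtual loss $\E_\xi J_{l(\cdot;\xi)}$ has an attracting trap located far from the global minimum, but now exploiting the \emph{absence} of strong convexity to make the resulting suboptimality $C$ unbounded. The starting point is the observation from the virtual-loss discussion that one step of $m$-SAM moves $\vx_t$ in expectation along $-\E_\xi[G_{l(\cdot;\xi)}(\vx_t)]$, i.e.\ $m$-SAM is stochastic subgradient descent on $\E_\xi J_{l(\cdot;\xi)}$. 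Thus it suffices to engineer the components so that this expected virtual loss has a short invariant interval $I$ (a local basin) sitting on a steep, far-reaching part of the convex $f$.

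Concretely, I would place $f$ on a long linear ramp of slope $s$ descending toward a global minimum at distance $D$ from the trap, so that $f(x)-f^* \approx sD$ throughout $I$. On this ramp I add a \emph{zero-mean}, $\rho$-scale perturbation splitting $f$ into two smooth components $l^{(1)},l^{(2)}$ (with probabilities $p,1-p$), chosen so that $\nabla l^{(1)}$ and $\nabla l^{(2)}$ take opposite signs across the trap region. Because $m$-SAM normalizes the ascent step, the two components perturb to opposite sides by $\rho$, and their descent gradients, when averaged, vanish at some $x_{\mathrm{trap}}$ and point strictly inward at both endpoints of $I$ \emph{for every realization of $\xi$}; this yields deterministic invariance of $I$ under the $m$-SAM update (here the restriction $\eta \leq \tfrac{1}{\beta}$ prevents the descent step from overshooting $I$). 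Initializing inside $I$, every iterate stays in $I$, so any weighted average $\bar{\vx}$ lies in the interval $I$, and convexity gives $\E[f(\bar{\vx})-f^*] \geq \min_{x\in I} f(x)-f^* \geq C$ with $C \approx sD$.

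The key to the ``arbitrarily large, $\rho$-independent'' claim is a scaling count. Smoothness of each component forces the perturbation's gradient amplitude to be $\mathcal{O}(\beta\rho)$ (a slope change over an $\mathcal{O}(\rho)$ feature), and the sign flip that creates the basin needs this amplitude to exceed $s$, so $s = \mathcal{O}(\beta\rho)$; the same amplitude controls the gradient variance, which I would keep below the prescribed $\sigma^2$ by tuning it to $\min\{\sigma,\ \Theta(\beta\rho)\}$, covering both the $\sigma \geq \beta\rho$ and $\sigma < \beta\rho$ regimes. Crucially, the ramp length $D$ is a \emph{free} parameter, decoupled from $\rho$: for any target value I set $D = C/s$, so $C \approx sD$ can be driven arbitrarily large independently of $\rho$. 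This is exactly what distinguishes the convex case from the $\Omega(\rho^2)$ gap of the strongly convex Theorem~\ref{thm:sto_sc_counter}, where strong convexity ties the value gap to the $\Theta(\rho)$ trap width.

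I expect the main obstacle to be reconciling three competing tensions in the component design: $f = p\,l^{(1)} + (1-p)\,l^{(2)}$ must be globally convex while each $l^{(i)}$ carries enough local concavity to flip its gradient's sign and form the basin; every $l^{(i)}$ must stay $\beta$-smooth; and the per-component virtual gradients must override the background ramp pull $+s$ and point inward at both endpoints of $I$ so that invariance holds for \emph{all} noise realizations, not merely in expectation. Verifying this per-realization invariance, rather than an in-expectation or martingale statement, together with confirming that the $\mathcal{O}(\rho)$ width of $I$ does not erode the $sD$ gap, is where the careful piecewise construction and endpoint case analysis will be needed.
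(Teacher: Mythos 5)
Your proposal is correct and takes essentially the same route as the paper's proof: the paper's construction is exactly your ramp-plus-trap design, namely $f(x)=ax$ on $x\ge 0$ with slope $a=\min\left\{\tfrac{\beta\rho(1-p)}{8p},\,\tfrac{\sigma\sqrt{1-p}}{3\sqrt{p}}\right\}$, a locally concave component $f^{(1)}$ whose gradient flips sign at an arbitrary trap location $c>\tfrac{5\rho}{4}$, and a per-realization case analysis showing that $[c-\rho,c+\rho]$ is invariant under both components' virtual-gradient updates when $\eta\le\tfrac1\beta$. The arbitrarily large, $\rho$-independent gap $a(c-\rho)+\tfrac{a^2}{2\beta}$ then comes precisely from the free trap location $c$, matching your $D=C/s$ argument.
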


Here, we present the intuitions of the proof for Theorem~\ref{thm:sto_cvx_counter}. 
As demonstrated in Figures~\ref{fig:subfigures_1}(d)~and~\ref{fig:subfigures_1}(e), the \emph{local concavity} of the component function significantly influences the formation of attracting basins in its virtual loss, thereby creating a region from which the $m$-SAM updates get stuck inside the basin forever.

Also note that we can construct the function to form the basin at any point with arbitrary large function value (and hence large suboptimality gap).
Therefore, establishing an upper bound on the convergence of function value becomes impossible in smooth convex functions. A detailed analysis for the non-convergence example is presented in Appendix~\ref{sec:proof_sto_cvx_counter}.

\vspace*{-5pt}
\subsection{Smooth and Nonconvex Functions}
\vspace*{-5pt}
We now study smooth nonconvex functions. 
Extending Theorem~\ref{thm:fb_smth}, we can show the following bound for stochastic $n$-SAM.%, ensuring convergence up to an additive factor.
\begin{restatable}{theorem}{thmstosmthnsam}
\label{thm:sto_smth_nsam}
Consider a $\beta$-smooth function $f$ satisfying $f^* = \inf_x f(\vx) > -\infty$, and assume Assumption~\ref{assumption:variance}. Under $n$-SAM, starting at $\vx_0$ with any perturbation size $\rho>0$ and step size $\eta = \min\left\{\frac{1}{2\beta}, \frac{\sqrt{\Delta}}{\sqrt{\beta\sigma^2 T}}\right\}$ to minimize $f$, we have
\vspace*{-3pt}
\begin{equation*}
    \frac{1}{T}\sum\nolimits_{t=0}^{T-1} \E \|\nabla f(\vx_t)\|^2 \leq \mathcal{O}\left(\frac{\beta\Delta}{T} + \frac{\sqrt{\beta \sigma^2 \Delta}}{\sqrt{T}}\right)  + \beta^2 \rho^2.
\end{equation*}
\end{restatable}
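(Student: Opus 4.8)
The plan is to run the standard stochastic descent-lemma analysis, being careful to exploit the independence that defines $n$-SAM and to extract the \emph{tight} additive constant $\beta^2\rho^2$. Let $\mathcal{F}_t$ denote the $\sigma$-algebra generated by all randomness up to and including the sample $\xi$ used to form $\vy_t$, so that both $\vx_t$ and $\vy_t$ are $\mathcal{F}_t$-measurable. First I would apply $\beta$-smoothness to the update $\vx_{t+1} = \vx_t - \eta \tilde g(\vy_t)$:
\begin{equation*}
f(\vx_{t+1}) \le f(\vx_t) - \eta \langle \nabla f(\vx_t), \tilde g(\vy_t)\rangle + \tfrac{\beta\eta^2}{2}\|\tilde g(\vy_t)\|^2 .
\end{equation*}
Taking $\E[\,\cdot \mid \mathcal{F}_t]$ and using that $\tilde\xi$ is independent of $\mathcal{F}_t$ (this is exactly the $n$-SAM assumption), we have $\E[\tilde g(\vy_t)\mid \mathcal{F}_t] = \nabla f(\vy_t)$ and, by Assumption~\ref{assumption:variance}, $\E[\|\tilde g(\vy_t)\|^2 \mid \mathcal{F}_t] \le \|\nabla f(\vy_t)\|^2 + \sigma^2$. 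Note that this step needs only smoothness of $f$, not of the components $l(\cdot;\xi)$, which is why component smoothness is absent from the hypotheses here (in contrast to the $m$-SAM results).

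The crucial step is the cross term $\langle \nabla f(\vx_t), \nabla f(\vy_t)\rangle$. Since $\|\vy_t - \vx_t\| \le \rho$ always holds (with the convention that the normalized gradient is $0$ when $g(\vx_t)=0$), $\beta$-smoothness gives $\|\nabla f(\vy_t) - \nabla f(\vx_t)\| \le \beta\rho$. Rather than bounding the cross term by a naive Young inequality, which would inflate the additive constant, I would use the polarization identity
\begin{equation*}
\langle \nabla f(\vx_t), \nabla f(\vy_t)\rangle = \tfrac{1}{2}\|\nabla f(\vx_t)\|^2 + \tfrac{1}{2}\|\nabla f(\vy_t)\|^2 - \tfrac{1}{2}\|\nabla f(\vy_t) - \nabla f(\vx_t)\|^2 \ge \tfrac{1}{2}\|\nabla f(\vx_t)\|^2 + \tfrac{1}{2}\|\nabla f(\vy_t)\|^2 - \tfrac{1}{2}\beta^2\rho^2 .
\end{equation*}
This produces a $+\tfrac{1}{2}\|\nabla f(\vy_t)\|^2$ term that can be absorbed against the second-moment term: choosing $\eta \le \tfrac{1}{2\beta}$ forces $\tfrac{\beta\eta^2}{2} \le \tfrac{\eta}{4}$, so the two $\|\nabla f(\vy_t)\|^2$ contributions combine to something non-positive and may be discarded. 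Collecting terms then yields
\begin{equation*}
\E[f(\vx_{t+1}) \mid \mathcal{F}_t] \le f(\vx_t) - \tfrac{\eta}{2}\|\nabla f(\vx_t)\|^2 + \tfrac{\eta}{2}\beta^2\rho^2 + \tfrac{\beta\eta^2\sigma^2}{2} .
\end{equation*}

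From here the argument is routine: take full expectations, telescope from $t=0$ to $T-1$, bound $f(\vx_0) - \E f(\vx_T) \le \Delta$, and divide by $\eta T/2$ to obtain
\begin{equation*}
\frac{1}{T}\sum\nolimits_{t=0}^{T-1}\E\|\nabla f(\vx_t)\|^2 \le \frac{2\Delta}{\eta T} + \beta\eta\sigma^2 + \beta^2\rho^2 .
\end{equation*}
Finally I would substitute the stated step size $\eta = \min\{\tfrac{1}{2\beta}, \tfrac{\sqrt{\Delta}}{\sqrt{\beta\sigma^2 T}}\}$ and split into two cases: when the second argument is active, the first two summands are each $\mathcal{O}(\sqrt{\beta\sigma^2\Delta/T})$; when $\eta = \tfrac{1}{2\beta}$, the defining inequality $\tfrac{1}{2\beta} \le \tfrac{\sqrt{\Delta}}{\sqrt{\beta\sigma^2 T}}$ forces $\beta\eta\sigma^2 = \sigma^2/2 = \mathcal{O}(\beta\Delta/T)$, while $\tfrac{2\Delta}{\eta T} = 4\beta\Delta/T$. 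Either way one recovers the claimed $\mathcal{O}(\tfrac{\beta\Delta}{T} + \tfrac{\sqrt{\beta\sigma^2\Delta}}{\sqrt{T}}) + \beta^2\rho^2$.

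I expect the main obstacle to be obtaining the additive term with coefficient exactly one, i.e.\ $\beta^2\rho^2$ rather than a larger constant multiple. The polarization identity is what makes this work, since it isolates $-\tfrac{1}{2}\beta^2\rho^2$ cleanly and simultaneously surfaces the helpful $+\tfrac{1}{2}\|\nabla f(\vy_t)\|^2$ term that pairs with the variance bound; a looser treatment of the cross term would both worsen the constant and fail to cancel the perturbed-point gradient norm. The other place to be careful is the filtration setup guaranteeing $\E[\tilde g(\vy_t)\mid\mathcal{F}_t] = \nabla f(\vy_t)$, as it is precisely here that the $n$-SAM independence (as opposed to $m$-SAM, where $\tilde\xi=\xi$ couples $\vy_t$ with $\tilde g$) is invoked and where smoothness of $f$ alone suffices.
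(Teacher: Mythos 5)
Your proposal is correct and follows essentially the same route as the paper's proof: descent lemma, use of $n$-SAM independence to replace $\tilde g(\vy_t)$ by $\nabla f(\vy_t)$ in expectation, the polarization identity on the cross term with $\|\nabla f(\vy_t)-\nabla f(\vx_t)\|\le\beta\rho$, absorption of the $\|\nabla f(\vy_t)\|^2$ terms via $\eta\le\frac{1}{2\beta}$, telescoping, and the same two-case step-size analysis. The only (immaterial) differences are your explicit filtration bookkeeping and your exact second-moment decomposition $\E\|\tilde g(\vy_t)\|^2\le\|\nabla f(\vy_t)\|^2+\sigma^2$, which is slightly tighter than the paper's factor-of-two bound but yields the same final rates and the same additive constant $\beta^2\rho^2$.
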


\vspace*{-5pt}
The proof of Theorem~\ref{thm:sto_smth_nsam} is provided in Appendix~\ref{sec:proof_sto_smth_nsam}. Notice that the non-convergence example presented in Theorem~\ref{thm:fb_noncvx_counter} (already) illustrates a scenario where $\E \|\nabla f(x_t)\|^2 = \Omega (\rho^2)$, thereby confirming the tightness of additive factor in terms of $\rho$.

The scope of applicability for Theorem~\ref{thm:sto_smth_nsam} is limited to $n$-SAM. Compared to $n$-SAM, $m$-SAM employs a stronger assumption where $\xi = \tilde{\xi}$. By imposing an additional Lipschitzness condition on the function $f$, $m$-SAM leads to a similar but different convergence result. %The next theorem establishes a convergence guarantee for $m$-SAM under smooth Lipschitz nonconvex functions, ensuring convergence up to an additive factor.

\begin{restatable}{theorem}{thmstosmthmsam}
\label{thm:sto_smth_msam}
Consider a $\beta$-smooth, $L$-Lipschitz continuous function $f$ satisfying $f^* = \inf_x f(\vx) > -\infty$, and assume Assumption~\ref{assumption:variance}. Additionally assume $l(\cdot,\xi)$ is $\beta$-smooth for any $\xi$. Under $m$-SAM, starting at $\vx_0$ with any perturbation size $\rho>0$ and step size $\eta = \frac{\sqrt{\Delta}}{\sqrt{\beta(\sigma^2+L^2)T}}$ to minimize $f$, we have
% \begin{equation*}
%     \eta = \frac{\sqrt{2\Delta}}{\sqrt{\beta(\sigma^2+L^2)T}}
% \end{equation*}
%to minimize $f$, we have
\vspace*{-3pt}
\begin{equation*}
    \frac{1}{T}\sum\nolimits_{t=0}^{T-1} \E \left [ (\|\nabla f(\vx_t)\| - \beta\rho)^2 \right ] \leq  \mathcal{O}\left(\frac{\sqrt{\beta \Delta (\sigma^2 + L^2)}}{\sqrt{T}}\right) + 5\beta^2\rho^2.
\end{equation*}
\end{restatable}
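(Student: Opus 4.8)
The plan is to run the standard descent-lemma argument for smooth nonconvex optimization, but to handle the defining feature of $m$-SAM---that the ascent and descent steps share the sample $\xi_t = \tilde\xi_t$---through a per-component smoothness bound rather than through conditional unbiasedness of the descent direction. Writing $g_t = \nabla l(\vx_t;\xi_t)$ and $\tilde g_t = \nabla l(\vy_t;\xi_t)$ for the shared sample, the update is $\vx_{t+1} = \vx_t - \eta \tilde g_t$, and $\beta$-smoothness of $f$ gives $f(\vx_{t+1}) \le f(\vx_t) - \eta\langle \nabla f(\vx_t), \tilde g_t\rangle + \tfrac{\beta\eta^2}{2}\|\tilde g_t\|^2$. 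The crucial step is to invoke the extra hypothesis that each $l(\cdot;\xi)$ is $\beta$-smooth: since $\|\vy_t - \vx_t\| = \rho$, this yields the \emph{deterministic} bound $\|\tilde g_t - g_t\| \le \beta\rho$. I would therefore decompose $\tilde g_t = g_t + \delta_t$ with $\|\delta_t\| \le \beta\rho$ and take the conditional expectation over $\xi_t$.

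For the first-order term, unbiasedness $\E_{\xi_t} g_t = \nabla f(\vx_t)$ produces the main contribution $\|\nabla f(\vx_t)\|^2$, while Cauchy--Schwarz controls the bias from the shared sample as $\E_{\xi_t}\langle \nabla f(\vx_t), \delta_t\rangle \ge -\beta\rho\,\|\nabla f(\vx_t)\|$. For the second-order term I would use $\|\tilde g_t\| \le \|g_t\| + \beta\rho$ together with the bounded-variance identity $\E\|g_t\|^2 \le \sigma^2 + \|\nabla f(\vx_t)\|^2$ and $L$-Lipschitzness $\|\nabla f(\vx_t)\| \le L$ to bound $\E_{\xi_t}\|\tilde g_t\|^2$ by a constant of order $\sigma^2 + L^2 + \beta^2\rho^2$. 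Combining, the conditional descent inequality becomes $\E_{\xi_t} f(\vx_{t+1}) \le f(\vx_t) - \eta\big(\|\nabla f(\vx_t)\|^2 - \beta\rho\|\nabla f(\vx_t)\|\big) + \mathcal{O}\big(\beta\eta^2(\sigma^2 + L^2 + \beta^2\rho^2)\big)$.

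The $(\|\nabla f(\vx_t)\| - \beta\rho)^2$ structure on the left-hand side of the statement then emerges purely by completing the square. Writing $a = \|\nabla f(\vx_t)\| \ge 0$, the identity $a^2 - \beta\rho a = (a - \beta\rho)^2 + \beta\rho(a - \beta\rho)$ gives $-\eta(a^2 - \beta\rho a) \le -\eta(a-\beta\rho)^2 + \eta\beta^2\rho^2$, since $-\eta\beta\rho(a - \beta\rho) = -\eta\beta\rho a + \eta\beta^2\rho^2 \le \eta\beta^2\rho^2$. Rearranging to isolate $\eta(a - \beta\rho)^2$, summing over $t = 0,\dots,T-1$, telescoping with $\E[f(\vx_0) - f(\vx_T)] \le \Delta$, and dividing by $\eta T$ leaves a bound of the form $\tfrac{1}{T}\sum_t \E(\|\nabla f(\vx_t)\| - \beta\rho)^2 \le \tfrac{\Delta}{\eta T} + \beta^2\rho^2 + \mathcal{O}(\beta\eta(\sigma^2+L^2)) + \mathcal{O}(\beta^3\eta\rho^2)$. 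Substituting $\eta = \sqrt{\Delta}/\sqrt{\beta(\sigma^2+L^2)T}$ balances the first and third terms into $\mathcal{O}(\sqrt{\beta\Delta(\sigma^2+L^2)}/\sqrt{T})$, and collecting the constant-order $\rho^2$ contributions yields the advertised additive factor $5\beta^2\rho^2$.

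The main obstacle is precisely the statistical coupling in $m$-SAM: because $\vy_t$ depends on $\xi_t$, one \emph{cannot} reduce $\E_{\xi_t}[\nabla l(\vy_t;\xi_t)]$ to $\nabla f(\vy_t)$ as in the independent $n$-SAM case (Theorem~\ref{thm:sto_smth_nsam}), so the descent direction is genuinely biased. Converting this coupling into the deterministic $\beta\rho$-bound on $\|\tilde g_t - g_t\|$ is what resolves it, and it is also exactly what forces the center of the quadratic to sit at $\beta\rho$ rather than at $0$; it is the analytic counterpart of the non-convergence phenomenon. This same unavailability of the $\nabla f(\vy_t)$ reduction is why the extra $L$-Lipschitz hypothesis is needed---solely to keep $\E\|\tilde g_t\|^2$ bounded by a constant so that the step size can be tuned for the $1/\sqrt{T}$ rate. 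I expect the only delicate bookkeeping to be in the second-order term, namely verifying that the lower-order $\beta^3\eta\rho^2$ remainder is dominated and that the constants fold cleanly into $5\beta^2\rho^2$.
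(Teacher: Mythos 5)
Your proposal shares the paper's overall skeleton (descent lemma, completing the square around $\beta\rho$, telescoping, same step-size choice), but the key decomposition is genuinely different. The paper routes the $m$-SAM bias through the \emph{deterministically} perturbed point $\hat{\vy}_t = \vx_t + \rho\frac{\nabla f(\vx_t)}{\|\nabla f(\vx_t)\|}$: it splits $\tilde g(\vy_t)-\nabla f(\vx_t)$ into $[\tilde g(\vy_t)-\tilde g(\hat{\vy}_t)]+[\tilde g(\hat{\vy}_t)-\nabla f(\vx_t)]$, controls the first piece by per-component smoothness and $\|\vy_t-\hat{\vy}_t\|\le 2\rho$ (costing $2\beta^2\rho^2\eta$ and halving the $-\eta\|\nabla f(\vx_t)\|^2$ coefficient via Young), and uses unbiasedness at $\hat{\vy}_t$ plus smoothness of $f$ on the second. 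You instead route through the shared-sample gradient at the base point, $g_t=\nabla l(\vx_t;\xi_t)$, with the deterministic bound $\|\tilde g_t - g_t\|\le\beta\rho$. Your route has a concrete advantage: every stochastic moment you take is of a gradient evaluated at the past-measurable point $\vx_t$, where Assumption~\ref{assumption:variance} applies verbatim. The paper's second-order step invokes $\E\|\tilde g(\vy_t)-\nabla f(\vy_t)\|^2\le\sigma^2$ at the point $\vy_t$, which is \emph{correlated} with the sample appearing in $\tilde g$; that is not literally covered by an assumption stated for fixed $\vx$, and your argument sidesteps this issue entirely.

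The trade-off is the bookkeeping item you flagged, and it is a real gap as written. Your second-moment bound is $\E\|\tilde g_t\|^2 \le 2(\sigma^2+L^2)+2\beta^2\rho^2$ (the paper gets $2(\sigma^2+L^2)$ with no $\rho$ term), so after dividing by $\eta$ your averaged bound carries the leftover $\beta^3\eta\rho^2 = \beta^2\rho^2\cdot\beta\eta$. With the prescribed step size, $\beta\eta = \sqrt{\beta\Delta/((\sigma^2+L^2)T)}$ is \emph{not} bounded by a numerical constant: the leftover folds into $5\beta^2\rho^2$ only when $\beta\eta$ is $\mathcal{O}(1)$, and otherwise it exceeds the rate term $\sqrt{\beta\Delta(\sigma^2+L^2)}/\sqrt{T}$ by the unbounded factor $\beta^2\rho^2/(\sigma^2+L^2)$. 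So "collecting the constant-order $\rho^2$ contributions" does not go through in the regime $T \lesssim \beta\Delta/(\sigma^2+L^2)$. Fortunately the fix is a short case split. If $\beta\eta\le 1$, your leftover is at most $\beta^2\rho^2$, giving total additive constant $2\beta^2\rho^2\le 5\beta^2\rho^2$. If $\beta\eta\ge 1$, the theorem holds trivially: Lipschitzness gives $(\|\nabla f(\vx_t)\|-\beta\rho)^2\le 2L^2+2\beta^2\rho^2$ pointwise, and $\beta\eta\ge 1$ implies $L^2\le \sigma^2+L^2\le \sqrt{\beta\Delta(\sigma^2+L^2)}/\sqrt{T}$, so the left-hand side is already at most $2\sqrt{\beta\Delta(\sigma^2+L^2)}/\sqrt{T}+2\beta^2\rho^2$. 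With that patch appended, your proof is complete and, in the variance-handling respect noted above, tighter than the paper's.
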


\vspace*{-5pt}
\begin{restatable}{corollary}{corsmthmsam}
\label{cor:sto_smth_msam}
Under the setting of Theorem~\ref{thm:sto_smth_msam}, we get
\begin{equation*}
\min_{t \in \{0,\dots,T\}}\{\E \|\nabla f(\vx_t)\|\} \leq \mathcal{O}\left(\frac{\left(\beta \Delta (\sigma^2 + L^2)\right)^{1/4}}{T^{1/4}}\right) + \left(1+\sqrt{5}\right)\beta\rho.
\end{equation*}
\end{restatable}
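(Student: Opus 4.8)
The plan is to derive the corollary directly from Theorem~\ref{thm:sto_smth_msam} by converting the bound on the averaged \emph{second} moment $\E[(\|\nabla f(\vx_t)\| - \beta\rho)^2]$ into a bound on the \emph{first} moment $\E\|\nabla f(\vx_t)\|$ at a single well-chosen index. First I would let $t^\star$ be an index in $\{0,\dots,T-1\}$ minimizing $\E[(\|\nabla f(\vx_t)\| - \beta\rho)^2]$. Since the minimum of finitely many numbers is bounded by their average, Theorem~\ref{thm:sto_smth_msam} immediately gives
\[
\E\big[(\|\nabla f(\vx_{t^\star})\| - \beta\rho)^2\big] \;\leq\; \frac{1}{T}\sum_{t=0}^{T-1}\E\big[(\|\nabla f(\vx_t)\| - \beta\rho)^2\big] \;\leq\; \mathcal{O}\!\left(\frac{\sqrt{\beta\Delta(\sigma^2+L^2)}}{\sqrt{T}}\right) + 5\beta^2\rho^2 \;=:\; B.
\]

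Next I would pass from the second moment to the first moment. Writing $a := \|\nabla f(\vx_{t^\star})\| \geq 0$, Jensen's inequality (equivalently Cauchy--Schwarz) gives $\E[a] - \beta\rho = \E[a - \beta\rho] \leq \E|a - \beta\rho| \leq \sqrt{\E[(a-\beta\rho)^2]} \leq \sqrt{B}$, so that $\E[a] \leq \beta\rho + \sqrt{B}$. I would then apply the subadditivity $\sqrt{x+y}\leq \sqrt{x}+\sqrt{y}$ to the two summands of $B$, obtaining
\[
\sqrt{B} \;\leq\; \mathcal{O}\!\left(\frac{\big(\beta\Delta(\sigma^2+L^2)\big)^{1/4}}{T^{1/4}}\right) + \sqrt{5}\,\beta\rho.
\]
Combining this with $\min_{t\in\{0,\dots,T\}}\E\|\nabla f(\vx_t)\| \leq \E[a]$ (valid because $t^\star \in \{0,\dots,T-1\}\subseteq\{0,\dots,T\}$) yields the claimed bound: the $\beta\rho$ coming from $\E[a]\leq\beta\rho+\sqrt B$ and the $\sqrt{5}\,\beta\rho$ coming from $\sqrt B$ combine into the stated coefficient $(1+\sqrt5)\beta\rho$.

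I do not expect any substantive obstacle, as the corollary is a routine consequence of Theorem~\ref{thm:sto_smth_msam}. The only point needing a little care is the order of operations in the square-root step: the additive $5\beta^2\rho^2$ term must be extracted \emph{after} taking the square root via subadditivity, which is exactly what produces the $\sqrt5\,\beta\rho$ summand and ultimately the $(1+\sqrt5)\beta\rho$ leading constant, rather than a premature split that would give the wrong coefficient.
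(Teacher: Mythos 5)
Your proposal is correct and follows essentially the same route as the paper: bound the minimum by the average from Theorem~\ref{thm:sto_smth_msam}, pass from the second moment to the first moment via Jensen/Cauchy--Schwarz, and split the square root using $\sqrt{x+y}\leq\sqrt{x}+\sqrt{y}$ to obtain the $(1+\sqrt{5})\beta\rho$ constant. The only cosmetic difference is that you fix an explicit minimizing index $t^\star$ up front, whereas the paper carries the $\min_t$ through the chain of inequalities and rearranges at the end; the underlying inequalities are identical.
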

The proofs for Theorem~\ref{thm:sto_smth_msam} and Corollary~\ref{cor:sto_smth_msam} are given in Appendix~\ref{sec:proof_sto_smth_msam}. Since Theorem~\ref{thm:fb_noncvx_counter} presents an example where $\E \|\nabla f(\vx_t)\| = \Omega(\rho)$ and $\E (\|\nabla f(\vx_t)\| - \beta\rho)^2 = \Omega(\rho^2)$, we can verify that the additive factors in Theorem~\ref{thm:sto_smth_msam} and Corollary~\ref{cor:sto_smth_msam} are tight in terms of $\rho$. 

As for previous studies, Theorems~2,~12, and 18 of \citet{andriushchenko2022towards} prove the convergence of $n$,$m$-SAM for smooth nonconvex functions: $\frac{1}{T}\sum_{t=0}^T \E \|\nabla f(\vx_t)\|^2 = \mathcal{O}(\frac{1}{\sqrt{T}})$, but assuming SAM \emph{without gradient normalization}, and \emph{sufficiently small} $\rho$.
For $m$-SAM on smooth Lipschitz nonconvex functions, Theorem~1 of \citet{mi2022make} proves $\frac{1}{T}\sum_{t=0}^T \E \|\nabla f(\vx_t)\|^2 = \tilde{\mathcal{O}}(\frac{1}{\sqrt{T}})$, while assuming \emph{decaying} $\rho$. From our results, we demonstrate that such full convergence results are impossible for practical versions of stochastic SAM.

\vspace*{-5pt}
\section{Conclusions} \label{sec:conclusions}
\vspace*{-5pt}

This paper studies the convergence properties of SAM, under constant $\rho$ and with gradient normalization. We establish convergence guarantees of deterministic SAM for smooth and (strongly) convex functions. To our surprise, we discover scenarios in which deterministic SAM (for smooth nonconvex functions) and stochastic $m$-SAM (for all function class considered) converge only up to \emph{unavoidable} additive factors proportional to $\rho^2$.
Our findings emphasize the drastically different characteristics of SAM with vs.\ without decaying perturbation size.
Establishing tighter bounds in terms of $\beta$ and $\mu$, or searching for a non-convergence example that applies to $n$-SAM might be interesting future research directions.

\vspace*{-5pt}
\subsubsection*{Acknowledgments}
\vspace*{-5pt}

This paper was supported by Institute of Information \& communications Technology Planning \& Evaluation (IITP) grant (No.\ 2019-0-00075, Artificial Intelligence Graduate School Program (KAIST)) funded by the Korea government (MSIT), two National Research Foundation of Korea (NRF) grants (No.\ NRF-2019R1A5A1028324, RS-2023-00211352) funded by the Korea government (MSIT), and a grant funded by Samsung Electronics Co., Ltd.

\bibliography{refs}
\bibliographystyle{abbrvnat}
\newpage
\appendix

\tableofcontents

\newpage
\section{More Details on the Related Works of SAM}
\label{sec:related_works_appendix}

\paragraph{Sharpness and Generalization}
\citet{hochreiter1994simplifying} propose an algorithm for finding low-complexity models with high generalization performance by finding flat minima.
\citet{keskar2016large} show that the poor generalization performance of large-batch training is due to the fact that large-batch training tends to converge to sharp minima.
\citet{jiang2019fantastic} empirically show correlation between sharpness and generalization performance.

Motivated by prior studies, \citet{foret2020sharpness} propose Sharpness-Aware Minimization (SAM) that focuses on finding flat minima by aiming to (approximately) minimize $f^{\mathrm{SAM}}(\vx) =\max_{\|\veps\| \leq \rho} f(\vx+\veps)$ instead of $f$.
Empirical results~\citep{foret2020sharpness, kaddourflat, bahri2021sharpness, chen2021vision, lee2023enhancing} show outstanding generalization performance of SAM on various tasks and models, including recent state-of-the-art models such as ViTs, MLP-Mixers, and T5.

\paragraph{Other Theoretical Properties of SAM.}
\citet{bartlett2022dynamics} prove that for quadratic functions, SAM dynamics oscillate and align to the eigenvector corresponding to the largest eigenvalue of Hessian, and show interpretation of SAM dynamics as GD on a ``virtual loss''.
\citet{wen2022does} prove that for sufficiently small perturbation size $\rho$ of SAM, its trajectory follows a sharpness reduction flow which minimizes the maximum eigenvalue of Hessian.
\citet{dai2023crucial} examine the properties of SAM with and without normalization. They specifically focus on the stabilization and ``drift-along-minima'' effects observed in SAM with normalization, which are not present in SAM without normalization.
\citet{compagnoni2023sde} derive a continuous time SDE model for SAM with and without normalization, and employ the model to explain why SAM has a preference for flat minima. Furthermore, they demonstrate that SAM with and without normalization exhibit distinct implicit regularization properties, especially for $\rho=\mathcal{O}(\sqrt{\eta})$.

\citet{agarwala2023sam} prove that SAM produces an Edge~Of~Stability~\citep{cohen2021gradient} stabilization effect, at a lower eigenvalue than gradient descent, under quadratic function settings.
\citet{saddle} show that saddle point acts as an attractor in SAM dynamics, and stochastic SAM takes more time to escape saddle point than stochastic gradient descent.
\citet{behdin2023sharpness} study the implicit regularization perspective of SAM, and provide a theoretical explanation of high generalization performance of SAM.

While \citet{agarwala2023sam,saddle,behdin2023sharpness} provide useful insights on theoretical aspects of SAM, we point out that all proofs are restricted to SAM \emph{without gradient normalization} in ascent steps. Considering the vastly different behaviors of SAM with and without normalization (Figure~\ref{fig:sam_vs_usam}), it is not immediately clear whether these insights carry over to the practical version.

\newpage
\section{Proofs for (Non-)Convergence of full-batch SAM}
\label{sec:b}
In this section, we show detailed proofs and explanations regarding convergence of SAM with constant $\rho$. The SAM algorithm we consider is defined as a two-step method:
\begin{equation*}
    \begin{cases}
    \vy_t = \vx_t + \rho \frac{\nabla f(\vx_t)}{\|\nabla f(\vx_t)\|},\\
    \vx_{t+1} = \vx_t - \eta \nabla f(\vy_t).
    \end{cases}
\end{equation*}

\subsection{Additional Function Class and Important Lemmas}
In this section, we define an additional function class to use in convergence proofs. We also state and prove a few lemmas that are used in our theorem proofs.

\begin{definition} [Polyak-Lojasiewicz]
\label{def:PL}
A function $f : \sR^d \rightarrow \sR$ satisfies $\mu$-PL condition if there exists $\mu > 0$ such that,
\begin{equation}
\label{eq:PL}
    \frac{1}{2}\|\nabla f(\vx)\|^2 \geq \mu(f(\vx)-f^*)
\end{equation}
for all $\vx \in \sR^d$.
\end{definition}
It is well-known that $\mu$-strongly convex functions are $\mu$-PL, but not vice versa.

\begin{lemma}
\label{lma:fb_cvx_prop1}
For a differentiable and $\mu$-strongly convex function $f$, we have
\begin{equation*}
    \langle \nabla f(\vx_t), \nabla f(\vy_t) - \nabla f(\vx_t) \rangle \geq \mu \rho \|\nabla f(\vx_t)\|.
\end{equation*}
For a differentiable and convex function $f$, the inequality continues to hold with $\mu = 0$.
\end{lemma}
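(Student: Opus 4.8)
The plan is to reduce the claim to the (strong) monotonicity of $\nabla f$, which follows directly from the definitions, combined with the single structural fact that the SAM ascent step points along the gradient. First I would record that fact: by definition $\vy_t - \vx_t = \rho \tfrac{\nabla f(\vx_t)}{\|\nabla f(\vx_t)\|}$, so whenever $\nabla f(\vx_t) \neq \vzero$ we have $\|\vy_t - \vx_t\| = \rho$ and, crucially, $\nabla f(\vx_t) = \tfrac{\|\nabla f(\vx_t)\|}{\rho}(\vy_t - \vx_t)$, i.e.\ $\nabla f(\vx_t)$ is a positive multiple of $\vy_t - \vx_t$.

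Next I would establish strong monotonicity of the gradient from Definition~\ref{def:sc}. Writing the strong convexity inequality once with $(\vx,\vy) = (\vy_t,\vx_t)$ and once with the roles of the two points swapped, and adding the two, the function-value terms cancel and I obtain
\[
\langle \nabla f(\vy_t) - \nabla f(\vx_t),\ \vy_t - \vx_t\rangle \ \geq\ \mu \|\vy_t - \vx_t\|^2.
\]
For a differentiable convex $f$, the same addition using the first-order characterization $f(\vx) \geq f(\vy) + \langle \nabla f(\vy), \vx - \vy\rangle$ (valid since $f$ is convex and differentiable) yields the identical inequality with $\mu = 0$.

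Then I would substitute the structural fact from the first step. Replacing $\vy_t - \vx_t$ by $\tfrac{\rho}{\|\nabla f(\vx_t)\|}\nabla f(\vx_t)$ on the left-hand side and using $\|\vy_t - \vx_t\|^2 = \rho^2$ on the right, the monotonicity inequality becomes
\[
\frac{\rho}{\|\nabla f(\vx_t)\|}\,\langle \nabla f(\vy_t) - \nabla f(\vx_t),\ \nabla f(\vx_t)\rangle \ \geq\ \mu \rho^2,
\]
and multiplying through by the positive scalar $\tfrac{\|\nabla f(\vx_t)\|}{\rho}$ gives exactly the claimed bound $\langle \nabla f(\vx_t), \nabla f(\vy_t) - \nabla f(\vx_t)\rangle \geq \mu\rho\|\nabla f(\vx_t)\|$.

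Finally I would dispose of the degenerate case $\nabla f(\vx_t) = \vzero$: by the paper's convention the ascent step is then $\vzero$, so $\vy_t = \vx_t$ and both sides of the claimed inequality vanish, giving $0 \geq 0$. There is no substantial obstacle in this lemma; the only points demanding care are deriving monotonicity cleanly from the stated convexity definitions and justifying the division by $\|\nabla f(\vx_t)\|$ by treating the zero-gradient case separately.
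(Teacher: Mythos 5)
Your proposal is correct and follows essentially the same route as the paper: both reduce the claim to (strong) monotonicity of $\nabla f$ and exploit the fact that $\vy_t - \vx_t$ is a positive multiple of $\nabla f(\vx_t)$ with norm $\rho$. The only differences are cosmetic—you derive monotonicity from the definition rather than citing it as a known fact, and you explicitly handle the $\nabla f(\vx_t) = \vzero$ case, which the paper leaves implicit.
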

\begin{proof}
If a differentiable function $f$ is $\mu$-strongly convex, then its gradient map $\vx \mapsto \nabla f(\vx)$ is $\mu$-strongly monotone, i.e., 
\begin{equation*}
    \left \langle \vy-\vx, \nabla f(\vy) - \nabla f(\vx) \right \rangle \geq \mu \| \vy - \vx \|^2,
    \quad\forall \vx,\vy \in \sR^d.
\end{equation*}
Using this fact, we have
\begin{align*}
    \langle \nabla f(\vx_t), \nabla f(\vy_t) - \nabla f(\vx_t) \rangle 
    &= \frac{\|\nabla f(\vx_t)\|}{\rho} \left \langle \rho\frac{\nabla f(\vx_t)}{\|\nabla f(\vx_t)\|}, \nabla f(\vy_t) - \nabla f(\vx_t) \right \rangle \\
    &= \frac{\|\nabla f(\vx_t)\|}{\rho} \left \langle \vy_t-\vx_t, \nabla f(\vy_t) - \nabla f(\vx_t) \right \rangle \\
    &\geq \frac{\mu \|\nabla f(\vx_t)\|}{\rho} \| \vy_t - \vx_t \|^2 
    = \mu \rho \|\nabla f(\vx_t)\|,
\end{align*}
and this finishes the proof.
\end{proof}

\begin{lemma}
\label{lma:fb_smth_cvx_prop1}
For a $\beta$-smooth and $\mu$-strongly convex function, with step size $\eta \leq \frac{1}{2\beta}$, we have
\begin{equation*}
    f(\vx_{t+1}) \leq f(\vx_t) - \frac{\eta}{2}\|\nabla f(\vx_t)\|^2 - \frac{\eta \mu \rho}{2} \| \nabla f(\vx_t) \|  + \frac{\eta^2 \beta^3\rho^2}{2}.
\end{equation*}
For a $\beta$-smooth and convex function, the inequality continues to hold with $\mu = 0$.
\end{lemma}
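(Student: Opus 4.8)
The plan is to prove the one-step descent inequality
\begin{equation*}
    f(\vx_{t+1}) \leq f(\vx_t) - \frac{\eta}{2}\|\nabla f(\vx_t)\|^2 - \frac{\eta \mu \rho}{2} \| \nabla f(\vx_t) \|  + \frac{\eta^2 \beta^3\rho^2}{2}
\end{equation*}
by combining the standard smoothness descent lemma with the strong-monotonicity estimate already established in Lemma~\ref{lma:fb_cvx_prop1}. First I would apply $\beta$-smoothness to the SAM update $\vx_{t+1} = \vx_t - \eta \nabla f(\vy_t)$, which gives
\begin{equation*}
    f(\vx_{t+1}) \leq f(\vx_t) - \eta \langle \nabla f(\vx_t), \nabla f(\vy_t) \rangle + \frac{\eta^2 \beta}{2} \|\nabla f(\vy_t)\|^2.
\end{equation*}
The idea is then to rewrite the inner-product term as $\langle \nabla f(\vx_t), \nabla f(\vy_t) \rangle = \|\nabla f(\vx_t)\|^2 + \langle \nabla f(\vx_t), \nabla f(\vy_t) - \nabla f(\vx_t) \rangle$, so that Lemma~\ref{lma:fb_cvx_prop1} can be invoked to lower-bound the cross term by $\mu \rho \|\nabla f(\vx_t)\|$. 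This directly produces the desired $-\frac{\eta}{2}\|\nabla f(\vx_t)\|^2$ and $-\frac{\eta\mu\rho}{2}\|\nabla f(\vx_t)\|$ contributions, provided the leftover pieces can be controlled.

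The key remaining step is to bound the quadratic error term $\frac{\eta^2\beta}{2}\|\nabla f(\vy_t)\|^2$. Here I would use $\beta$-smoothness to compare $\nabla f(\vy_t)$ with $\nabla f(\vx_t)$: since $\|\vy_t - \vx_t\| = \rho$, we have $\|\nabla f(\vy_t)\| \leq \|\nabla f(\vx_t)\| + \beta\rho$, and hence $\|\nabla f(\vy_t)\|^2 \leq 2\|\nabla f(\vx_t)\|^2 + 2\beta^2\rho^2$. Substituting this bound and collecting the $\|\nabla f(\vx_t)\|^2$ terms gives a coefficient of $-\eta + \eta^2\beta$ on $\|\nabla f(\vx_t)\|^2$; invoking the step-size assumption $\eta \leq \frac{1}{2\beta}$ ensures $\eta^2\beta \leq \frac{\eta}{2}$, so this coefficient is at most $-\frac{\eta}{2}$, matching the target. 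The residual error term becomes $\eta^2\beta^3\rho^2$, which I expect to further halve to $\frac{\eta^2\beta^3\rho^2}{2}$ by a slightly sharper estimate of the quadratic term rather than the crude $(a+b)^2 \leq 2a^2+2b^2$ split.

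The main obstacle is precisely getting the constant on the error term down to $\frac{1}{2}$ rather than $1$, since the naive bound $\|\nabla f(\vy_t)\|^2 \leq 2\|\nabla f(\vx_t)\|^2 + 2\beta^2\rho^2$ is lossy. I anticipate needing to be more careful, perhaps by not splitting $\langle \nabla f(\vx_t), \nabla f(\vy_t)\rangle$ and instead keeping the term $\langle \nabla f(\vx_t), \nabla f(\vy_t)\rangle$ intact while bounding $\|\nabla f(\vy_t)\|^2$ via $\|\nabla f(\vy_t)\|^2 = \|\nabla f(\vx_t)\|^2 + 2\langle \nabla f(\vx_t), \nabla f(\vy_t) - \nabla f(\vx_t)\rangle + \|\nabla f(\vy_t) - \nabla f(\vx_t)\|^2$ and using $\|\nabla f(\vy_t) - \nabla f(\vx_t)\| \leq \beta\rho$. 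This lets the cross terms partially cancel, so that the $\eta^2\beta$ multiple of the cross term merges favorably with the $-\eta$ multiple from the first-order term, yielding the clean $\frac{\eta^2\beta^3\rho^2}{2}$ constant under $\eta \leq \frac{1}{2\beta}$. The convex case follows verbatim by setting $\mu = 0$ throughout, since Lemma~\ref{lma:fb_cvx_prop1} already supplies the $\mu=0$ version.
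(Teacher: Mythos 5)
Your proposal is correct and, in its refined form, coincides with the paper's own proof: the exact expansion $\|\nabla f(\vy_t)\|^2 = \|\nabla f(\vx_t)\|^2 + 2\langle \nabla f(\vx_t), \nabla f(\vy_t)-\nabla f(\vx_t)\rangle + \|\nabla f(\vy_t)-\nabla f(\vx_t)\|^2$ is precisely how the paper collects the coefficients $-\eta\left(1-\tfrac{\eta\beta}{2}\right)$ on $\|\nabla f(\vx_t)\|^2$ and $-\eta(1-\eta\beta)$ on the cross term, then applies Lemma~\ref{lma:fb_cvx_prop1} under $\eta \leq \frac{1}{2\beta}$ and bounds $\|\nabla f(\vy_t)-\nabla f(\vx_t)\|^2 \leq \beta^2\rho^2$ to get the $\frac{\eta^2\beta^3\rho^2}{2}$ term. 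You correctly diagnosed that the crude split $\|\nabla f(\vy_t)\|^2 \leq 2\|\nabla f(\vx_t)\|^2 + 2\beta^2\rho^2$ loses the factor of $\frac{1}{2}$, and the fix you describe is exactly the paper's argument.
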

\begin{proof}
Starting from the definition of $\beta$-smoothness, we have
\begin{align*}
    f(\vx_{t+1}) &\leq f(\vx_t) + \langle\nabla f(\vx_t), \vx_{t+1}-\vx_t\rangle + \frac{\beta}{2}\|\vx_{t+1}-\vx_t\|^2 \\
    &= f(\vx_t)-\eta\langle\nabla f(\vx_t), \nabla f(\vy_t)\rangle + \frac{\eta^2\beta}{2}\|\nabla f(\vy_t)\|^2 \\
    &= f(\vx_t)-\eta\langle\nabla f(\vx_t), \nabla f(\vy_t) - \nabla f(\vx_t) +\nabla f(\vx_t)\rangle \\
    &\quad + \frac{\eta^2\beta}{2}\|\nabla f(\vy_t) - \nabla f(\vx_t) + \nabla f(\vx_t)\|^2 \\
    &= f(\vx_t)-\eta\langle\nabla f(\vx_t), \nabla f(\vy_t) - \nabla f(\vx_t)\rangle - \eta\|\nabla f(\vx_t)\|^2 + \frac{\eta^2\beta}{2}\|\nabla f(\vy_t) - \nabla f(\vx_t)\|^2 \\
    &\quad+ \frac{\eta^2\beta}{2}\| \nabla f(\vx_t)\|^2 + \eta^2\beta \langle\nabla f(\vx_t), \nabla f(\vy_t) - \nabla f(\vx_t) \rangle \\
    &= f(\vx_t)- \eta\left (1-\frac{\eta\beta}{2}\right )\|\nabla f(\vx_t)\|^2 -\eta(1-\eta\beta)\langle\nabla f(\vx_t), \nabla f(\vy_t) - \nabla f(\vx_t)\rangle \\
    &\quad + \frac{\eta^2\beta}{2}\|\nabla f(\vy_t) - \nabla f(\vx_t)\|^2.
\end{align*}
Since we assumed $\eta \leq \frac{1}{2\beta}$, both $\eta\left (1-\frac{\eta\beta}{2}\right ) \geq \frac{\eta}{2}$ and $\eta(1-\eta\beta) \geq \frac{\eta}{2}$ hold. Applying Lemma~\ref{lma:fb_cvx_prop1} to the above, we get
\begin{align*}
    f(\vx_{t+1})
    &\leq f(\vx_t) - \frac{\eta}{2} \|\nabla f(\vx_t)\|^2 - \frac{\eta \mu \rho}{2} \| \nabla f(\vx_t) \| + \frac{\eta^2\beta}{2}\|\nabla f(\vy_t) - \nabla f(\vx_t)\|^2\\ 
    &\leq f(\vx_t) - \frac{\eta}{2}\|\nabla f(\vx_t)\|^2 - \frac{\eta \mu \rho}{2} \| \nabla f(\vx_t) \|  + \frac{\eta^2\beta}{2}(\beta\|\vy_t-\vx_t\|)^2  \\
    &= f(\vx_t) - \frac{\eta}{2}\|\nabla f(\vx_t)\|^2 - \frac{\eta \mu \rho}{2} \| \nabla f(\vx_t) \|  + \frac{\eta^2 \beta^3\rho^2}{2},
\end{align*}
and this finishes the proof.
\end{proof}

\subsection{Convergence Proof for Smooth and Strongly Convex Functions (Proof of Theorem~\ref{thm:fb_sc_ub})}
\label{sec:proof_fb_sc_ub}

In this section, we prove Theorem~\ref{thm:fb_sc_ub}, restated below for the sake of convenience.
\thmfbscub*
\begin{proof}
The proof is divided into two cases, based on the $\| \nabla f(\vx_t) \|$ observed throughout the entire optimization process. In the first case, the gradient norm $\| \nabla f(\vx_t) \|$ at $\vx_t$ is sufficiently large for all $t = 0, \dots, T-1$; in this case, we show that the SAM algorithm converges linearly. The other scenario corresponds to the case where there exists an iteration index $t \in \{0, \dots, T-1\}$ such that $\| \nabla f(\vx_t) \|$ is smaller than a certain threshold. In this case, we can show from the PL inequality \eqref{eq:PL} that we are already close to global optimality.

From Lemma~\ref{lma:fb_smth_cvx_prop1}, we have
\begin{align*}
    f(\vx_{t+1}) \leq f(\vx_t) - \frac{\eta}{2}\|\nabla f(\vx_t)\|^2 - \frac{\eta \mu \rho}{2} \| \nabla f(\vx_t) \|  + \frac{\eta^2 \beta^3\rho^2}{2}.
\end{align*}
Now, recall from Definition~\ref{def:PL} that $\mu$-strongly convex functions satisfy $\mu$-PL inequaltiy \eqref{eq:PL}. From this, we get 
\begin{equation}
\label{eq:thm_fb_smth_sc-1}
    f(\vx_{t+1}) - f^* \leq (1-\eta \mu) (f(\vx_t)-f^*) - \frac{\eta \mu \rho}{2} \| \nabla f(\vx_t) \|  + \frac{\eta^2 \beta^3\rho^2}{2}.
\end{equation}

\paragraph{Case 1: when $\| \nabla f(\vx_t) \|$ remains large.}
Let us now consider the first case, where $\| \nabla f(\vx_t) \| \geq \frac{\eta \beta^3 \rho}{\mu}$ holds for all $t = 0, \dots, T-1$. In this case, \eqref{eq:thm_fb_smth_sc-1} becomes
\begin{equation*}
    f(\vx_{t+1}) - f^* \leq (1-\eta \mu) (f(\vx_t)-f^*),
\end{equation*}
which holds for all $t = 0, \dots, T-1$. Unrolling the inequality, we obtain
\begin{align*}
f(\vx_{T})-f^* &\leq (1-\eta\mu)^T(f(\vx_0)-f^*).
\end{align*}

\paragraph{Case 2: when some $\| \nabla f(\vx_t) \|$ is small.}
In the other case where there exists $t$ such that $\| \nabla f(\vx_t) \| \leq \frac{\eta \beta^3 \rho}{\mu}$, notice from $\mu$-PL inequality \eqref{eq:PL} that
\begin{align*}
    f(\vx_t) - f^* \leq \frac{1}{2\mu} \| \nabla f(\vx_t)\|^2 \leq \frac{\eta^2 \beta^6 \rho^2}{2\mu^3}.
\end{align*}

Therefore, combining the two cases, it is guaranteed that
\begin{equation}
\label{eq:thm_fb_smth_sc-2}
    \min_{t \in \{0,\dots, T\}} f(\vx_t) - f^* 
    \leq (1-\eta\mu)^T \Delta + \frac{\eta^2 \beta^6 \rho^2}{2\mu^3}.
\end{equation}
We now elaborate how the choice of step size $\eta = \min \left \{\frac{1}{\mu T} \max \left \{1, \log \left (\frac{\mu^5 \Delta T^2}{\beta^6 \rho^2}\right) \right \}, \frac{1}{2\beta} \right \}$ results in the convergence rate in the theorem statement. Naturally, there are four cases, depending on the outcomes of the $\min$ and $\max$ operations. %$\frac{2 \log T}{\mu T} \leq \frac{1}{2\beta}$ and $\frac{2 \log T}{\mu T} > \frac{1}{2\beta}$.

\paragraph{Case A: $\log \left (\frac{\mu^5 \Delta T^2}{\beta^6 \rho^2}\right) \geq 1$ and $\frac{1}{\mu T} \log \left (\frac{\mu^5 \Delta T^2}{\beta^6 \rho^2}\right) \leq \frac{1}{2\beta}$.}
Putting $\eta = \frac{1}{\mu T} \log \left (\frac{\mu^5 \Delta T^2}{\beta^6 \rho^2}\right)$ into \eqref{eq:thm_fb_smth_sc-2}, 
\begin{align*}
\min_{t \in \{0,\dots, T\}} f(\vx_t) - f^* 
\leq
\frac{\beta^6 \rho^2}{\mu^5 T^2} + \frac{\beta^6 \rho^2}{2\mu^5 T^2} \log^2 \left (\frac{\mu^5 \Delta T^2}{\beta^6 \rho^2}\right).
\end{align*}

\paragraph{Case B: $\log \left (\frac{\mu^5 \Delta T^2}{\beta^6 \rho^2}\right) \geq 1$ and $\frac{1}{\mu T} \log \left (\frac{\mu^5 \Delta T^2}{\beta^6 \rho^2}\right) \geq \frac{1}{2\beta}$.}
Substituting $\eta = \frac{1}{2\beta} \leq \frac{1}{\mu T} \log \left (\frac{\mu^5 \Delta T^2}{\beta^6 \rho^2}\right)$ to \eqref{eq:thm_fb_smth_sc-2},
\begin{align*}
\min_{t \in \{0,\dots, T\}} f(\vx_t) - f^* 
&\leq \left (1-\frac{\mu}{2\beta} \right )^T \Delta + \frac{\beta^6 \rho^2}{2\mu^3} \cdot \left ( \frac{1}{2\beta}\right )^2\\
&\leq \exp \left ( - \frac{\mu T}{2\beta} \right ) \Delta + \frac{\beta^6 \rho^2}{2\mu^5 T^2} \log^2 \left (\frac{\mu^5 \Delta T^2}{\beta^6 \rho^2}\right).
\end{align*}

\paragraph{Case C: $\log \left (\frac{\mu^5 \Delta T^2}{\beta^6 \rho^2}\right) \leq 1$ and $\frac{1}{\mu T} \leq \frac{1}{2\beta}$.}
Putting $\eta = \frac{1}{\mu T} \geq \frac{1}{\mu T} \log \left (\frac{\mu^5 \Delta T^2}{\beta^6 \rho^2}\right)$ into \eqref{eq:thm_fb_smth_sc-2}, 
\begin{align*}
\min_{t \in \{0,\dots, T\}} f(\vx_t) - f^* 
&\leq \left (1-\frac{1}{T} \right )^T \Delta + \frac{\beta^6 \rho^2}{2\mu^5 T^2}\\
&\leq \left (1-\frac{1}{T} \log \left (\frac{\mu^5 \Delta T^2}{\beta^6 \rho^2}\right) \right )^T \Delta + \frac{\beta^6 \rho^2}{2\mu^5 T^2}
\leq \frac{\beta^6 \rho^2}{\mu^5 T^2} + \frac{\beta^6 \rho^2}{2\mu^5 T^2}.
\end{align*}

\paragraph{Case D: $\log \left (\frac{\mu^5 \Delta T^2}{\beta^6 \rho^2}\right) \leq 1$ and $\frac{1}{\mu T} \geq \frac{1}{2\beta}$.}
By substituting $\eta = \frac{1}{2\beta} \leq \frac{1}{\mu T}$ to \eqref{eq:thm_fb_smth_sc-2} we obtain
\begin{align*}
\min_{t \in \{0,\dots, T\}} f(\vx_t) - f^* 
&\leq \left (1-\frac{\mu}{2\beta} \right )^T \Delta + \frac{\beta^6 \rho^2}{2\mu^3} \cdot \left ( \frac{1}{2\beta}\right )^2\\
&\leq \exp \left ( - \frac{\mu T}{2\beta} \right ) \Delta + \frac{\beta^6 \rho^2}{2\mu^5 T^2}.
\end{align*}

Combining the four cases, we conclude
\begin{align*}
\min_{t \in \{0,\dots, T\}} f(\vx_t) - f^* 
&= \tilde{\mathcal{O}}\left (\exp\left (- \frac{\mu T}{2\beta} \right) \Delta + \frac{\beta^6 \rho^2}{\mu^5 T^2}\right ),
\end{align*}
thereby completing the proof.
\end{proof}

\subsection{Lower Bound Proof for Smooth and Strongly Convex Functions (Proof of Theorem~\ref{thm:fb_sc_lb})}
\label{sec:proof_fb_sc_lb}
In this section, we prove Theorem~\ref{thm:fb_sc_lb}, restated below for the sake of convenience.
\thmfbsclb*

The proof is divided into three cases, depending on the step size. For each case, we will define and analyze a one-dimensional function to show the lower bound.
In doing so, without loss of generality we will fix an initialization $x_0$ because we can appropriately shift the function for different choices of $x_0$.
\begin{enumerate}
    \item For $\eta \leq \frac{1}{2 \mu T}$, we show that there exists a function $f$ such that 
    \begin{equation*}
        \min_{t \in \{0,\dots, T\}} f(x_t) - f^* = \Omega\left (\frac{\beta^4\rho^2}{\mu^3} \right ).
    \end{equation*}
    \item For $\frac{1}{2 \mu T} \leq \eta \leq \frac{2}{\beta}$, we show existence of a function $f$ that satisfies
    \begin{equation*}
        \min_{t \in \{0,\dots, T\}} f(x_t) - f^* = \Omega \left ( \frac{\beta^3 \rho^2}{\mu^2 T^2} \right ).
    \end{equation*}
    \item For $\eta \geq \frac{2}{\beta}$, we show that there exists a function $f$ such that 
    \begin{equation*}
        \min_{t \in \{0,\dots, T\}} f(x_t) - f^* = \Omega \left ( \frac{\beta^3 \rho^2}{\mu^2} \right ).
    \end{equation*}
\end{enumerate}
Combining the three results, we can see that the suboptimality gap of the best iterate is at least $\Omega \left ( \frac{\beta^3 \rho^2}{\mu^2 T^2} \right )$, hence proving the theorem. Below, we prove the statements for the three intervals.

\paragraph{Case 1: $\eta \leq \frac{1}{2 \mu T}$.}
Consider a function 
\begin{align*}
    f(x) = \frac{1}{2} \mu x^2 - \mu \rho x,
    \quad \nabla f(x) = \mu (x - \rho).
\end{align*}
Suppose we start at initialization $x_0 = \frac{2\beta^2\rho}{\mu^2} \geq 8\rho$. Note from the definition of $\nabla f$ that for any $x_t \geq \rho$, we have $y_t = x_t + \rho \frac{\nabla f(x_t)}{|\nabla f(x_t)|} = x_t + \rho$ and $\nabla f(y_t) = \mu (x_t + \rho - \rho) = \mu x_t$. Therefore, the first SAM update can be written as
\begin{equation*}
    x_1 = x_0 - \eta \nabla f (y_0) = \left ( 1  - \eta \mu \right )x_0.
\end{equation*}
Since $1-\eta \mu \geq 1 - \frac{1}{2T} \geq \frac{1}{2}$, the inequality $x_1 \geq \rho$ still holds and the same argument can be repeated for the second update, yielding 
\begin{equation*}
    x_2 = x_1 - \eta \nabla f (y_1) = \left ( 1  - \eta \mu \right )x_1 = \left ( 1  - \eta \mu \right )^2 x_0.
\end{equation*}
In fact, $(1-\eta \mu)^t \geq (1-\frac{1}{2T})^T \geq \frac{1}{2}$ for all $t \leq T$, so all the iterates up to $x_T$ stay above $\rho$. Thus, for any $t = 0, \dots, T$, the iterate $x_t$ can be written and bounded from below as
\begin{equation*}
    x_t = \left ( 1  - \eta \mu \right )^t x_0 \geq \frac{x_0}{2} = \frac{\beta^2\rho}{\mu^2}.
\end{equation*}
Therefore, in this case, the suboptimality gap of the best iterate is at least
\begin{equation*}
    \min_{t \in \{0,\dots, T\}} f(x_t) - f^* \geq f\left (\frac{\beta^2\rho}{\mu^2}\right ) - f(\rho) = \frac{\beta^4\rho^2}{2\mu^3} - \frac{\beta^2 \rho^2}{\mu} + \frac{\mu \rho^2}{2} = \Omega\left (\frac{\beta^4\rho^2}{\mu^3} \right ).
\end{equation*}

\paragraph{Case 2: $\frac{1}{2\mu T} \leq \eta \leq \frac{2}{\beta}$.}
Consider a function
\begin{align*}
    f(x) = \frac{1}{4} \beta x^2,
    \quad \nabla f(x) = \frac{1}{2}\beta x.
\end{align*}
Suppose we start at initialization $x_0 = \frac{\eta \beta \rho}{4-\eta\beta}$. We are going to show that the SAM iterates oscillate between $\pm \frac{\eta \beta \rho}{4-\eta\beta}$.
Indeed, if $x_t = \frac{\eta \beta \rho}{4-\eta\beta}$, then $\nabla f(x_t) > 0$ and 
\begin{equation*}
    y_t = \frac{\eta \beta \rho}{4-\eta\beta} + \rho = \frac{4\rho}{4-\eta\beta}, \quad\text{and}\quad
    \nabla f(y_t) = \frac{2 \beta \rho}{4-\eta\beta}.
\end{equation*}
Then, after SAM update, we get
\begin{equation*}
    x_{t+1} = x_t - \eta \nabla f(y_t) = \frac{\eta \beta \rho}{4-\eta\beta} - \eta \frac{2 \beta \rho}{4-\eta\beta} = -\frac{\eta \beta \rho}{4-\eta\beta}.
\end{equation*}
The same argument can be repeated to show that $x_{t+2} = -x_{t+1} = x_t$. As a result, the iterates oscillate between $\pm \frac{\eta \beta \rho}{4-\eta\beta}$ forever. In this case, the suboptimality gap is bounded from below by
\begin{equation*}
    \min_{t \in \{0,\dots, T\}} f(x_t) - f^*
    \geq \frac{\beta}{4} \left ( \frac{\eta \beta \rho}{4-\eta\beta} \right )^2 
    \geq \frac{\eta^2\beta^3\rho^2}{64}.
\end{equation*}
Applying $\eta \geq \frac{1}{2\mu T}$ yields
\begin{equation*}
    \min_{t \in \{0,\dots, T\}} f(x_t) - f^*
    = \Omega \left ( \frac{\beta^3 \rho^2}{\mu^2 T^2} \right ).
\end{equation*}

\paragraph{Case 3: $\eta \geq \frac{2}{\beta}$.}
Consider a function
\begin{align*}
    f(x) = \frac{1}{2} \beta x^2,
    \quad \nabla f(x) = \beta x.
\end{align*}
Let the initialization be $x_0 = \frac{\beta\rho}{\mu}$. For any $x_t \geq 0$, we have $y_t = x_t + \rho$ and $\nabla f(y_t) = \beta(x_t + \rho)$. Then, the resulting SAM update becomes
\begin{equation*}
    x_{t+1} = x_t - \eta \nabla f(y_t) = (1-\eta \beta)x_t - \eta \rho \leq (1-\eta \beta)x_t.
\end{equation*}
Since we have $\eta \geq \frac{2}{\beta}$, we have $1-\eta\beta \leq -1$ and $x_{t+1}$ has the opposite sign as $x_t$ and its absolute value is at least as large as $|x_t|$. We can similarly check that any further SAM update changes the sign and does not decrease the absolute value. Therefore, for all $t = 0, \dots, T$, we have $|x_t| \geq |x_0|$.

Consequently, the suboptimality gap of the best iterate is at least
\begin{equation*}
    \min_{t \in \{0,\dots, T\}} f(x_t) - f^*
    \geq f(x_0) - f(0)
    = \Omega \left ( \frac{\beta^3 \rho^2}{\mu^2} \right ).
\end{equation*}

\paragraph{Remarks on validity of lower bound.}
Lastly, we comment on why we choose different functions and initialization for different choices of $\eta$ and why it suffices to provide a matching lower bound for Theorem~\ref{thm:fb_sc_ub}. In convergence upper bounds in the form of Theorem~\ref{thm:fb_sc_ub}, we aim to prove an upper bound on the following \emph{minimax risk}:
\begin{equation}
\label{eq:minimax_2}
    \inf_{A \in \mathcal A} \sup_{f \in \mathcal F} \zeta(A(f)),
\end{equation}
where $\mathcal A$ denotes the class of \emph{algorithms}, $\mathcal F$ denotes the class of \emph{functions}, and $\zeta(A(f))$ is the \emph{suboptimality measure} for algorithm's output $A(f)$ for function $f$. In the context of Theorem~\ref{thm:fb_sc_ub}, our algorithm class $\mathcal A$ corresponds to the choices of the ``hyperparameters'' $\eta$, $\rho$, and $\vx_0$ of SAM, and the function class $\mathcal F$ here is the class analyzed by the theorem: the collection of $\beta$-smooth and $\mu$-strongly convex functions. From this viewpoint, Theorem~\ref{thm:fb_sc_ub} can be thought of as an upper bound on \eqref{eq:minimax_2}, with the choice of $\zeta(A(f)) \triangleq \min_{t \in \{0,\dots, T\}} f(x_t) - f^* $.

Hence, showing a matching lower bound for Theorem~\ref{thm:fb_sc_ub} amounts to showing a matching lower bound for the minimax risk~\eqref{eq:minimax_2}. For this purpose, it suffices to show that \emph{for each} $A \in \mathcal A$, there \emph{exists} a choice of $f \in \mathcal F$ such that a certain lower bound holds. Therefore, we are allowed to choose different choices of $f$ for each different choice of hyperparameters $\eta$, $\rho$, and $\vx_0$. In fact, in our proof, we choose different choices of $f$ and $\vx_0$ for each different choice of $\eta$; however, this is without loss of generality once we notice here that starting an algorithm at $\vx_0$ to minimize $f(\vx)$ is equivalent to starting at $\vzero$ to minimize $f(\vx-\vx_0)$. Hence, even though we choose different $f$'s and $\vx_0$'s for different choices of $\eta$ in our proof of Theorem~\ref{thm:fb_sc_lb}, this is sufficient for providing a matching upper bound for Theorem~\ref{thm:fb_sc_ub}.

Admittedly, some authors show stronger versions than what we show, where a single function takes care of all possible hyperparameters. Indeed, such results provide lower bounds for $\sup_{f \in\mathcal F} \inf_{A \in \mathcal A} \zeta(A(f))$. Recalling that $\sup \inf \leq \inf \sup$, one can notice that these $\sup \inf$ lower bounds are in fact much stronger than what suffices.

\subsection{Convergence Proof for Smooth and Convex Functions (Proof of Theorem~\ref{thm:fb_cvx})}
\label{sec:proof_fb_cvx}
For smooth and convex function, we prove the convergence of gradient norm to zero. The theorem is restated for convenience.
\thmfbcvx*
\begin{proof}
Using Lemma~\ref{lma:fb_smth_cvx_prop1}, for smooth and convex function $f$, we have
\begin{align*}
    f(\vx_{t+1}) \leq f(\vx_t)-\frac{\eta}{2}\|\nabla f(\vx_t)\|^2 + \frac{\eta^2\beta^3\rho^2}{2},
\end{align*}
which can be rewritten as
\begin{align*}
    \|\nabla f(\vx_t)\|^2 \leq \frac{2}{\eta} (f(\vx_t)-f(\vx_{t+1})) + \eta\beta^3\rho^2.
\end{align*}
Adding up the inequality for $t = 0, \dots, T-1$, and the dividing both sides by $T$, we get
\begin{align}
\label{eq:thm_fb_cvx-1}
    \frac{1}{T} \sum_{t=0}^{T-1} \|\nabla f(\vx_t)\|^2 
    \leq \frac{2}{\eta T} (f(\vx_0)-f(\vx_{T})) + \eta\beta^3\rho^2
    \leq \frac{2}{\eta T} \Delta + \eta\beta^3\rho^2.
\end{align}
We now spell out how the choice of step size $\eta = \min \left \{ \sqrt{\frac{2\Delta}{\beta^3 \rho^2 T}}, \frac{1}{2\beta} \right \}$ results in the convergence rate in the theorem statement. There are two cases to be considered, depending on the outcome of the $\min$ operation.

\paragraph{Case A: $\sqrt{\frac{2\Delta}{\beta^3 \rho^2 T}} \leq \frac{1}{2\beta}$.}
Putting $\eta = \sqrt{\frac{2\Delta}{\beta^3 \rho^2 T}}$ into \eqref{eq:thm_fb_cvx-1}, we get
\begin{align*}
    \frac{1}{T} \sum_{t=0}^{T-1} \|\nabla f(\vx_t)\|^2 
    \leq \frac{\sqrt{2\Delta\beta^3\rho^2}}{\sqrt{T}} + \frac{\sqrt{2\Delta\beta^3\rho^2}}{\sqrt{T}} 
    = \frac{2 \sqrt{2\Delta\beta^3\rho^2}}{\sqrt{T}}.
\end{align*}

\paragraph{Case B: $\sqrt{\frac{2\Delta}{\beta^3 \rho^2 T}} \geq \frac{1}{2\beta}$.}
Substituting $\eta = \frac{1}{2\beta} \leq \sqrt{\frac{2\Delta}{\beta^3 \rho^2 T}}$ to \eqref{eq:thm_fb_cvx-1} yields
\begin{align*}
\frac{1}{T} \sum_{t=0}^{T-1} \|\nabla f(\vx_t)\|^2 
\leq \frac{4 \beta \Delta}{T} + \beta^3 \rho^2 \cdot \frac{1}{2\beta}
\leq \frac{4 \beta \Delta}{T} + \frac{\sqrt{2\Delta\beta^3\rho^2}}{\sqrt{T}}.
\end{align*}

Merging the two cases, we conclude
\begin{equation*}
    \frac{1}{T} \sum_{t=0}^{T-1} \|\nabla f(\vx_t)\|^2 = \mathcal{O} \left ( \frac{\beta \Delta}{T} + \frac{\sqrt{\Delta\beta^3\rho^2}}{\sqrt{T}} \right ),
\end{equation*}
completing the proof. 
\end{proof}

\subsection{Convergence Proof for Smooth and Nonconvex Functions (Proof of Theorem~\ref{thm:fb_smth})}
\label{sec:proof_fb_smth}
In this section, we prove the convergence up to an additive factor $\beta^2\rho^2$ for smooth and nonconvex functions. The theorem is restated for convenience.
\thmfbsmth*

\begin{proof}
Starting from the definition of $\beta$-smoothness, we have
\begin{align*}
    f(\vx_{t+1}) &\leq f(\vx_t)-\eta \langle \nabla f(\vx_t), \nabla f(\vy_t)\rangle + \frac{\eta^2\beta}{2}\|\nabla f(\vy_t)\|^2 \\
    &= f(\vx_t)-\frac{\eta}{2}\|\nabla f(\vx_t)\|^2-\frac{\eta}{2}\|\nabla f(\vy_t)\|^2 + \frac{\eta}{2} \|\nabla f(\vx_t)-\nabla f(\vy_t)\|^2 + \frac{\eta^2\beta}{2}\|\nabla f(\vy_t)\|^2\\
    &\leq f(\vx_t)-\frac{\eta}{2}\|\nabla f(\vx_t)\|^2 + \frac{\eta\beta^2}{2} \|\vx_t-\vy_t\|^2\\
    &= f(\vx_t)-\frac{\eta}{2}\|\nabla f(\vx_t)\|^2 + \frac{\eta\beta^2\rho^2}{2}.\\
\end{align*}
The inequality can be rearranged as
\begin{align*}
    \|\nabla f(\vx_t)\|^2 \leq \frac{2}{\eta} \left ( f(\vx_t) - f(\vx_{t+1} )\right ) + \beta^2 \rho^2.
\end{align*}
Adding up the inequality for $t = 0, \dots, T-1$, and the dividing both sides by $T$, we get
\begin{align}
\label{eq:thm_fb_smth-1}
\frac{1}{T}\sum_{t=0}^{T-1} \|\nabla f(\vx_t)\|^2
\leq \frac{2}{\eta T} \left ( f(\vx_0) - f(\vx_T) \right ) + \beta^2 \rho^2
\leq \frac{2\Delta}{\eta T}  + \beta^2 \rho^2.
\end{align}
Substituting $\eta = \frac{1}{\beta}$ to \eqref{eq:thm_fb_smth-1} yields
\begin{align*}
\frac{1}{T}\sum_{t=0}^{T-1} \|\nabla f(\vx_t)\|^2
\leq \frac{2\beta\Delta}{T} + \beta^2 \rho^2.
\end{align*}
\end{proof}

\subsection{Non-convergence for a Smooth and Nonconvex Function (Proof of Theorem~\ref{thm:fb_noncvx_counter})}
\label{sec:proof_fb_noncvx_counter}
In this section, we spell out the proof of our counterexample that SAM with constant perturbation size $\rho$ provably fails to converge to a stationary point. Here we restate the theorem.
\thmfbnoncvxcounter*

\begin{proof}
For $x \in \sR$, consider the following one-dimensional function: given $\rho$ and $\beta$,
\begin{equation*}
    f(x) = \frac{9 \beta \rho^2}{25 \pi^2} \sin\left (\frac{5\pi}{3\rho}x\right ).
\end{equation*}
It is easy to check that this function is $\beta$-smooth and $\frac{3\beta\rho}{5\pi}$-Lipschitz continuous.
Also, let 
\begin{equation*}
    \sX = \{x \mid x=(0.3+0.6 k)\rho, k \in \sZ\},
\end{equation*}
which is the set of points $x$ where $\nabla f(x) = 0$.
For SAM with perturbation size $\rho$, given the current iterate $x_t$, the corresponding $y_t = x_t + \rho \frac{\nabla f(x_t)}{|\nabla f(x_t)|}$ is given by
\begin{equation*}
    y_t = x_t + 
    \begin{cases}
    \rho & \text{ if } (-0.3 + 1.2k)\rho < x_t < (0.3 + 1.2k)\rho \text{ for some } k \in \sZ,\\
    -\rho & \text{ if } (0.3 + 1.2k)\rho < x_t < (0.9 + 1.2k)\rho \text{ for some } k \in \sZ,\\
    0 & \text{ if } x_t \in \sX.
    \end{cases}
\end{equation*}
This leads to the following a virtual gradient map $G_f$:
\begin{equation*}
    G_f(x_t) = \nabla f(y_t) = 
    \begin{cases}
    \frac{3\beta\rho}{5\pi} \cos \left ( \frac{5\pi}{3\rho}x_t + \frac{5\pi}{3} \right ) &\text{ if } (-0.3 + 1.2k)\rho < x_t < (0.3 + 1.2k)\rho \text{ for some } k \in \sZ, \\
    \frac{3\beta\rho}{5\pi} \cos \left ( \frac{5\pi}{3\rho}x_t - \frac{5\pi}{3} \right ) & \text{ if } (0.3 + 1.2k)\rho < x_t < (0.9 + 1.2k)\rho \text{ for some } k \in \sZ, \\
    0 & \text{ if } x_t \in \sX.
    \end{cases}
\end{equation*}
From this virtual gradient map, we can define
\begin{equation*}
    \sY = \{x \mid x=(0.7+1.2 k)\rho, k \in \sZ\} \cup \{x \mid x=(-0.1+1.2 k)\rho, k \in \sZ\}
\end{equation*}
and $\sY$ is the set of all points $x$ where $G_f(x) = 0$ and $\nabla f(x) \neq 0$.

Since $f$ is a one-dimensional function, a virtual loss $J_f$ can be obtained by integrating $G_f$. One possible example is
\begin{equation*}
    J_f(x) = 
    \begin{cases}
    \frac{9 \beta \rho^2}{25 \pi^2} \sin \left ( \frac{5\pi}{3\rho}x + \frac{5\pi}{3} \right ) &\text{ if } (-0.3 + 1.2k)\rho \leq x \leq (0.3 + 1.2k)\rho \text{ for some } k \in \sZ,\\
    \frac{9 \beta \rho^2}{25 \pi^2} \sin \left ( \frac{5\pi}{3\rho}x - \frac{5\pi}{3} \right ) & \text{ if } (0.3 + 1.2k)\rho \leq x \leq (0.9 + 1.2k)\rho \text{ for some } k \in \sZ,\\
    \end{cases}
\end{equation*}
which is a piecewise $\beta$-smooth function that is minimized at points in $\sY$ and locally maximized at (non-differentiable) points in $\sX$. Since $J_f$ is well-defined, we can view SAM as GD on $J_f$.

For sufficiently small $\eta \leq \frac{1}{\beta}$ and any initialization $x_0 \in \sR \setminus \sX$, the initialization belongs to one of the intervals $((-0.3+0.6k)\rho, (0.3+0.6k)\rho)$. For such small enough $\eta$, we can guarantee that the SAM iterates $x_t$ will stay in the interval $((-0.3+0.6k)\rho, (0.3+0.6k)\rho)$ for all $t \geq 0$. For example, suppose that $x_t \in (-0.3 \rho , 0.3 \rho)$. Then, the next iterate will stay in the same interval if and only if the image of the interval $(-0.3 \rho , 0.3 \rho)$ under a map $x \mapsto x - \frac{3\eta \beta \rho}{5\pi} \cos \left ( \frac{5\pi}{3\rho}x + \frac{5\pi}{3} \right)$ is a subset of $(-0.3 \rho , 0.3 \rho)$.
\begin{align*}
&\left \{ x - \frac{3\eta \beta \rho}{5\pi} \cos \left ( \frac{5\pi}{3\rho}x + \frac{5\pi}{3} \right) \mid x \in (-0.3 \rho , 0.3 \rho) \right \} \subset (-0.3 \rho , 0.3 \rho)\\
\iff
&
\left \{ z - \eta\beta \cos \left ( z + \frac{5\pi}{3} \right) \mid z \in \left (-\frac{\pi}{2} , \frac{\pi}{2} \right ) \right \} \subset \left (-\frac{\pi}{2} , \frac{\pi}{2} \right ),
\end{align*}
and the containment is true if $\eta \beta \leq 1$. By symmetry, the same argument can be applied to any intervals of the form $((-0.3+0.6k)\rho, (0.3+0.6k)\rho)$.

Thus, for any initialization $x_0 \in \sR \setminus \sX$, all SAM iterates stay inside the interval which $x_0$ belongs to. Inside the interval, by $\beta$-smoothness of $J_f$, the following descent lemma always holds:
\begin{align*}
    J_f(x_{t+1}) 
    &\leq J_f(x_t) + \langle \nabla J_f(x_t), x_{t+1}- x_t \rangle + \frac{\beta}{2} \| x_{t+1} - x_t \|^2\\
    &= J_f(x_t) - \eta \left ( 1 - \frac{\eta \beta}{2} \right ) \| G_f(x_t) \|^2\\
    &\leq J_f(x_t) - \frac{\eta}{2} \| G_f(x_t) \|^2.
\end{align*}
Therefore, if we add the inequalities up for $t = 0, \dots, T-1$, we get
\begin{align*}
    \sum_{t=0}^{T-1} \| G_f(x_t) \|^2 \leq \frac{2}{\eta} (J_f(x_0) - J_f(x_T)) \leq \frac{2}{\eta} (J_f(x_0) - J_f^*) < \infty,
\end{align*}
where $J_f^* \triangleq \inf_x J_f(x) > -\infty$. Since the inequality holds for all $T$, the series is summable, which in turn implies that $\| G_f(x_t) \| \to 0$ as $t \to \infty$. However, the only point in the interval $((-0.3+0.6k)\rho, (0.3+0.6k)\rho)$ satisfying $\| G_f(x) \| = 0$ is the one in $\sY \cap ((-0.3+0.6k)\rho, (0.3+0.6k)\rho)$. Hence, if $x_0 \in \sR \setminus \sX$ and $\eta \leq \frac{1}{\beta}$, SAM must converge to a point in $\sY$. Since $x_0$ is drawn from a continuous probability distribution, $x_0 \in \sR \setminus \sX$ holds almost surely. This finishes the proof.
%and for any initialization $x_0 \notin \sX$, SAM never converges to points in $\sX$. In this example, instead, SAM converges to points in $\sY$. Figure \ref{fig:nonstationary} demonstrates the gradient directions and converging points of SAM in example function $f(x)$.
% Also, using the fact that the example function is $L$-Lipschitz with $L = \frac{5\pi}{3\rho}$, if $L\eta \leq \frac{\rho}{5}$ (which implies $\eta \leq \frac{3\rho^2}{25\pi} $) holds, SAM would never converge to stationary point.
\end{proof}

\subsection{Non-convergence for a Nonsmooth and Convex Functions (Proof of Theorem~\ref{thm:fb_nonsmth_counter})}
\label{sec:proof_fb_nonsmth_counter}
For nonsmooth convex function, depending on the initialization, SAM can converge to a suboptimal point with distance $\Omega(\rho)$ from the global minimum. The theorem is restated for convenience.
\thmfbnonsmthcounter*

\begin{proof}
For $\vx = (x^{(1)}, x^{(2)}) \in \sR^2$, consider a 2-dimensional function,
\begin{equation*}
    f(\vx) = \max\{|x^{(1)}|,|2x^{(1)}+x^{(2)}|\}.
\end{equation*}
A set $\{\vv_1, \vv_2\}$ can be established as a basis for $\mathbb{R}^2$, with $\vv_1 = \ve_1$, and $\vv_2 = \frac{2}{\sqrt{5}} \ve_1 + \frac{1}{\sqrt{5}} \ve_2$. Any vector $\vx \in \sR^2$ can be uniquely expressed by $\vv_1$ and $\vv_2$.
Consider the region: $\{\vx = b^{(1)}\vv_1 + b^{(2)}\vv_2 \mid b^{(1)} < 0\}$.

This area can be partitioned into four separate regions.
\begin{align*}
    \begin{cases}
        &\sA : \left\{\vx =b^{(1)}\vv_1 + b^{(2)}\vv_2 \mid -\frac{7\rho}{2} < b^{(1)} < 0\right\}, \\
        &\sB : \left\{\vx =b^{(1)}\vv_1 + b^{(2)}\vv_2 \mid b^{(1)} < -\frac{7\rho}{2}\right\} \cap \left\{\vx =b^{(1)}\vv_1 + b^{(2)}\vv_2 \mid b^{(1)} + \sqrt{5}b^{(2)} > 0\right\}, \\
        &\sC : \left\{\vx =b^{(1)}\vv_1 + b^{(2)}\vv_2 \mid b^{(1)} + \sqrt{5}b^{(2)} < 0\right\} \cap \left\{\vx =b^{(1)}\vv_1 + b^{(2)}\vv_2 \mid -b^{(1)} + \sqrt{5}b^{(2)} > 0\right\} \\
        &\quad \cap \left\{\vx =b^{(1)}\vv_1 + b^{(2)}\vv_2 \mid -2b^{(1)} + \sqrt{5}b^{(2)} > \frac{3}{2}\rho\right\}, \\
        &\sD : \left\{\vx =b^{(1)}\vv_1 + b^{(2)}\vv_2 \mid b^{(1)}<0\right\} - \left(\sA \cup \sB \cup \sC\right).
    \end{cases}
\end{align*}

\begin{figure}[t!]
    \centering
    \subfigure[]{\includegraphics[width=0.45\textwidth]{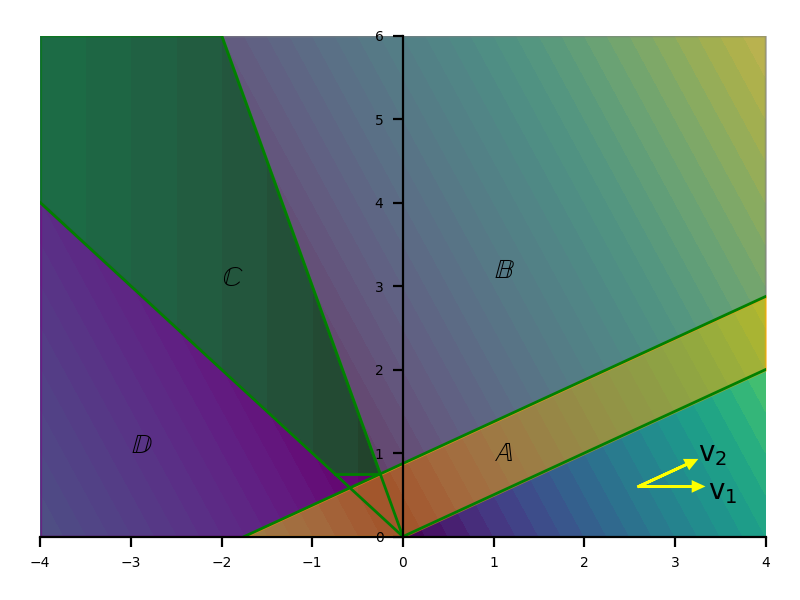}}
    \subfigure[]{\includegraphics[width=0.45\textwidth]{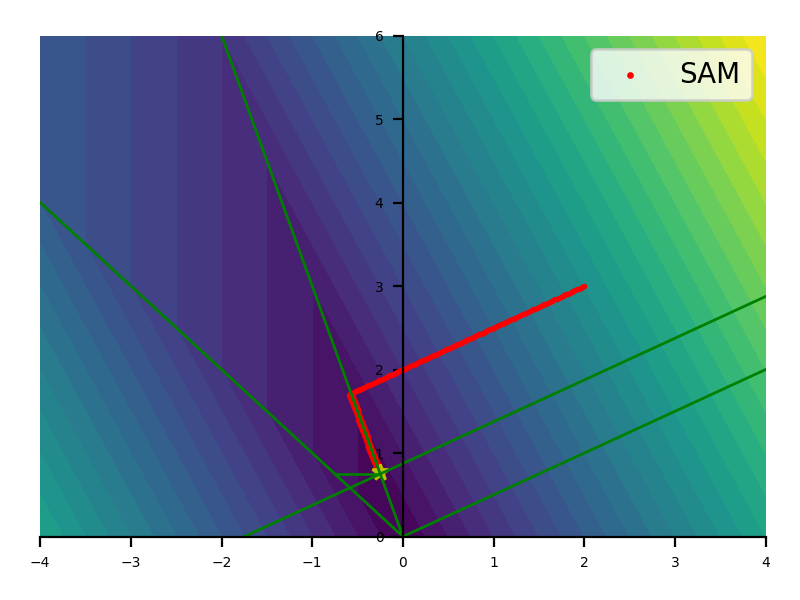}}
    \caption{(a) The demonstration of four seperate regions, and basis vectors of example function. (b) The trajectory of SAM for example function. As SAM enters region $\sA$, the update cannot get any closer to the global minima.}
\label{fig:nonsmth_landscape}
\end{figure}

Figure~\ref{fig:nonsmth_landscape}(a) demonstrates the regions $\sA, \sB, \sC, \sD$, as well as the vectors $\vv_1$ and $\vv_2$. When $\vx$ belongs to the set ${\vx = b^{(1)}\vv_1 + b^{(2)}\vv_2 \mid b^{(1)} < 0}$, we can examine four different scenarios.
\paragraph{Case A: $\vx = b^{(1)}\vv_1 + b^{(2)}\vv_2 \in \sA$.}
$\vx \in \sA$, so $\vy = \vx + \rho\frac{\nabla f(\vx)}{\|\nabla f(\vx)\|}$ and $-\nabla f(\vy)$ are as follows.
\begin{align*}
    \vy = 
    \begin{cases}
    \vx + \rho\vv_2, &-b^{(1)} < \sqrt{5}b^{(2)} \\
    \vx - \rho\vv_1, &b^{(1)} < \sqrt{5}b^{(2)} < -b^{(1)} \\
    \vx - \rho\vv_2, &\sqrt{5}b^{(2)} < b^{(1)}.
    \end{cases}
\end{align*}
\begin{align*}
    -\nabla f(\vy) = 
    \begin{cases}
    -\sqrt{5}\vv_2, &-b^{(1)} < \sqrt{5}b^{(2)} \\
    \sqrt{5}\vv_2, &b^{(1)} < \sqrt{5}b^{(2)} < -b^{(1)} \\
    \sqrt{5}\vv_2, &\sqrt{5}b^{(2)} < b^{(1)}.
    \end{cases}
\end{align*}
As a result, the SAM updates $\nabla f(\vy)$ only affect $\vv_2$, and does not affect on $\vv_1$. Thus, if $\vx_t \in \sA$, the next SAM iterate $\vx_{t+1}$ remains in $\sA$.

\paragraph{Case B: $\vx = b^{(1)}\vv_1 + b^{(2)}\vv_2 \in \sB$.}
We can verify that $\vy = \vx + \rho\frac{\nabla f(\vx)}{\|\nabla f(\vx)\|} = \vx + \rho\vv_2$. Therefore, $-\nabla f(\vy) = -\sqrt{5}\vv_2$.
Therefore, the SAM update shifts in the direction of $-\vv_2$. Hence, if $\vx_t \in \sB$, the next SAM iterate will be $\vx_{t+1} \in \left(\sB \cup \sC \cup \sD\right)$.

\paragraph{Case C: $\vx = b^{(1)}\vv_1 + b^{(2)}\vv_2 \in \sC$.}
$\vy = \vx + \rho\frac{\nabla f(\vx)}{\|\nabla f(\vx)\|} = \vx - \rho\vv_1$, so
\begin{align*}
    -\nabla f(\vy_t) = 
    \begin{cases}
    \sqrt{5}\vv_2, &b^{(1)} < \sqrt{5}b^{(2)} < b^{(1)}+\rho \\
    \vv_1, &b^{(1)} + \rho < \sqrt{5}b^{(2)} < -b^{(1)}.
    \end{cases}
\end{align*}
For $\vx_t = b^{(1)}_t\vv_1 + b^{(2)}_t\vv_2 \in \sC$, 
if $b^{(1)}_t < \sqrt{5}b^{(2)}_t < b^{(1)}_t+\rho$, then SAM update shifts in the direction of $\vv_2$, resulting in $\vx_{t+1} \in \left(\sB \cup \sC \cup \sD\right)$. Otherwise, the next SAM iterate $\vx_{t+1}$ shifts in the direction of $+\vv_1$. Consequently, $\vx_{t+1}$ can either remain in $\sB \cup \sC \cup \sD$, or move to $\sA$.

Assume that $\vx_{t+1}$ moves to $\sA$. Given that $b^{(1)}_t < -\frac{7\rho}{2}$ for $\vx_t \in \sC$, we can verify that $b^{(1)}_{t+1} < -\frac{7\rho}{2} + \eta $.Based on our previous observation in \textbf{Case A} where SAM updates only affect $\vv_2$ when $\vx \in \sA$, we can conclude that for all subsequent iterates $\vx_{i}$ with $i> t+1$, the condition $b^{(1)}_{i} < -\frac{7\rho}{2} + \eta $ continues to hold.

\paragraph{Case D: $\vx = b^{(1)}\vv_1 + b^{(2)}\vv_2 \in \sD$.}
$\vy = \vx + \rho\frac{\nabla f(\vx)}{\|\nabla f(\vx)\|}$ becomes
\begin{align*}
    \vy = 
    \begin{cases}
    \vx - \rho\vv_2, &\sqrt{5}b^{(2)} < b^{(1)} \\
    \vx - \rho\vv_1, &\text{otherwise}.
    \end{cases}
\end{align*}
We can check that $-\nabla f(\vy) = \sqrt{5}\vv_2$ for all cases. So the SAM update shifts in the direction of $+\vv_2$. As a result, if $\vx_t \in \sD$, the next SAM iterate $\vx_{t+1}$ will fall into $\vx_{t+1} \in \left(\sB \cup \sC \cup \sD\right)$.

Furthermore, we can verify that for $\vx \in \left(\sB \cup \sC \cup \sD\right)$, $b^{(1)} < -\frac{7\rho}{2}$ holds. Therefore, summing up all the cases, we can conclude that if the initial iterate $\vx_0$ is chosen from $\vx_0 \in \left(\sB \cup \sC \cup \sD\right)$, then all subsequent SAM iterates $\vx_t$ will satisfy $b^{(1)}_{t} < -\frac{7\rho}{2} + \eta$. Since the global minimum is located at $\vx^* = (0,0)$, it follows that $\|\vx_t - \vx^*\| > \frac{|7\rho/2 - \eta|}{\sqrt{5}}$ for every $t > 0$. Thus, for $\eta < \frac{7\rho}{4}$, we can ascertain that in this particular function, $\|\vx_t - \vx^*\| > \frac{7\rho}{4\sqrt{5}}$ for every $t>0$, thereby proving that the distance from the global minimum is at least $\Omega(\rho)$. The trajectory plot of SAM on this example function is illustrated in Figure~\ref{fig:nonsmth_landscape}(b).
\end{proof}
%Thus, for sufficiently small values of $\eta$ (e.g., $\eta < \frac{7\rho}{2} - \sqrt{5}\rho$), we can ascertain that in this particular function, the suboptimal gap is at least $\Omega(\rho)$ away from the global minimum. The trajectory plot of SAM on this example function is illustrated in Figure~\ref{fig:nonsmth_landscape}(b).

\newpage
\section{Proofs for (Non-)Convergence of Stochastic SAM}
\label{sec:c}

In this section, we provide in-depth demonstrations and proofs regarding convergence of stochastic SAM with constant perturbation size $\rho$. The objective is defined as $f(\vx) = \E_{\xi} [l(\vx; \xi)]$, where $\xi$ represents a stochastic parameter (e.g., data sample) and $l(\vx;\xi)$ represents the loss at point $\vx$ with a random sample $\xi$. The update iteration for stochastic SAM is specified as follows:
\begin{equation*}
    \begin{cases}
    \vy_t = \vx_t + \rho \frac{g (\vx_t)}{\|g (\vx_t)\|},\\
    \vx_{t+1} = \vx_t - \eta \tilde{g} (\vy_t).
    \end{cases}
\end{equation*}
We define $g(\vx) = \nabla_{\vx} l(\vx;\xi)$ and $\tilde{g}(\vx) = \nabla_{\vx} l(\vx;\tilde{\xi})$, where $\xi$ and $\tilde{\xi}$ are stochastic parameters. Stochastic SAM comes in two variations: $n$-SAM, where $\xi$ and $\tilde{\xi}$ are independent, and $m$-SAM, where $\xi$ is equal to $\tilde{\xi}$.

\subsection{Important Lemmas Regarding Stochastic SAM}
In this section, we present a number of lemmas that are utilized in our theorem proofs regarding stochastic SAM. In order to do this, we also introduce extra notation, $\hat{\vy}_t = \vx_t + \rho \frac{\nabla f(\vx_t)}{\|\nabla f(\vx_t)\|}$, as a deterministically ascended parameter.

\begin{lemma}
\label{lma:sto_sc_prop1}
For a differentiable and $\mu$-strongly convex function $f$, we have
\begin{equation*}
    \E \langle \nabla f(\vx_t), \tilde{g}(\vy_t)-\nabla f(\vx_t) \rangle \geq \E \langle \nabla f(\vx_t), \tilde{g}(\vy_t)-\tilde{g}(\hat{\vy}_t) \rangle + \mu\rho\E\|\nabla f(\vx_t)\|.
\end{equation*}
For a differentiable and convex function $f$, the inequality continues to hold with $\mu=0$.
\end{lemma}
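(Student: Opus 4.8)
The plan is to reduce this stochastic inequality to the already-established deterministic Lemma~\ref{lma:fb_cvx_prop1} by inserting the deterministically ascended point $\hat{\vy}_t$ as an intermediary. First I would split the increment along $\hat{\vy}_t$, writing
\[
\tilde{g}(\vy_t) - \nabla f(\vx_t) = \big(\tilde{g}(\vy_t) - \tilde{g}(\hat{\vy}_t)\big) + \big(\tilde{g}(\hat{\vy}_t) - \nabla f(\vx_t)\big),
\]
and correspondingly decompose the inner product, so that $\E \langle \nabla f(\vx_t), \tilde{g}(\vy_t) - \nabla f(\vx_t)\rangle$ equals the target cross term $\E \langle \nabla f(\vx_t), \tilde{g}(\vy_t) - \tilde{g}(\hat{\vy}_t)\rangle$ plus a residual $\E \langle \nabla f(\vx_t), \tilde{g}(\hat{\vy}_t) - \nabla f(\vx_t)\rangle$. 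It then suffices to show that this residual is bounded below by $\mu\rho\,\E\|\nabla f(\vx_t)\|$.

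The key observation, and the very reason the auxiliary point $\hat{\vy}_t$ is introduced, is that $\hat{\vy}_t = \vx_t + \rho \frac{\nabla f(\vx_t)}{\|\nabla f(\vx_t)\|}$ is built from the \emph{full} gradient and is therefore a deterministic function of $\vx_t$, carrying no dependence on the sample $\tilde{\xi}$ drawn at step $t$. This holds uniformly for both $n$-SAM and $m$-SAM: even in the $m$-SAM coupling where $\xi = \tilde{\xi}$, the ascent target $\hat{\vy}_t$ does not involve $\xi$, so $\tilde{g}(\hat{\vy}_t)$ remains an unbiased estimate of $\nabla f(\hat{\vy}_t)$ conditioned on $\vx_t$. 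Concretely, I would condition on $\vx_t$ and combine $\E[\tilde{g}(\hat{\vy}_t)\mid \vx_t] = \nabla f(\hat{\vy}_t)$ with the tower property to obtain
\[
\E \langle \nabla f(\vx_t), \tilde{g}(\hat{\vy}_t) - \nabla f(\vx_t)\rangle = \E \langle \nabla f(\vx_t), \nabla f(\hat{\vy}_t) - \nabla f(\vx_t)\rangle.
\]

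Finally, since $\hat{\vy}_t$ is exactly the deterministic SAM ascent point appearing in Lemma~\ref{lma:fb_cvx_prop1} (with $\hat{\vy}_t$ in the role of $\vy_t$ there), I would apply that lemma pointwise to get $\langle \nabla f(\vx_t), \nabla f(\hat{\vy}_t) - \nabla f(\vx_t)\rangle \geq \mu\rho\|\nabla f(\vx_t)\|$ and take expectations, which yields the desired lower bound $\mu\rho\,\E\|\nabla f(\vx_t)\|$ on the residual; combining with the decomposition completes the proof, and the convex case is identical upon setting $\mu = 0$. The only genuinely delicate point is the conditional-unbiasedness step: one must verify that $\hat{\vy}_t$ is measurable with respect to $\vx_t$ alone even in the $m$-SAM regime, where the ascent and descent samples coincide. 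This is precisely why the decoupling is routed through the deterministic $\hat{\vy}_t$ rather than through the stochastic $\vy_t$—the latter does depend on $\xi$ and would spoil the unbiasedness under the $m$-SAM coupling.
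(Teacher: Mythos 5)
Your proposal is correct and follows essentially the same route as the paper's proof: the same decomposition through the deterministic ascent point $\hat{\vy}_t$, the same conditional-unbiasedness step replacing $\tilde{g}(\hat{\vy}_t)$ by $\nabla f(\hat{\vy}_t)$ in expectation, and the same final application of Lemma~\ref{lma:fb_cvx_prop1}. If anything, you spell out the measurability justification (that $\hat{\vy}_t$ depends only on $\vx_t$, so unbiasedness survives the $m$-SAM coupling) more explicitly than the paper does.
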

\begin{proof}
\begin{align*}
    \E \langle \nabla f(\vx_t), \tilde{g}(\vy_t)-\nabla f(\vx_t)\rangle &= \E \langle \nabla f(\vx_t), \tilde{g}(\vy_t) -\tilde{g}(\hat{\vy}_t) \rangle +\E \langle \nabla f(\vx_t), \tilde{g}(\hat{\vy}_t) - \nabla f(\vx_t) \rangle \\
    &= \E \langle \nabla f(\vx_t), \tilde{g}(\vy_t) -\tilde{g}(\hat{\vy}_t) \rangle + \E\langle \nabla f(\vx_t), \nabla f(\hat{\vy}_t) - \nabla f(\vx_t) \rangle \\
    &\geq \E \langle \nabla f(\vx_t), \tilde{g}(\vy_t) -\tilde{g}(\hat{\vy}_t) \rangle + \mu\rho\E\| \nabla f(\vx_t)\|,
\end{align*}
where we use Lemma~\ref{lma:fb_cvx_prop1} in the last inequality, thereby completing the proof.
\end{proof}

\begin{lemma}
\label{lma:sto_sc_prop2}
    Under $n$-SAM, for a $\beta$-smooth and $\mu$-strongly convex function $f$, we have
\begin{equation*}
    \E\langle \nabla f(\vx_t), \tilde{g}(\vy_t)-\nabla f(\vx_t)\rangle \geq -\frac{1}{2}\E\|\nabla f(\vx_t)\|^2 + \mu\rho\E\|\nabla f(\vx_t)\| -2\beta^2\rho^2.
\end{equation*}
Under $m$-SAM, additionally assuming $l(\cdot,\xi)$ is $\beta$-smooth for any $\xi$, the inequality continues to hold.
\end{lemma}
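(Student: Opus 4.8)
The plan is to start from Lemma~\ref{lma:sto_sc_prop1}, which already isolates the term $\mu\rho\,\E\|\nabla f(\vx_t)\|$ and reduces the task to lower bounding the residual $\E\langle\nabla f(\vx_t),\tilde{g}(\vy_t)-\tilde{g}(\hat{\vy}_t)\rangle$. My target for this residual is $-\tfrac{1}{2}\E\|\nabla f(\vx_t)\|^2-2\beta^2\rho^2$; adding it to the bound of Lemma~\ref{lma:sto_sc_prop1} produces the claim verbatim. The one geometric fact I would use throughout is that $\vy_t$ and $\hat{\vy}_t$ each differ from $\vx_t$ by a $\rho$-scaled unit vector, so by the triangle inequality $\|\vy_t-\hat{\vy}_t\|\le 2\rho$ holds for every realization of the randomness.

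For the $n$-SAM case I would exploit independence. Let $\mathcal{F}_t$ denote the history through $\vx_t$; conditioned on $\mathcal{F}_t$ and $\xi$, both $\vy_t$ (built from $g(\vx_t)=\nabla l(\vx_t;\xi)$) and $\hat{\vy}_t$ (built from $\nabla f(\vx_t)$) are determined, and since $\tilde{\xi}\perp\xi$, the oracle $\tilde{g}$ is unbiased at these frozen points: $\E[\tilde{g}(\vy_t)\mid\mathcal{F}_t,\xi]=\nabla f(\vy_t)$ and likewise for $\hat{\vy}_t$. Taking this conditional expectation replaces the stochastic gradients by true gradients, so by the tower property the residual equals $\E\langle\nabla f(\vx_t),\nabla f(\vy_t)-\nabla f(\hat{\vy}_t)\rangle$. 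I then apply Cauchy--Schwarz, $\beta$-smoothness of $f$, and $\|\vy_t-\hat{\vy}_t\|\le 2\rho$ to obtain the pointwise bound $-2\beta\rho\|\nabla f(\vx_t)\|$, and finish with Young's inequality $2\beta\rho\, a\le\tfrac{1}{2}a^2+2\beta^2\rho^2$.

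For the $m$-SAM case independence is unavailable ($\xi=\tilde{\xi}$), so I cannot decouple $\tilde{g}$ from $\vy_t$. Instead I invoke the extra hypothesis that each component $l(\cdot;\xi)$ is $\beta$-smooth, which gives the pointwise estimate $\|\tilde{g}(\vy_t)-\tilde{g}(\hat{\vy}_t)\|\le\beta\|\vy_t-\hat{\vy}_t\|\le 2\beta\rho$ directly, with no expectation taken inside. Cauchy--Schwarz and the same Young's inequality applied to $-\|\nabla f(\vx_t)\|\cdot 2\beta\rho$ again yield $-\tfrac{1}{2}\|\nabla f(\vx_t)\|^2-2\beta^2\rho^2$, and taking expectations completes this case.

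The main obstacle is the conditioning bookkeeping in the $n$-SAM argument: one must verify that $\vy_t$ and $\hat{\vy}_t$ are measurable with respect to $(\mathcal{F}_t,\xi)$ so that freezing them and invoking unbiasedness of $\tilde{g}$ is legitimate, and that the deterministic inequality $\|\vy_t-\hat{\vy}_t\|\le 2\rho$ passes through the expectation. Once this is set up, both cases collapse to the same Cauchy--Schwarz/Young estimate, so I anticipate no further difficulty.
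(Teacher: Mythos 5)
Your proposal is correct and follows essentially the same route as the paper's proof: both start from Lemma~\ref{lma:sto_sc_prop1}, use independence of $\tilde{\xi}$ (for $n$-SAM) or componentwise $\beta$-smoothness (for $m$-SAM) to bound the residual $\E\langle\nabla f(\vx_t),\tilde{g}(\vy_t)-\tilde{g}(\hat{\vy}_t)\rangle$, exploit $\|\vy_t-\hat{\vy}_t\|\le 2\rho$, and close with a Young-type inequality yielding the identical constants $-\tfrac{1}{2}\E\|\nabla f(\vx_t)\|^2-2\beta^2\rho^2$. Your explicit conditioning argument for the $n$-SAM unbiasedness step is a slightly more careful justification of the equality the paper states without comment, but the substance is the same.
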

\begin{proof}
First we consider $n$-SAM. Starting from Lemma~\ref{lma:sto_sc_prop1}, we have
\begin{align*}
    \E \langle \nabla f(\vx_t), \tilde{g}(\vy_t)-\nabla f(\vx_t)\rangle &\geq \E \langle \nabla f(\vx_t), \tilde{g}(\vy_t) -\tilde{g}(\hat{\vy}_t) \rangle + \mu\rho\E\| \nabla f(\vx_t)\| \\
    &=\E \langle \nabla f(\vx_t), \nabla f(\vy_t) -\nabla f(\hat{\vy}_t) \rangle + \mu\rho\E\| \nabla f(\vx_t)\| \\
    &\geq -\frac{1}{2}\E\|\nabla f(\vx_t)\|^2 - \frac{1}{2} \E \|\nabla f(\vy_t) - \nabla f (\hat{\vy}_t)\|^2 + \mu\rho\E\| \nabla f(\vx_t)\| \\
    &\geq -\frac{1}{2}\E\|\nabla f(\vx_t)\|^2 - \frac{\beta^2}{2} \E \|\vy_t - \hat{\vy}_t\|^2 + \mu\rho\E\| \nabla f(\vx_t)\| \\
    &= -\frac{1}{2}\E\|\nabla f(\vx_t)\|^2 - \frac{\beta^2}{2} \E \left\|\rho\frac{g(\vx_t)}{\|g(\vx_t)\|} - \rho\frac{\nabla f(\vx_t)}{\|\nabla f(\vx_t)\|}\right\|^2 \\
    &\quad + \mu\rho\E\| \nabla f(\vx_t)\| \\
    &\geq -\frac{1}{2}\E\|\nabla f(\vx_t)\|^2  + \mu\rho\E\| \nabla f(\vx_t)\| - 2\beta^2\rho^2.
\end{align*}
Next we consider $m$-SAM. additionally assuming $l(\cdot,\xi)$ is $\beta$-smooth for any $\xi$, starting from Lemma~\ref{lma:sto_sc_prop1}, we have
\begin{align*}
    \E \langle \nabla f(\vx_t), \tilde{g}(\vy_t)-\nabla f(\vx_t)\rangle &\geq \E \langle \nabla f(\vx_t), \tilde{g}(\vy_t) -\tilde{g}(\hat{\vy}_t) \rangle + \mu\rho\| \nabla f(\vx_t)\| \\
    &\geq -\frac{1}{2}\E\|\nabla f(\vx_t)\|^2 - \frac{1}{2} \E \|\tilde{g}(\vy_t) - \tilde{g} (\hat{\vy}_t)\|^2 + \mu\rho\E\| \nabla f(\vx_t)\| \\
    &\geq -\frac{1}{2}\E\|\nabla f(\vx_t)\|^2 - \frac{\beta^2}{2} \E \|\vy_t - \hat{\vy}_t\|^2 + \mu\rho\E\| \nabla f(\vx_t)\| \\
    &= -\frac{1}{2}\E\|\nabla f(\vx_t)\|^2 - \frac{\beta^2}{2} \E \left\|\rho\frac{g(\vx_t)}{\|g(\vx_t)\|} - \rho\frac{\nabla f(\vx_t)}{\|\nabla f(\vx_t)\|}\right\|^2 \\
    &\quad + \mu\rho\E\| \nabla f(\vx_t)\| \\
    &\geq -\frac{1}{2}\E\|\nabla f(\vx_t)\|^2  + \mu\rho\E\| \nabla f(\vx_t)\| - 2\beta^2\rho^2,
\end{align*}
completing the proof.
\end{proof}

\begin{lemma}
\label{lma:sto_sc_prop3}
Consider a $\beta$-smooth, $\mu$-strongly convex function $f$, and assume Assumption~\ref{assumption:variance}. Under $n$-SAM, with step size $\eta \leq \frac{1}{2\beta}$, we have
\begin{equation*}
    \E f(\vx_{t+1}) \leq \E f(\vx_t) - \frac{\eta}{2}\E\|\nabla f (\vx_t)\|^2-\frac{\eta\mu\rho}{2}\E\|\nabla f(\vx_t)\| 
    +2\eta\beta^2\rho^2-\eta^2\beta(\beta^2\rho^2-\sigma^2).
\end{equation*}
Under $m$-SAM, additionally assuming $l(\cdot,\xi)$ is $\beta$-smooth for any $\xi$, the inequality continues to hold.
\end{lemma}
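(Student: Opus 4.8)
The plan is to derive the one-step descent inequality directly from $\beta$-smoothness of $f$, following the skeleton of the deterministic Lemma~\ref{lma:fb_smth_cvx_prop1} but carrying the stochastic gradient $\tilde{g}(\vy_t)$ through the computation. First I would apply smoothness to the update $\vx_{t+1} = \vx_t - \eta \tilde{g}(\vy_t)$ to get
\[
f(\vx_{t+1}) \le f(\vx_t) - \eta \langle \nabla f(\vx_t), \tilde{g}(\vy_t)\rangle + \tfrac{\eta^2\beta}{2}\|\tilde{g}(\vy_t)\|^2,
\]
take expectations, and then decompose $\tilde{g}(\vy_t) = \nabla f(\vx_t) + (\tilde{g}(\vy_t) - \nabla f(\vx_t))$ inside \emph{both} the inner product and the squared norm. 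This yields a term $-\eta(1-\tfrac{\eta\beta}{2})\E\|\nabla f(\vx_t)\|^2$, a cross term $-\eta(1-\eta\beta)\E\langle \nabla f(\vx_t), \tilde{g}(\vy_t)-\nabla f(\vx_t)\rangle$, and a residual $\tfrac{\eta^2\beta}{2}\E\|\tilde{g}(\vy_t)-\nabla f(\vx_t)\|^2$.

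Next I would control the cross term with Lemma~\ref{lma:sto_sc_prop2}, which lower-bounds $\E\langle \nabla f(\vx_t), \tilde{g}(\vy_t)-\nabla f(\vx_t)\rangle$ by $-\tfrac12\E\|\nabla f(\vx_t)\|^2 + \mu\rho\,\E\|\nabla f(\vx_t)\| - 2\beta^2\rho^2$ for both $n$-SAM and $m$-SAM. Since $\eta \le \tfrac{1}{2\beta}$ gives $\eta(1-\eta\beta)\ge 0$, multiplying this lower bound by the negative coefficient flips it into an upper bound. The crucial bookkeeping is that the re-introduced $+\tfrac{\eta(1-\eta\beta)}{2}\E\|\nabla f(\vx_t)\|^2$ combines with $-\eta(1-\tfrac{\eta\beta}{2})\E\|\nabla f(\vx_t)\|^2$ to give exactly $-\tfrac{\eta}{2}\E\|\nabla f(\vx_t)\|^2$, since $-\eta(1-\tfrac{\eta\beta}{2}) + \tfrac{\eta(1-\eta\beta)}{2} = -\tfrac{\eta}{2}$ with the $\eta^2\beta$ pieces cancelling. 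Using $1-\eta\beta \ge \tfrac12$ turns the $\mu\rho$ contribution into $-\tfrac{\eta\mu\rho}{2}\E\|\nabla f(\vx_t)\|$, and I would deliberately keep the constant contribution as the identity $2\eta(1-\eta\beta)\beta^2\rho^2 = 2\eta\beta^2\rho^2 - 2\eta^2\beta^3\rho^2$ rather than discarding the negative $-2\eta^2\beta^3\rho^2$, since that term is precisely what produces the negative $\eta^2\beta^3\rho^2$ in the claimed bound.

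The remaining piece is the residual $\E\|\tilde{g}(\vy_t)-\nabla f(\vx_t)\|^2$, and this is the only place where $n$-SAM and $m$-SAM diverge. For $n$-SAM I would split $\tilde{g}(\vy_t)-\nabla f(\vx_t) = (\tilde{g}(\vy_t)-\nabla f(\vy_t)) + (\nabla f(\vy_t)-\nabla f(\vx_t))$: conditioning on $\xi$ (hence on $\vy_t$), independence of $\tilde{\xi}$ makes $\tilde{g}(\vy_t)$ conditionally unbiased, so the first part has conditional second moment at most $\sigma^2$ by Assumption~\ref{assumption:variance}, while the second is at most $\beta\rho$ by smoothness. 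For $m$-SAM, where $\tilde{\xi}=\xi$ and $\vy_t$ is correlated with the sample, I would instead split around the component gradient at $\vx_t$, namely $\tilde{g}(\vy_t)-\nabla f(\vx_t) = (\nabla l(\vy_t;\xi)-\nabla l(\vx_t;\xi)) + (g(\vx_t)-\nabla f(\vx_t))$, where the first part is at most $\beta\rho$ by the assumed $\beta$-smoothness of each component and the second has second moment at most $\sigma^2$ since $\vx_t$ is fixed given the history. In both cases the elementary bound $\|a+b\|^2 \le 2\|a\|^2 + 2\|b\|^2$ gives $\E\|\tilde{g}(\vy_t)-\nabla f(\vx_t)\|^2 \le 2\beta^2\rho^2 + 2\sigma^2$, so the residual contributes $\eta^2\beta^3\rho^2 + \eta^2\beta\sigma^2$.

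Finally I would assemble the pieces: the $-2\eta^2\beta^3\rho^2$ kept from the cross-term constant cancels against the $+\eta^2\beta^3\rho^2$ from the residual to leave exactly $-\eta^2\beta^3\rho^2$, and together with $2\eta\beta^2\rho^2$, the $-\tfrac{\eta}{2}\E\|\nabla f(\vx_t)\|^2$, the $-\tfrac{\eta\mu\rho}{2}\E\|\nabla f(\vx_t)\|$, and $+\eta^2\beta\sigma^2$, this reproduces $2\eta\beta^2\rho^2 - \eta^2\beta(\beta^2\rho^2-\sigma^2)$ and hence the stated inequality. I expect the main obstacle to be the coefficient bookkeeping in the cross-term step—ensuring the $\E\|\nabla f(\vx_t)\|^2$ coefficients cancel to precisely $-\tfrac{\eta}{2}$ and that the negative $\eta^2\beta^3\rho^2$ contribution survives rather than being thrown away—together with correctly justifying the conditional-expectation arguments that force the two different residual decompositions for $n$-SAM and $m$-SAM.
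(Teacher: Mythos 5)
Your proof is correct, and its skeleton coincides with the paper's: apply $\beta$-smoothness, decompose $\tilde{g}(\vy_t)$ around $\nabla f(\vx_t)$ in both the inner product and the squared norm (your coefficients $-\eta(1-\tfrac{\eta\beta}{2})$, $-\eta(1-\eta\beta)$, $\tfrac{\eta^2\beta}{2}$ are exactly the paper's), control the cross term via Lemma~\ref{lma:sto_sc_prop2} using $1-\eta\beta\ge\tfrac12$, and your bookkeeping (collapse of the gradient-norm coefficients to $-\tfrac{\eta}{2}$, retention of $-2\eta^2\beta^3\rho^2$, and assembly into $2\eta\beta^2\rho^2-\eta^2\beta(\beta^2\rho^2-\sigma^2)$) reproduces the paper's computation verbatim. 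The one genuine divergence is the residual term $\tfrac{\eta^2\beta}{2}\E\|\tilde{g}(\vy_t)-\nabla f(\vx_t)\|^2$. The paper uses a single split, $(\tilde{g}(\vy_t)-\nabla f(\vy_t))+(\nabla f(\vy_t)-\nabla f(\vx_t))$, for \emph{both} $n$-SAM and $m$-SAM, bounding the first piece by $\sigma^2$ via Assumption~\ref{assumption:variance}; you keep that split only for $n$-SAM (where conditioning on $\xi$ and independence of $\tilde\xi$ makes it rigorous) and switch, for $m$-SAM, to $(\nabla l(\vy_t;\xi)-\nabla l(\vx_t;\xi))+(g(\vx_t)-\nabla f(\vx_t))$, using component $\beta$-smoothness for the first piece and the variance bound at $\vx_t$ for the second. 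Your variant is arguably the more careful one: under $m$-SAM the point $\vy_t$ is a function of the same sample $\xi=\tilde\xi$, so invoking Assumption~\ref{assumption:variance} (stated for fixed $\vx$) \emph{at} $\vy_t$, as the paper implicitly does, needs an extra justification, whereas your decomposition only applies the assumption at $\vx_t$, which is measurable with respect to the history and independent of the fresh sample; it also makes transparent where the lemma's additional hypothesis (component smoothness under $m$-SAM) is consumed. Both routes yield the same constant $2\beta^2\rho^2+2\sigma^2$ for the residual and hence the identical final bound, so nothing is lost by your choice.
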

\begin{proof}
Starting from the definition of $\beta$-smoothness, we have
\begin{align*}
    \E f(\vx_{t+1}) &\leq \E f(\vx_t) - \eta\E\langle\nabla f(\vx_t), \tilde{g}(\vy_t)\rangle + \frac{\eta^2 \beta}{2}\E \|\tilde{g}(\vy_t)\|^2 \\
    &= \E f(\vx_t) - \eta\E\langle\nabla f(\vx_t), \tilde{g}(\vy_t)\rangle + \frac{\eta^2 \beta}{2}\E \|\tilde{g}(\vy_t) - \nabla f(\vx_t)\|^2 + \frac{\eta^2\beta}{2}\E\|\nabla f(\vx_t)\|^2 \\
    &\quad + \eta^2\beta \E \langle \nabla f(\vx_t), \tilde{g}(\vy_t)-\nabla f(\vx_t) \rangle \\
    &= \E f(\vx_t) - \eta\E\langle\nabla f(\vx_t), \tilde{g}(\vy_t)-\nabla f(\vx_t)\rangle - \eta \E\|\nabla f(\vx_t)\|^2 \\
    &\quad + \frac{\eta^2 \beta}{2}\E \|\tilde{g}(\vy_t) - \nabla f(\vx_t)\|^2 + \frac{\eta^2\beta}{2}\E\|\nabla f(\vx_t)\|^2 \\
    &\quad + \eta^2\beta \E \langle \nabla f(\vx_t), \tilde{g}(\vy_t)-\nabla f(\vx_t) \rangle \\
    &\leq \E f(\vx_t) - \eta(1-\eta\beta)\E\langle\nabla f(\vx_t), \tilde{g}(\vy_t)-\nabla f(\vx_t)\rangle-\eta\left(1-\frac{\eta\beta}{2}\right)\E\|\nabla f(\vx_t)\|^2 \\
    &\quad + \eta^2\beta \E \|\tilde{g}(\vy_t)-\nabla f(\vy_t)\|^2 + \eta^2\beta \E \|\nabla f (\vy_t)-\nabla f(\vx_t)\|^2 \\
    &\leq \E f(\vx_t) - \eta(1-\eta\beta)\E\langle\nabla f(\vx_t), \tilde{g}(\vy_t)-\nabla f(\vx_t)\rangle-\eta\left(1-\frac{\eta\beta}{2}\right)\E\|\nabla f(\vx_t)\|^2 \\
    &\quad + \eta^2\beta(\sigma^2+\beta^2\rho^2).
\end{align*}
Since we assumed $\eta \leq \frac{1}{2\beta}$, $\eta(1-\eta\beta) \geq \frac{\eta}{2}$ hold. Applying Lemma~\ref{lma:sto_sc_prop2}, we get
\begin{align*}
    \E f(\vx_{t+1}) &\leq \E f(\vx_t) - \eta(1-\eta\beta)\left(-\frac{1}{2}\E\|\nabla f(\vx_t)\|^2  + \mu\rho\E\| \nabla f(\vx_t)\| - 2\beta^2\rho^2\right) \\
    &\quad -\eta\left(1-\frac{\eta\beta}{2}\right)\E\|\nabla f(\vx_t)\|^2  + \eta^2\beta(\sigma^2+\beta^2\rho^2)\\
    &= \E f(\vx_t) - \frac{\eta}{2}\E\|\nabla f (\vx_t)\|^2-\eta(1-\eta\beta)\mu\rho\E\|\nabla f(\vx_t)\| 
    +2\eta\beta^2\rho^2-\eta^2\beta(\beta^2\rho^2-\sigma^2) \\
    &\leq \E f(\vx_t) - \frac{\eta}{2}\E\|\nabla f (\vx_t)\|^2-\frac{\eta\mu\rho}{2}\E\|\nabla f(\vx_t)\| 
    +2\eta\beta^2\rho^2-\eta^2\beta(\beta^2\rho^2-\sigma^2),
\end{align*}
completing the proof.
\end{proof}

\subsection{Convergence Proof for Smooth and Strongly Convex Functions (Proof of Theorem~\ref{thm:sto_sc})}
\label{sec:proof_sto_sc}
In this section, we demonstrate the convergence result of stochastic SAM for smooth and strongly convex functions. For convenience, we restate the theorem here.
\thmstosc*
\begin{proof}
We start the proof from Lemma~\ref{lma:sto_sc_prop3}; in order to apply the lemma, additionally assuming $\beta$-smoothness for component functions $l(\cdot, \xi)$ is necessary for $m$-SAM. 
\begin{equation*}
    \E f(\vx_{t+1}) \leq \E f(\vx_t) - \frac{\eta}{2}\E\|\nabla f (\vx_t)\|^2-\frac{\eta\mu\rho}{2}\E\|\nabla f(\vx_t)\| 
    +2\eta\beta^2\rho^2-\eta^2\beta(\beta^2\rho^2-\sigma^2).
\end{equation*}
Since $\mu$-strongly convex functions satisfy $\mu$-PL inequality~\eqref{def:PL}, we get
\begin{equation*}
    \E f(\vx_{t+1}) - f^* \leq (1-\eta\mu)(\E f(\vx_t)-f^*)-\frac{\eta\mu\rho}{2}\E\|\nabla f(\vx_t)\| 
    +2\eta\beta^2\rho^2-\eta^2\beta(\beta^2\rho^2-\sigma^2).
\end{equation*}
Depending on the value of $\sigma$, there are two cases in which the convergence rate varies.
\paragraph{Case A: $\sigma \leq \beta\rho$.} In this case, we have
\begin{align*}
    \E f(\vx_{t+1}) - f^* &\leq (1-\eta\mu)(\E f(\vx_t)-f^*)-\frac{\eta\mu\rho}{2}\E\|\nabla f(\vx_t)\| +2\eta\beta^2\rho^2 \\
    &\leq (1-\eta\mu)(\E f(\vx_t)-f^*)+2\eta\beta^2\rho^2, 
\end{align*}
and our choice of $\eta$ must be $\frac{1}{2\beta}$.
Unrolling the inequality and substituting $\eta = \frac{1}{2\beta}$ draws out
\begin{align*}
    \E f(\vx_{T}) - f^* &\leq (1-\eta\mu)^T(\E f(\vx_0)-f^*)+\frac{2\beta^2\rho^2}{\mu} \\
    &= \left (1-\frac{\mu}{2\beta} \right)^T\Delta + \frac{2\beta^2\rho^2}{\mu} \\
    &\leq \exp\left(-\frac{\mu T}{ 2\beta}\right) \Delta + \frac{2\beta^2\rho^2}{\mu}.
\end{align*}
\paragraph{Case B: $\sigma > \beta\rho$.} In this case, we have
\begin{align*}
    \E f(\vx_{t+1}) - f^* &\leq (1-\eta\mu)(\E f(\vx_t)-f^*)-\frac{\eta\mu\rho}{2}\E\|\nabla f(\vx_t)\| +2\eta\beta^2\rho^2 +\eta^2\beta(\sigma^2 - \beta^2\rho^2) \\ 
    &\leq (1-\eta\mu)(\E f(\vx_t)-f^*) + 2\eta\beta^2\rho^2 +\eta^2\beta(\sigma^2 - \beta^2\rho^2).
\end{align*}
Again, unrolling the inequality draws out
\begin{equation}
\label{eq:sto_sc-1}
    \E f(\vx_T) - f^* \leq (1-\eta\mu)^T\Delta +\frac{2\beta^2\rho^2}{\mu} + \frac{\eta\beta(\sigma^2 - \beta^2\rho^2)}{\mu}.
\end{equation}
Similar to Section~\ref{sec:proof_fb_sc_ub}, substituting $\eta = \min\left\{\frac{1}{\mu T} \cdot \max\left\{1,\log\left(\frac{\mu^2 \Delta T}{\beta (\sigma^2-\beta^2\rho^2)}\right)\right\},\frac{1}{2\beta}\right\}$ can result in four cases.
\paragraph{Case B-1: $\log\left(\frac{\mu^2 \Delta T}{\beta (\sigma^2-\beta^2\rho^2)}\right) \geq 1$, and $\frac{1}{2\beta} \geq \frac{1}{\mu T} \log\left(\frac{\mu^2 \Delta T}{\beta (\sigma^2-\beta^2\rho^2)}\right)$.}
Setting $\eta=\frac{1}{\mu T} \log\left(\frac{\mu^2 \Delta T}{\beta (\sigma^2-\beta^2\rho^2)}\right)$,
\begin{align*}
    \E f(\vx_T) - f^* \leq \frac{\beta (\sigma^2-\beta^2\rho^2)}{\mu^2 T} + \frac{\beta (\sigma^2 - \beta^2\rho^2)}{\mu^2 T}\cdot \log\left(\frac{\mu^2 \Delta T}{\beta (\sigma^2-\beta^2\rho^2)}\right) + \frac{2\beta^2\rho^2}{\mu}.
\end{align*}

\paragraph{Case B-2: $\log\left(\frac{\mu^2 \Delta T}{\beta (\sigma^2-\beta^2\rho^2)}\right) \geq 1$, and $\frac{1}{2\beta} \leq \frac{1}{\mu T} \log\left(\frac{\mu^2 \Delta T}{\beta (\sigma^2-\beta^2\rho^2)}\right)$.} Setting $\eta=\frac{1}{2\beta}$,
\begin{align*}
    \E f(\vx_T) - f^* &\leq \exp \left(-\frac{\mu T}{2\beta}\right)\Delta + \frac{\beta(\sigma^2 - \beta^2\rho^2)}{\mu}\cdot \frac{1}{2\beta}  + \frac{2\beta^2\rho^2}{\mu} \\
    &\leq \exp \left(-\frac{\mu T}{2\beta}\right)\Delta + \frac{\beta(\sigma^2 - \beta^2\rho^2)}{\mu^2 T}\cdot \log\left(\frac{\mu^2 \Delta T}{\beta (\sigma^2-\beta^2\rho^2)}\right)  + \frac{2\beta^2\rho^2}{\mu}.
\end{align*}
\paragraph{Case B-3: $\log\left(\frac{\mu^2 \Delta T}{\beta (\sigma^2-\beta^2\rho^2)}\right) \leq 1$, and $\frac{1}{2\beta} \geq \frac{1}{\mu T}$.} Setting $\eta=\frac{1}{\mu T}$,
\begin{align*}
    \E f(\vx_T) - f^* &\leq \left(1-\frac{1}{T}\right)^T \Delta + \frac{\beta(\sigma^2 - \beta^2\rho^2)}{\mu^2 T} +\frac{2\beta^2\rho^2}{\mu} \\
    &\leq \left(1-\frac{1}{T} \log\left(\frac{\mu^2 \Delta T}{\beta (\sigma^2-\beta^2\rho^2)}\right)\right)^T \Delta + \frac{\beta(\sigma^2 - \beta^2\rho^2)}{\mu^2 T} +\frac{2\beta^2\rho^2}{\mu} \\
    &\leq \frac{\beta (\sigma^2 - \beta^2\rho^2)}{\mu^2 T} + \frac{\beta(\sigma^2 - \beta^2\rho^2)}{\mu^2 T} +\frac{2\beta^2\rho^2}{\mu}.
\end{align*}
\paragraph{Case B-4: $\log\left(\frac{\mu^2 \Delta T}{\beta (\sigma^2-\beta^2\rho^2)}\right) \leq 1$, and $\frac{1}{2\beta} \leq \frac{1}{\mu T}$.} Setting $\eta = \frac{1}{2\beta}$,
\begin{align*}
    \E f(\vx_T) - f^* &\leq \exp \left(-\frac{\mu T}{2\beta}\right)\Delta + \frac{\beta(\sigma^2 - \beta^2\rho^2)}{\mu}\cdot \frac{1}{2\beta}  + \frac{2\beta^2\rho^2}{\mu} \\
    &\leq \exp \left(-\frac{\mu T}{2\beta}\right)\Delta + \frac{\beta(\sigma^2 - \beta^2\rho^2)}{\mu^2 T} + \frac{2\beta^2\rho^2}{\mu}.
\end{align*}
Merging all four cases, we get
\begin{equation*}
    \E f(\vx_T)-f^* =\tilde{\mathcal{O}}\left(\exp\left(-\frac{\mu T}{2\beta}\right)\Delta + \frac{\beta (\sigma^2 - \beta^2 \rho^2)}{\mu^2 T}\right) + \frac{2\beta^2\rho^2}{\mu},
\end{equation*}
thereby finishing the proof.
\end{proof}

\subsection{Non-Convergence of $m$-SAM for a Smooth and Strongly Convex Function (Proof of Theorem~\ref{thm:sto_sc_counter})}
\label{sec:sto_sc_counter_revise}
In this section, we present a formal analysis of our counterexample, which shows that stochastic $m$-SAM with constant perturbation size $\rho$ may fail to fully converge to a global minimum, and can get close to the global minimum by only $\Omega(\rho)$. This counterexample indicates that the $\mathcal O(\rho^2)$ term in Theorem~\ref{thm:sto_sc} is unavoidable.
For readers' convenience, we restate the theorem.
\thmstosccounter*
\begin{proof}
Given $\rho>0$, $\beta>0$, $\sigma \geq 0$, and $\eta \leq \frac{3}{10\beta}$, we choose $p \triangleq \frac{2}{3}$, $c \triangleq \left(1+\frac{p}{4}\right)\rho$. For $x \in \sR$, and a constant $a>0$ which will be chosen later.
Using these $p$, $c$, and $a$, we construct the counterexample. We consider a one-dimensional smooth strongly convex function
\begin{equation*}
    f(x) = \frac{a}{2}x^2,
\end{equation*}
and the stochastic function $l(x;\xi)$ can be given as
\begin{align*}
    l(x;\xi) = 
    \begin{cases}
    f^{(1)}(x), &\text{with probability $p$} \\
    f^{(2)}(x), &\text{otherwise},
    \end{cases}
\end{align*}
where each component functions are as described below:
\begin{align*}
    f^{(1)}(x) = 
    \begin{cases}
    \frac{a}{2}(x-c+2\rho)^2-a\rho^2, &x \leq c-\rho \\
    -\frac{a}{2}(x-c)^2, &c-\rho \leq x \leq c+\rho \\
    \frac{a}{2}(x-c-2\rho)^2 - a\rho^2, &c+\rho \leq x.
    \end{cases}
\end{align*}
\begin{align*}
    f^{(2)}(x) = 
    \begin{cases}
    \frac{1}{1-p}\left(\frac{a}{2} x^2 - \frac{pa}{2} (x-c+2\rho)^2 + pa\rho^2\right), &x \leq c-\rho \\
    \frac{1}{1-p}\left(\frac{a}{2} x^2 + \frac{pa}{2} (x-c)^2\right), &c-\rho \leq x \leq c+\rho \\
    \frac{1}{1-p}\left(\frac{a}{2} x^2 - \frac{pa}{2} (x-c-2\rho)^2 + pa\rho^2\right), &c+\rho \leq x.
    \end{cases}
\end{align*}

First, it is easy to verify that $\E l(x;\xi)=f(x)$.
We can also confirm that $f$, $f^{(1)}$ are $a$-smooth, and $f^{(2)}$ is $\left(\frac{1+p}{1-p}a\right)$-smooth. Furthermore, $\|\nabla f(x) - \nabla f^{(1)}(x)\|^2 \leq a^2(2\rho+c)^2$ holds, along with $\|\nabla f(x) - \nabla f^{(2)}(x)\|^2 \leq \left(\frac{pa}{1-p}\right)^2(2\rho+c)^2$. Consequently, we have 
$\E\|\nabla f(x) - \nabla l(x;\xi)\|^2 \leq \frac{pa^2}{1-p}(2\rho+c)^2= 2a^2 \cdot \left(\frac{19\rho}{6}\right)^2$, where we used $p = \frac{2}{3}$ and $c = ( 1 + \frac{p}{4} )\rho$.

Now choose $a = \min\left\{\frac{\beta}{5}, \frac{\sigma}{5\rho}\right\} \leq \frac{\beta}{5}$. This choice ensures that $f$, $f^{(1)}$, and $f^{(2)}$ are all $\beta$-smooth, as desired. Also notice that $2a^2 \cdot \left(\frac{19\rho}{6}\right)^2 \leq 2\cdot \left( \frac{19}{5\cdot 6} \right)^2\sigma^2 < \sigma^2$, so the function satisfies Assumption~\ref{assumption:variance}.

For the remaining of the proof, we investigate the virtual gradient maps $G_{f^{(1)}}$ and $G_{f^{(2)}}$ within the specified region of interest: $c-\rho\leq x \leq c+\rho$. We will then demonstrate that if we initialize $m$-SAM in this interval, all subsequent iterations of $m$-SAM will remain inside $[c-\rho, c+\rho]$, as required above.

Within this interval, the perturbed iterate $y = \rho\frac{\nabla f^{(1)}(x)}{\|\nabla f^{(1)}(x)\|}$ and the virtual gradient map $G_{f^{(1)}}$ of $f^{(1)}$ can be described as follows.
\begin{align*}
    y = 
    \begin{cases}
        x+\rho, &x<c \\
        x,&x=c\\
        x - \rho, &x>c,
    \end{cases}
    \quad
    G_{f^{(1)}}(x) = 
    \begin{cases}
        -a(x+\rho-c), &x<c\\
        0, &x=c\\
        -a(x-\rho-c), &x>c.
    \end{cases}
\end{align*}
Additionally defining $c' \triangleq \frac{p}{1+p}c$, we now calculate $y = \rho\frac{\nabla f^{(2)}(x)}{\|\nabla f^{(2)}(x)\|}$ and the virtual gradient map $G_{f^{(2)}}$ of $f^{(2)}$.
\begin{align*}
    y = 
    \begin{cases}
        x - \rho, &x<c',\\
        x, &x=c',\\
        x + \rho, &x>c'        
    \end{cases}
    \quad
    G_{f^{(2)}}(x) = 
    \begin{cases}
        \frac{a}{1-p}\left(x - \rho + p(x-\rho-c)\right), &x<c' \\
        0, &x=c'\\
        \frac{a}{1-p}\left(x + \rho + p(x+\rho-c)\right), &x>c'
    \end{cases}
\end{align*}

Here, given $c = \left(1+\frac{p}{4}\right)\rho$, we can verify that
\begin{equation*}
    0 < c-\rho \leq c' \leq c \leq c+\rho,
\end{equation*}
where $0$, $c'$, and $c$ are the global minimum (or maximum) of $f$, $f^{(2)}$, and $f^{(1)}$, respectively.

Recall that $a \leq \frac{\beta}{5}$. Since $\eta \leq \frac{3}{10\beta}$, we have
$\eta a \leq \frac{3}{50}$, which will be useful for the rest of the proof. Now, we analyze $x_{t+1}$, the next iterate of $m$-SAM. For $c-\rho \leq x_t \leq c+\rho$, the next iterate $x_{t+1}$ of $m$-SAM using $G_{f^{(1)}}(x_t)$ is
\begin{align*}
    x_{t+1} = 
    \begin{cases}
        x_t + \eta a(x_t + \rho - c), &x_t<c\\
        x_t, &x_t = c\\
        x_t + \eta a(x_t - \rho - c), &x_t>c.
    \end{cases}
\end{align*}
We will show that $x_{t+1}$ must stay within the interval $[c-\rho, c+\rho]$. For the case $x_t=c$, the inclusion is trivial. To analyze the rest, we divide into two cases.

\paragraph{Case A-1: $ c-\rho \leq x_t <c$.}
Since $\eta a \leq \frac{3}{50}$, we have
\begin{align*}
    c-\rho \leq x_t \leq x_{t+1} &= x_t + \eta a (x_t + \rho - c)
    \leq c + \eta a(c + \rho - c) \\
    &= c + \eta a \rho \leq c + \frac{3\rho}{50}.
\end{align*}
\paragraph{Case A-2: $c < x_t \leq c+\rho$.}
Since $\eta a \leq \frac{3}{50}$, we have
\begin{align*}
    c + \rho \geq x_t \geq x_{t+1} &= x_t + \eta a (x_t - \rho - c)
    \geq c + \eta a(c - \rho - c) \\
    &= c - \eta a \rho \geq c - \frac{3\rho}{50}.
\end{align*}
Cases \textbf{A-1} and \textbf{A-2} show that $x_{t+1}$ also remains in $[c-\rho, c+\rho]$, when we update $m$-SAM using $G_{f^{(1)}}(x_t)$.

We next examine the case involving $G_{f^{(2)}}$. The next iterate $x_{t+1}$ of $m$-SAM using $G_{f^{(2)}}(x_t)$ is
\begin{align*}
    x_{t+1} = 
    \begin{cases}
        x_t - \eta\frac{a}{1-p}\left(x_t - \rho + p(x_t-\rho-c)\right), &x_t<c'\\
        x_t, &x_t = c'\\
        x_t - \eta\frac{a}{1-p}\left(x_t + \rho + p(x_t+\rho-c)\right), &x_t>c'.
    \end{cases}
\end{align*}

\begin{figure}[t!]
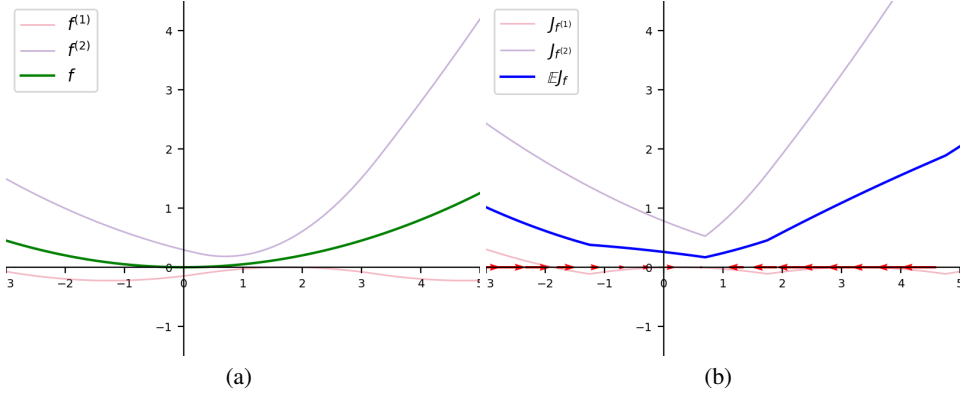

    \centering
    \subfigure[]{\includegraphics[width=0.45\textwidth]{figures/ex1_revise_original.png}}
    \subfigure[]{\includegraphics[width=0.45\textwidth]{figures/ex1_revise_virtual.png}}
    \caption{The original and virtual loss plot for the example function in Theorem~\ref{thm:sto_sc_counter}. The graph drawn in purple and red are the original/virtual loss of component functions. The graph drawn in green indicates $f$, and the graph drawn in blue indicates $\E J_f$. (a) $f$ and its component functions $f^{(1)}$, $f^{(2)}$. (b)$\E J_f$ and its component functions $J_{f^{(1)}}$, $J_{f^{(2)}}$. }
\label{fig:ex1_revise}
\end{figure}

Again, we divide it into two cases, since the $x_t = c'$ case is obvious.
\paragraph{Case B-1: $c-\rho \leq x_t < c'$.}
Since $(1+p)x_t-(1+p)\rho-pc < 0$ for $x_t \in [c-\rho, c')$, we have
%Since $\eta a \leq \frac{1}{5}$, we obtain
\begin{align*}
    c-\rho \leq x_t \leq x_{t+1} &= x_t - \eta\frac{a}{1-p}\left((1+p)x_t-(1+p)\rho-pc\right) \\
    &= \left (1 - \eta\frac{a(1+p)}{1-p} \right ) x_t + \eta \frac{a(1+p)}{1-p}\rho + \eta \frac{ap}{1-p}c.
\end{align*}
Since $\eta a \leq \frac{3}{50}$, we have $1 - \eta\frac{a(1+p)}{1-p} = 1-5\eta a \geq 0$, so
\begin{align*}
    &\left (1 - \eta\frac{a(1+p)}{1-p} \right ) x_t + \eta \frac{a(1+p)}{1-p}\rho + \eta \frac{ap}{1-p}c\\
    \leq&\, \left (1 - \eta\frac{a(1+p)}{1-p} \right ) c' + \eta \frac{a(1+p)}{1-p}\rho + \eta \frac{ap}{1-p}c\\
    =&\, c' + \eta\frac{a(1+p)}{1-p}\rho = c' + 5\eta a \rho \leq c' + \rho \leq c + \rho.
\end{align*}
\paragraph{Case B-2: $c' < x_t \leq c+\rho$.} 
Since $(1+p)x_t+(1+p)\rho-pc > 0$ for $x_t \in (c',c+\rho]$, we have
%Since $\eta a \leq \frac{1}{5}$, we get
\begin{align*}
    c + \rho \geq x_t \geq x_{t+1} &= x_t - \eta\frac{a}{1-p}\left((1+p)x_t + (1+p)\rho - pc\right) \\
    &\geq c' - \eta\frac{a}{1-p}\left((1+p)c' + (1+p)\rho - pc\right) \\
    &= \frac{p}{1+p}c - \eta\frac{a(1+p)}{1-p}\rho = c - \frac{1}{1+p}c - 5\eta a \rho \\
    &= c - \frac{1}{1+p}\cdot \left(1+\frac{p}{4}\right)\rho - 5 \eta a \rho \\
    &\geq c - \frac{3}{5}\cdot\left(\frac{7}{6}\right) \rho -\frac{3}{10}\rho 
    = c - \rho,
\end{align*}
where the last inequality used $\eta a \leq \frac{3}{50}$.

Cases \textbf{B-1} and \textbf{B-2} demonstrate that $x_{t+1}$ also remains within the interval $[c-\rho, c+\rho]$ when we update $m$-SAM using $G_{f^{(2)}}(x_t)$. 

To gain a better intuitive understanding, we provide Figures~\ref{fig:ex1_revise}(a),~\ref{fig:ex1_revise}(b), demonstrating the original and virtual loss functions of $f^{(1)}$ and $f^{(2)}$. In Figure~\ref{fig:ex1_revise}(b), we can examine that $m$-SAM updates generate an attraction basin in $J_{f^{(1)}}$, thereby making a region that $m$-SAM cannot escape.

The aforementioned case analyses indicate that when the initial point $x_0$ falls within the interval $[c-\rho, c+\rho]$, all subsequent iterations of $m$-SAM will also remain within this interval, regardless of the selected component function for updating. Given $c = \left(1 + \frac{p}{4}\right)\rho = \frac{7}{6}\rho$, we can conclude that all subsequent iterations of $m$-SAM are points located at a distance of at least $\frac{1}{6}\rho$ from the global optimum, for any $\beta>0$, $\rho>0$, and $\sigma>0$.

Moreover, in case of $\sigma > \beta\rho$, it follows that $f(x) = \frac{a}{2}x^2 = \frac{\beta}{10}x^2$. Consequently, the suboptimality gap at any timestep is at least 
\begin{equation*}
    f(x_t)- f^* = f(x_t) \geq f\left (\frac{\rho}{6} \right) = \Omega(\beta\rho^2),
\end{equation*}
thereby finishing the proof.
\end{proof}

\subsection{Convergence Proof for Smooth and Convex Functions (Proof of Theorem~\ref{thm:sto_cvx})}
\label{sec:proof_sto_cvx}
In this section, we establish the convergence of stochastic SAM for smooth and convex functions to near-stationary points. For ease of understanding, we provide the theorem statement here.
\thmstocvx*
\begin{proof}
We start from Lemma~\ref{lma:sto_sc_prop3} with $\mu=0$. In order to do this, additionally assuming $\beta$-smoothness of component functions $l(\cdot, \xi)$ is necessary for $m$-SAM.
\begin{align*}
     \E f(\vx_{t+1}) \leq \E f(\vx_t) - \frac{\eta}{2}\E\|\nabla f (\vx_t)\|^2
    +2\beta^2\rho^2\eta+\beta\eta^2(\sigma^2 - \beta^2\rho^2),
\end{align*}
which can be rewritten as
\begin{align*}
    \E\|\nabla f(\vx_t)\|^2 \leq \frac{2}{\eta}(\E f(\vx_t)- \E f(\vx_{t+1})) + 4\beta^2\rho^2 + 2\beta\eta(\sigma^2 - \beta^2\rho^2).
\end{align*}
Adding up the inequality for $t=0,\cdots,T-1$, and dividing both sides by $T$, we get
\begin{align}
    \frac{1}{T}\sum_{t=0}^{T-1}\E\|\nabla f(\vx_t)\|^2 &\leq \frac{2}{\eta T}(\E f(\vx_0)- \E f(\vx_{T})) + 4\beta^2\rho^2+ 2\beta\eta(\sigma^2 - \beta^2\rho^2) \nonumber \\
    \label{eq:sto_cvx-1}
    &\leq \frac{2}{\eta T}\Delta + 4\beta^2\rho^2 + 2\eta\beta(\sigma^2 - \beta^2\rho^2).
\end{align}
The convergence rate varies depending on two different cases determined by the value of $\sigma$.
\paragraph{Case A: $\sigma \leq \beta\rho$.} In this case, it must hold that $\eta = \frac{1}{2\beta}$. By this choice of $\eta$, we have
\begin{align*}
    \frac{1}{T}\sum_{t=0}^{T-1}\E\|\nabla f(\vx_t)\|^2 &\leq \frac{2\Delta}{\eta T} + 4\beta^2\rho^2 \\
    &= \frac{4\beta\Delta}{T} + 4\beta^2\rho^2.
\end{align*}
\paragraph{Case B: $\sigma > \beta\rho$.} Setting $\eta=\min\left\{\sqrt{\frac{\Delta}{\beta(\sigma^2-\beta^2\rho^2)T}},\frac{1}{2\beta}\right\}$, we consider two cases.
\paragraph{Case B-1: $\sqrt{\frac{\Delta}{\beta(\sigma^2-\beta^2\rho^2)T}} \leq \frac{1}{2\beta}$.} Putting $\eta = \sqrt{\frac{\Delta}{\beta(\sigma^2-\beta^2\rho^2)T}}$ into \eqref{eq:sto_cvx-1}, we get
\begin{align*}
    \frac{1}{T}\sum_{t=0}^{T-1}\E\|\nabla f(\vx_t)\|^2 &\leq 4\sqrt{\frac{\beta(\sigma^2-\beta^2\rho^2)\Delta}{T}} + 4\beta^2\rho^2.
\end{align*}
\paragraph{Case B-2: $\sqrt{\frac{\Delta}{\beta(\sigma^2-\beta^2\rho^2)T}} \geq \frac{1}{2\beta}$.} Placing $\eta = \frac{1}{2\beta}$ into \eqref{eq:sto_cvx-1}, we get
\begin{align*}
    \frac{1}{T}\sum_{t=0}^{T-1}\E\|\nabla f(\vx_t)\|^2 &\leq \frac{4\beta\Delta}{T} + 4\beta^2\rho^2 + 2\cdot\frac{1}{2\beta}\cdot\beta(\sigma^2-\beta^2\rho^2) \\
    &\leq \frac{4\beta\Delta}{T} + 4\beta^2\rho^2 + 2\cdot\sqrt{\frac{\Delta}{\beta(\sigma^2-\beta^2\rho^2)T}}\cdot\beta(\sigma^2-\beta^2\rho^2) \\
    &=\frac{4\beta\Delta}{T} + 2\sqrt{\frac{\beta(\sigma^2-\beta^2\rho^2)\Delta}{T}} + 4\beta^2\rho^2.
\end{align*}
Merging the two cases, we conclude
\begin{equation*}
    \frac{1}{T}\sum_{t=0}^{T-1}\E\|\nabla f(\vx_t)\|^2 = \mathcal{O}\left(\frac{\beta\Delta}{T} + \frac{\sqrt{\beta(\sigma^2-\beta^2\rho^2)\Delta}}{\sqrt{T}}\right) + 4\beta^2\rho^2,
\end{equation*}
thereby completing the proof.
\end{proof}

\subsection{Non-Convergence of $m$-SAM for a Smooth and Convex Function (Proof of Theorem~\ref{thm:sto_cvx_counter})}
In this section, we provide an in-depth analysis of our counterexample, which demonstrates that stochastic $m$-SAM with constant perturbation size $\rho$ can provably converge to a point that is not a global minimum. Furthermore, the suboptimality gap in terms of the function value can be made arbitrarily large, hence indicating that proving convergence of $m$-SAM to global minima (modulo some additive factors $\mathcal O(\rho^2)$) is impossible. The theorem is restated for convenience.
\label{sec:proof_sto_cvx_counter}
\thmstocvxcounter*
\begin{proof}
Given $\rho>0$, $\beta>0$, $\sigma>0$, we set an arbitrary constant $c > \frac{5\rho}{4}$, and a parameter $a>0$ which will be chosen later. For $x \in \sR$, consider a one-dimensional smooth convex function,
\begin{equation*}
    f(x) =
    \begin{cases}
        ax + \frac{\beta}{2} x^2, &x \leq 0 \\
        a x, &x \geq 0,
    \end{cases}
\end{equation*}
and $l(x;\xi)$ can be given as
\begin{equation*}
    l(x;\xi) = 
    \begin{cases}
        f^{(1)}(x), &\text{ with probability $p$} \\
        f^{(2)}(x), &\text{ otherwise},
    \end{cases}
\end{equation*}
where the functions $f^{(1)}$ and $f^{(2)}$ are given by the following definitions.
\begin{align*}
    f^{(1)}(x) =
    \begin{cases}
    2a\left(x-c+\frac{\rho}{8}\right) + \frac{\beta}{2} x^2, &x \leq 0 \\
    2a\left(x-c+\frac{\rho}{8}\right), &0 \leq x \leq c - \frac{\rho}{4} \\
    -\frac{4a}{\rho}\left(x-c\right)^2, &c - \frac{\rho}{4} \leq x \leq c + \frac{\rho}{4} \\
    -2a\left(x-c-\frac{\rho}{8}\right) , &c + \frac{\rho}{4} \leq x,
    \end{cases}
\end{align*}
\begin{align*}
    f^{(2)}(x) =
    \begin{cases}
    \frac{1}{1-p}\left(a x - 2pa\left(x-c+\frac{\rho}{8}\right)\right) + \frac{\beta}{2} x^2, &x \leq 0 \\
    \frac{1}{1-p}\left(a x - 2pa\left(x-c+\frac{\rho}{8}\right)\right), &0 \leq x \leq c - \frac{\rho}{4} \\
    \frac{1}{1-p}\left(a x + \frac{4pa}{\rho}(x-c)^2\right), &c - \frac{\rho}{4} \leq x \leq c + \frac{\rho}{4} \\
    \frac{1}{1-p}\left(a x + 2pa\left(x-c-\frac{\rho}{8}\right)\right), &c + \frac{\rho}{4} \leq x.
    \end{cases}
\end{align*}
It can be verified that $f^{(1)}$ is $\left(\max\left\{\frac{8a}{\rho}, \beta\right\}\right)$-smooth, and $f^{(2)}$ is $\left(\max\left\{\frac{8a}{\rho}\cdot\frac{p}{1-p}, \beta\right\}\right)$-smooth. Moreover, $\|\nabla f(x)-\nabla f^{(1)}(x)\|^2 \leq 9a^2$ holds, as well as $\|\nabla f(x)-\nabla f^{(2)}(x)\|^2 \leq 9a^2 \cdot \frac{p^2}{(1-p)^2}$. Consequently, $\E \|\nabla f(x) - g(x)\|^2 \leq 9a^2 \cdot \frac{p}{1-p}$. 

By selecting $p > \frac{1}{2}$, and setting $a = \min\left\{\frac{\beta\rho(1-p)}{8p},\frac{\sigma\sqrt{1-p}}{3\sqrt{p}}\right\}$, we can check that $\E l(x;\xi) = f(x)$, and the component functions are $\beta$-smooth. Additionally, $f$ satisfies Assumption~\ref{assumption:variance}.

In the following analysis, we examine the virtual gradient maps $G_{f^{(1)}}$ and $G_{f^{(2)}}$ in the specified region of interest: $c-\rho \leq x \leq c+\rho$. For this specific interval, we are going to show that if we start $m$-SAM inside this interval, then all the subsequent iterates of $m$-SAM must stay inside the same interval $[c-\rho, c+ \rho]$.

In this region, $y = x + \rho\frac{\nabla f^{(1)}(x)}{\|\nabla f^{(1)}(x)\|}$ and $G_{f^{(1)}}$ are as follows.
\begin{align*}
    y = 
    \begin{cases}
        x + \rho, & c-\rho \leq x < c \\
        x, & x = c\\
        x-\rho, & c < x \leq c+\rho. \\
    \end{cases}
\end{align*}
\begin{align*}
    G_{f^{(1)}}(x) = 
    \begin{cases}
        -\frac{8a}{\rho}(x+\rho-c), & c-\rho \leq x \leq c-\frac{3\rho}{4} \\
        -2a, & c-\frac{3\rho}{4} \leq x< c \\
        0, & x = c\\
        2a, & c <x\leq c + \frac{3\rho}{4} \\
        -\frac{8a}{\rho}(x-\rho-c), & c+\frac{3\rho}{4} \leq x \leq c+\rho.
    \end{cases}
\end{align*}
Additionally defining $c' \triangleq c-\frac{\rho}{8p}$, we can compute $y = x + \rho\frac{\nabla f^{(2)}(x)}{\|\nabla f^{(2)}(x)\|}$ and $G_{f^{(2)}}$ as follows.
\begin{align*}
    y = 
    \begin{cases}
        x - \rho, & c-\rho \leq x < c' \\
        x, & x= c'\\
        x + \rho, & c' < x \leq c+\rho. \\
    \end{cases}
\end{align*}
\begin{align*}
    G_{f^{(2)}}(x) = 
    \begin{cases}
        \frac{a-2ap}{1-p}, &c-\rho \leq x < c'\\
        0, &x = c'\\
        \frac{a+2ap}{1-p}, &c' < x \leq c+\rho. \\
    \end{cases}
\end{align*}

Now recall that $\eta \leq \frac{1}{\beta}$. Given that $a \leq \frac{\beta\rho(1-p)}{8p}$, we can derive $\eta \leq \frac{1}{\beta} \leq \frac{\rho}{a} \cdot \frac{1-p}{8p}$.
Furthermore, the next iterate $x_{t+1}$ of $m$-SAM using $ G_{f^{(1)}}(x_t)$ for $c-\rho \leq x_t \leq c+\rho$ is
\begin{align*}
    x_{t+1} = x_t - \eta G_{f^{(1)}}(x_t) = 
    \begin{cases}
        x_t + \eta \cdot \frac{8a}{\rho}(x+\rho-c), & c-\rho \leq x_t \leq c-\frac{3\rho}{4} \\
        x_t + 2\eta a, & c-\frac{3\rho}{4} \leq x_t< c \\
        x_t, &x_t = c\\
        x_t - 2\eta a, & c <x_t\leq c + \frac{3\rho}{4} \\
        x_t + \eta \cdot \frac{8a}{\rho}(x-\rho-c), & c+\frac{3\rho}{4} \leq x_t \leq c+\rho.
    \end{cases}
\end{align*}
We will now show that $x_{t+1}$ must remain in the interval $[c-\rho,c+\rho]$. For the case $x_t = c$, it is obvious. Dividing the rest into three cases,
\paragraph{Case A-1: $c-\rho \leq x_t \leq c-\frac{3\rho}{4}$.} Since $ 0 \leq \eta \cdot \frac{8a}{\rho}(x_t + \rho - c) \leq 2\eta a \leq \rho \cdot \frac{1-p}{4p} \leq \frac{\rho}{4}$, we have
\begin{align*}
     c - \rho \leq x_t  \leq x_{t+1} \leq x_t + \frac{\rho}{4} \leq c + \rho,
\end{align*}
which proves that $x_{t+1}$ also remains in $[c-\rho, c+\rho]$.

\paragraph{Case A-2: $c-\frac{3\rho}{4} \leq x_t <c$, $c <x_t \leq c+\frac{3\rho}{4}$.} Since $0 \leq 2\eta a \leq \rho \cdot \frac{1-p}{4p} \leq \frac{\rho}{4}$, we have
\begin{align*}
     c - \rho \leq x_t - \frac{\rho}{4} \leq x_{t+1} \leq x_t + \frac{\rho}{4} \leq c + \rho,
\end{align*}
thereby proving that $x_{t+1}$ also remains in $[c-\rho, c+\rho]$.

\paragraph{Case A-3: $c + \frac{3\rho}{4} \leq x_t \leq c+\rho$.} Since $0 \geq \eta \cdot \frac{8a}{\rho}(x_t - \rho - c) \geq -2\eta a \geq -\rho \cdot \frac{1-p}{4p} \geq -\frac{\rho}{4}$, we have
\begin{align*}
     c - \rho \leq x_t - \frac{\rho}{4} \leq x_{t+1} \leq x_t \leq c+\rho,
\end{align*}
which shows that $x_{t+1}$ also remains in $[c-\rho, c+\rho]$.

We next consider the case of $ G_{f^{(2)}}$.
The next iterate $x_{t+1}$ of $m$-SAM using $ G_{f^{(2)}}(x_t)$ for $c-\rho \leq x_t \leq c+\rho$ is given by
\begin{align*}
    x_{t+1} = x_t - \eta G_{f^{(2)}}(x_t) = 
    \begin{cases}
        x_t - \eta a \cdot \frac{1-2p}{1-p}, & c-\rho \leq x_t < c' \\
        x_t, & x_t = c'\\
        x_t - \eta a \cdot \frac{1+2p}{1-p}, & c' < x_t \leq c+\rho.
    \end{cases}
\end{align*}
Again, the $x_t = c'$ case is trivial. Considering the remaining two cases,
\paragraph{Case B-1: $c-\rho \leq x_t < c'$.} Since $0 \geq \eta a \cdot \frac{1-2p}{1-p} \geq \rho \cdot \frac{1-2p}{8p} \geq -(c+\rho - c')$, we get
\begin{align*}
    c - \rho \leq x_t \leq x_{t+1} \leq x_t + (c + \rho - c') \leq c + \rho,
\end{align*}
which indicates that $x_{t+1}$ also remains in $[c-\rho, c+\rho]$.

\paragraph{Case B-2: $c' < x_t \leq c+\rho$.} Since $ 0 \leq \eta a \cdot \frac{1+2p}{1-p} \leq \rho \cdot \frac{1+2p}{8p} \leq c'-(c-\rho)$, we have
\begin{align*}
     c - \rho \leq x_t - c' + (c - \rho) \leq x_{t+1} \leq x_t \leq c+\rho,
\end{align*}
which shows that $x_{t+1}$ also remains in $[c-\rho, c+\rho]$.

\begin{figure}[t!]
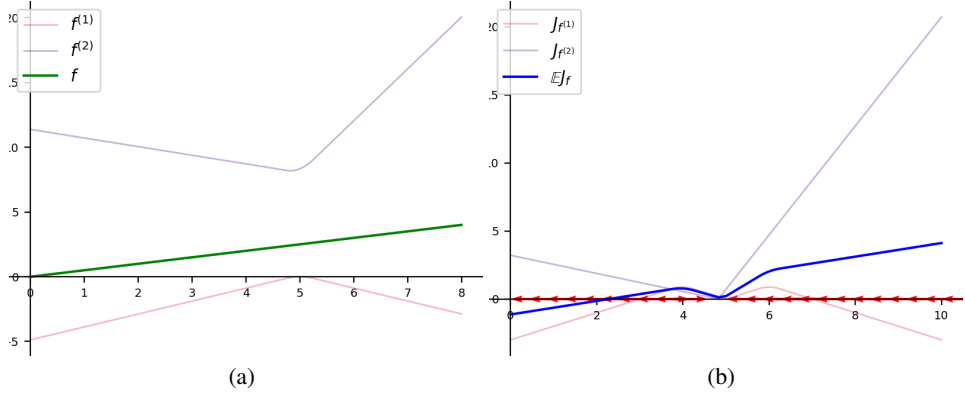

    \centering
    \subfigure[]{\includegraphics[width=0.45\textwidth]{figures/ex2_original.png}}
    \subfigure[]{\includegraphics[width=0.45\textwidth]{figures/ex2_virtual.png}}
    \caption{The original and virtual loss plot for the example function in Theorem~\ref{thm:sto_cvx_counter}. The graph drawn in purple and red are the original/virtual loss of component functions. The graph drawn in green indicates $f$, and the graph drawn in blue indicates $\E J_f$. (a) $f$ and its component functions $f^{(1)}$, $f^{(2)}$. (b)$\E J_f$ and its component functions $J_{f^{(1)}}$, $J_{f^{(2)}}$. }
\label{fig:ex2_big}
\end{figure}

The case analyses above indicate that if $x_0$ is initialized in $[c-\rho, c+\rho]$, the subsequent iterates of $m$-SAM also remain in $[c-\rho, c+\rho]$, regardless of the component function chosen by $m$-SAM for update. Figures~\ref{fig:ex2_big}(a) and \ref{fig:ex2_big}(b) demonstrate the original loss function and virtual loss function of component functions $f^{(1)}$ and $f^{(2)}$. Due to the concavity of $f^{(1)}$, a basin of attraction is formed in $J_{f^{(1)}}$, thereby creating a region where the iterations of $m$-SAM cannot escape.

Therefore, the suboptimality gap at any timestep is at least 
\begin{equation*}
f(x_t) - f^* = f(x_t)-f\left(-\frac{a}{\beta}\right) \geq f(c-\rho) - f\left(-\frac{a}{\beta}\right) = a(c - \rho) + \frac{a^2}{2\beta}.
\end{equation*}
Therefore, regardless of iterate averaging scheme, the suboptimality gap in terms of function value will stay above $a(c - \rho) + \frac{a^2}{2\beta}$. 

Moreover, this suboptimality gap can be made arbitrarily large if we choose larger values of $c$; notice that $c > \frac{5\rho}{4}$ is the only requirement on $c$. Consequently, it is impossible to guarantee convergence to global minima, even up to an additive factor, for $m$-SAM. This finishes the proof.
\end{proof}
%holds for $n$-SAM? -> no
% The expectation of virtual gradient map, $\E G_{f}(x) = p\cdot G_{f^{(1)}}(x) + (1-p)\cdot G_{f^{(2)}}(x)$, is given by the following expression.
% \begin{align*}
%     \E G_{f}(x) =
%     \begin{cases}
%         a-2ap-\frac{8ap}{\rho}(x+\rho-c), &c-\frac{5\rho}{4}\leq x \leq c-\frac{5\rho}{4} + \frac{\rho}{8p} \\
%         a-2ap-\frac{8ap}{\rho}(x+\rho-c), &c-\frac{5\rho}{4} + \frac{\rho}{8p} \leq x \leq c-\frac{3\rho}{4} \\
%         a-4ap, &c-\frac{3\rho}{4} \leq x < c' \\
%         a, &c'<x<c \\
%         a+4ap, &c<x \leq c+\frac{3\rho}{4} \\
%         a+2ap-\frac{8ap}{\rho}(x-\rho-c), &c+\frac{3\rho}{4} \leq x \leq c+\frac{5\rho}{4} \\
%         a, &c+\frac{5\rho}{4} \leq x.
%     \end{cases}
% \end{align*}

% Here, we observe that the expected virtual gradient is positive for all cases except when $c-\frac{5\rho}{4} + \frac{\rho}{8p} \leq x < c'$, where the expected virtual gradient becomes negative.

% This means that if $x_0$ is initialized as $c-\frac{5\rho}{4} + \frac{\rho}{8p} \leq x$, SAM updates converge to $c' = c - \frac{\rho}{8p}$ in expectation. Therefore, the suboptimality gap at the converged point is $f(c') - f^* = f(c')-f\left(-\frac{a}{\beta}\right) = ac - \frac{a\rho}{8p} + \frac{a^2}{2\beta}$. This suboptimality gap can become arbitrarily large as the value of $c$ increases, and it is remains independent of the parameter $\rho$. Consequently, it is impossible to guarantee convergence to global minima, even up to an additive factor, for $m$-SAM.

\subsection{Convergence Proof for Smooth and Nonconvex Function under $n$-SAM (Proof of Theorem~\ref{thm:sto_smth_nsam})}
\label{sec:proof_sto_smth_nsam}
In this section, we establish the convergence result of stochastic $n$-SAM for smooth and nonconvex functions. For convenience, we restate the theorem.
\thmstosmthnsam*
\begin{proof}
Starting from the definition of $\beta$-smoothness, we have
\begin{align*}
    \E f(\vx_{t+1}) &\leq \E f(\vx_t) - \eta \E \left\langle \nabla f(\vx_t), \tilde{g}(\vy_t) \right\rangle + \frac{\beta\eta^2}{2} \E \|\tilde{g}(\vy_t)\|^2 \\
    &\leq \E f(\vx_t) - \eta \E \left\langle \nabla f(\vx_t), \nabla f(\vy_t) \right\rangle + \beta\eta^2 \left( \E \|\nabla f(\vy_t)\|^2 + \E \|\tilde{g}(\vy_t) - \nabla f(\vy_t)\|^2 \right) \\
    &\leq \E f(\vx_t) - \eta \E \left\langle \nabla f(\vx_t), \nabla f(\vy_t) \right\rangle + \beta\eta^2 (\E \|\nabla f(\vy_t)\|^2 + \sigma^2) \\
    &= \E f(\vx_t) - \frac{\eta}{2} \E \|\nabla f(\vx_t)\|^2 - \frac{\eta}{2} \E \|\nabla f(\vy_t)\|^2 + \frac{\eta}{2} \E\|\nabla f(\vx_t) - \nabla f(\vy_t)\|^2 \\
    &\quad + \beta\eta^2 (\E \|\nabla f(\vy_t)\|^2 + \sigma^2) \\
    &\leq \E f(\vx_t) - \frac{\eta}{2} \E \|\nabla f(\vx_t)\|^2 + \frac{\beta^2\eta}{2} \E\|\vx_t - \vy_t\|^2 + \beta\sigma^2\eta^2 \\
    &= \E f(\vx_t) - \frac{\eta}{2} \E \|\nabla f(\vx_t)\|^2 + \frac{\beta^2\rho^2\eta}{2} + \beta\sigma^2\eta^2.
\end{align*}
Rearranging the inequality, we get
\begin{align*}
    \E\|\nabla f(\vx_t)\|^2 \leq \frac{2}{\eta} (\E f(\vx_{t})-\E f(\vx_{t+1})) + \beta^2\rho^2 + 2\beta\sigma^2\eta.
\end{align*}
Adding up the inequality for $t=0,\cdots,T-1$, and dividing both sides by $T$, we get
\begin{align}
\label{eq:sto_smth-1}
\frac{1}{T}\sum_{t=0}^{T-1} \E \|\nabla f(\vx_t)\|^2
&\leq \frac{2}{\eta T} \left ( \E f(\vx_0) - \E f(\vx_T) \right ) + \beta^2 \rho^2 + 2 \beta\sigma^2\eta \nonumber\\
&\leq \frac{2\Delta}{\eta T}  + \beta^2 \rho^2 + 2\beta\sigma^2\eta.
\end{align}
Substituting $\eta = \min \left\{ \frac{1}{2\beta}, \frac{\sqrt{\Delta}}{\sqrt{T\beta\sigma^2}} \right\}$ to \eqref{eq:sto_smth-1}, we get two cases.
\paragraph{Case A: $\frac{1}{2\beta} \leq \frac{\sqrt{\Delta}}{\sqrt{T\beta\sigma^2}}$.} $\eta = \frac{1}{2\beta}$, so we have
\begin{align*}
\frac{1}{T}\sum_{t=0}^{T-1} \E \|\nabla f(\vx_t)\|^2
&\leq \frac{2\Delta}{\eta T}  + \beta^2 \rho^2 + 2\beta\sigma^2\eta \\
&\leq \frac{4\beta\Delta}{T} + 2\beta\sigma^2 \cdot \frac{\sqrt{\Delta}}{\sqrt{T\beta\sigma^2}} + \beta^2\rho^2 \\ 
&= \frac{4\beta\Delta}{T} + \frac{2\sqrt{\beta\sigma^2\Delta}}{\sqrt{T}} + \beta^2\rho^2.
\end{align*}
\paragraph{Case B: $ \frac{\sqrt{\Delta}}{\sqrt{T\beta\sigma^2}} \leq \frac{1}{2\beta}$.} $\eta = \frac{\sqrt{\Delta}}{\sqrt{T\beta\sigma^2}}$, and it leads to
\begin{align*}
\frac{1}{T}\sum_{t=0}^{T-1} \E \|\nabla f(\vx_t)\|^2
&\leq \frac{2\Delta}{\eta T}  + \beta^2 \rho^2 + 2\beta\sigma^2\eta \\
&\leq \frac{2\sqrt{\Delta}}{T} \cdot \sqrt{T\beta\sigma^2} + \frac{2\sqrt{\beta\sigma^2\Delta}}{\sqrt{T}} + \beta^2\rho^2 \\
&= \frac{4\sqrt{\beta\sigma^2\Delta}}{\sqrt{T}} + \beta^2\rho^2.
\end{align*}

Merging two cases, we can conclude that
\begin{equation*}
    \frac{1}{T}\sum_{t=0}^{T-1} \E \|\nabla f(\vx_t)\|^2 \leq 
    \mathcal{O}\left(\frac{\beta\Delta}{T} + \frac{\sqrt{\beta\sigma^2\Delta}}{\sqrt{T}}\right) + \beta^2 \rho^2,
\end{equation*}
thereby completing the proof.
\end{proof}

\subsection{Convergence Proof for Smooth Lipschitz Nonconvex Function under $m$-SAM (Proof of Theorem~\ref{thm:sto_smth_msam} and Corollary~\ref{cor:sto_smth_msam})}
\label{sec:proof_sto_smth_msam}
In this section, the convergence proof for stochastic $m$-SAM for smooth, Lipschitz, and nonconvex functions is presented. The notation $\hat{\vy}_t = \vx_t + \rho\frac{\nabla f(\vx_t)}{\|\nabla f(\vx_t)\|}$ is used here once again. The theorem is restated for convenience.
\thmstosmthmsam*
\begin{proof}
The proof technique resembles \citet{mi2022make}. Starting from the definition of $\beta$-smoothness, we have
\begin{align*}
    \E f(\vx_{t+1}) 
    &\leq \E f(\vx_t) - \eta\E\langle \nabla f(\vx_t), \tilde{g}(\vy_t) \rangle + \frac{\beta\eta^2}{2} \E \|\tilde{g}(\vy_t)\|^2 \\
    &\leq \E f(\vx_t) - \eta \E \|\nabla f(\vx_t)\|^2 - \eta \E \langle \nabla f(\vx_t), \tilde{g}(\vy_t) - \nabla f(\vx_t) \rangle + \frac{\beta\eta^2}{2} \E \|\tilde{g}(\vy_t)\|^2 \\
    &= \E f(\vx_t) - \eta \E \|\nabla f(\vx_t)\|^2 - \eta \E \langle \nabla f(\vx_t), \tilde{g}(\vy_t) - \tilde{g}(\hat{\vy}_t) \rangle - \eta \E \langle \nabla f(\vx_t), \nabla f(\hat{\vy}_t) - \nabla f(\vx_t) \rangle \\
    &\quad + \frac{\beta\eta^2}{2} \E \|\tilde{g}(\vy_t)\|^2 \\
    &\leq \E f(\vx_t) - \eta \E \|\nabla f(\vx_t)\|^2 + \frac{\eta}{2} \E \|\nabla f(\vx_t)\|^2 + \frac{\eta}{2} \E \|\tilde{g}(\vy_t) - \tilde{g}(\hat{\vy}_t)\|^2 \\
    &\quad - \eta \E \left\langle\frac{\|\nabla f(\vx_t)\|}{\rho} \cdot (\hat{\vy}_t - \vx_t), \nabla f(\hat{\vy}_t) - \nabla f(\vx_t) \right\rangle + \frac{\beta\eta^2}{2} \E \|\tilde{g}(\vy_t)\|^2 \\
    &\leq \E f(\vx_t) - \frac{\eta}{2} \E \|\nabla f(\vx_t)\|^2 + \frac{\beta^2\eta}{2} \E \left\| \vy_t - \hat{\vy}_t \right\|^2
    + \eta \E \left[\frac{\beta\|\nabla f (\vx_t)\|}{\rho} \cdot \|\hat{\vy}_t - \vx_t\|^2\right] + \frac{\beta\eta^2}{2} \E \|\tilde{g}(\vy_t)\|^2 \\
    &\leq \E f(\vx_t) - \frac{\eta}{2} \E \|\nabla f(\vx_t)\|^2 + \frac{\beta^2\rho^2\eta}{2} \E \left\| \frac{\tilde{g}(\vx_t)}{\|\tilde{g}(\vx_t)\|} - \frac{\nabla f(\vx_t)}{\|\nabla f(\vx_t)\|} \right\|^2
    + \beta\rho\eta \E \|\nabla f (\vx_t)\| \\
    &\quad + \frac{\beta\eta^2}{2} \E \|\tilde{g}(\vy_t)\|^2 \\
    &\leq \E f(\vx_t) - \frac{\eta}{2} \E \|\nabla f(\vx_t)\|^2 + 2\beta^2\rho^2\eta 
    + \beta\rho\eta \E\|\nabla f (\vx_t)\| + \frac{\beta\eta^2}{2} \E \|\tilde{g}(\vy_t)\|^2 \\
    &\leq \E f(\vx_t) - \frac{\eta}{2} \E(\|\nabla f(\vx_t)\| - \beta\rho)^2 + \frac{5}{2}\beta^2\rho^2\eta + \beta\eta^2(\E\|\nabla f(\vy_t)\|^2 + \E \|\tilde{g}(\vy_t)-\nabla f(\vy_t)\|^2) \\
    &\leq \E f(\vx_t) - \frac{\eta}{2} \E(\|\nabla f(\vx_t)\| - \beta\rho)^2 + \frac{5}{2}\beta^2\rho^2\eta + \beta\eta^2(\sigma^2 + L^2).
\end{align*}
Rearranging the inequality, we get
\begin{align*}
    \E \left [ (\|\nabla f(\vx_t)\| - \beta\rho)^2 \right ] &\leq \frac{2}{\eta}(\E f(\vx_t)- \E f(\vx_{t+1})) + 5\beta^2\rho^2 + 2\beta\eta(\sigma^2+L^2).
\end{align*}
Adding up the inequality for $t=0, \cdots, T-1$, and dividing both sides by $T$, we get
\begin{align}
    \frac{1}{T} \sum_{t=0}^{T-1}\E \left [ (\|\nabla f(\vx_t)\| - \beta\rho)^2 \right ]
    &\leq \frac{2}{\eta T} (\E f(\vx_0) - \E f(\vx_T)) + 5\beta^2\rho^2 + 2\beta\eta(\sigma^2+L^2) \nonumber \\
    \label{eq:sto_smth-2}&\leq \frac{2\Delta}{\eta T} + 5\beta^2\rho^2 + 2\beta\eta(\sigma^2+L^2).
\end{align}
Substituting $\eta=\frac{\sqrt{\Delta}}{\sqrt{\beta(\sigma^2+L^2)T}}$ to \eqref{eq:sto_smth-2} yields
\begin{equation*}
    \frac{1}{T}\sum_{t=0}^{T-1} \E \left [ \left(\|\nabla f(\vx_t)\| - \beta\rho \right)^2  \right ]\leq 
    \frac{4\sqrt{\beta \Delta (\sigma^2 + L^2)}}{\sqrt{T}}  + 5\beta^2 \rho^2,
\end{equation*}
thereby completing the proof.
\end{proof}

From the result of Theorem~\ref{thm:sto_smth_msam}, Corollary~\ref{cor:sto_smth_msam} can be derived. The corollary is restated for the ease of reference.
\corsmthmsam*
\begin{proof}
Starting from the result of Theorem~\ref{thm:sto_smth_msam}, we have
\begin{align*}
    \min_{t \in \{0,\dots,T\}} \left \{ \left ( \E \|\nabla f(\vx_t)\|-\beta\rho \right )^2 \right \} 
    &\leq \min_{t \in \{0,\dots,T\}} \E \left [ \left(\|\nabla f(\vx_t)\| - \beta\rho\right)^2 \right ] \\
    &\leq \frac{1}{T}\sum\nolimits_{t=0}^{T-1} \E \left [ (\|\nabla f(\vx_t)\| - \beta\rho)^2  \right ]\\
    &\leq  \mathcal{O}\left(\frac{\sqrt{\beta \Delta (\sigma^2 + L^2)}}{\sqrt{T}}\right) + 5\beta^2\rho^2.
\end{align*}
Taking the square root on both sides,
\begin{align*}
    \min_{t \in \{0,\dots,T\}} \left \{\big|\E \|\nabla f(\vx_t)\|-\beta\rho \big| \right \} \leq \mathcal{O}\left(\frac{\left(\beta \Delta (\sigma^2 + L^2)\right)^{1/4}}{T^{1/4}}\right) + \sqrt{5}\beta\rho.
\end{align*}
Rearranging the inequality, we obtain
\begin{align*}
    \min_{t \in \{0,\dots,T\}}\{\E \|\nabla f(\vx_t)\|\} \leq \mathcal{O}\left(\frac{\left(\beta \Delta (\sigma^2 + L^2)\right)^{1/4}}{T^{1/4}}\right) + \left(1+\sqrt{5}\right)\beta\rho,
\end{align*}
thereby completing the proof.
\end{proof}

\newpage
\section{Simulation Results of SAM on Example Functions}
\label{sec:d}
Figure~\ref{fig:simulation} shows the results of SAM simulations on the example functions considered in Sections~\ref{sec:fb_results}~and~\ref{sec:sto_results}. It demonstrates that SAM indeed does not converge in our worst-case example constructions.
\begin{figure}[t!]
     \centering
    \subfigure[]{\includegraphics[width=0.32\textwidth]{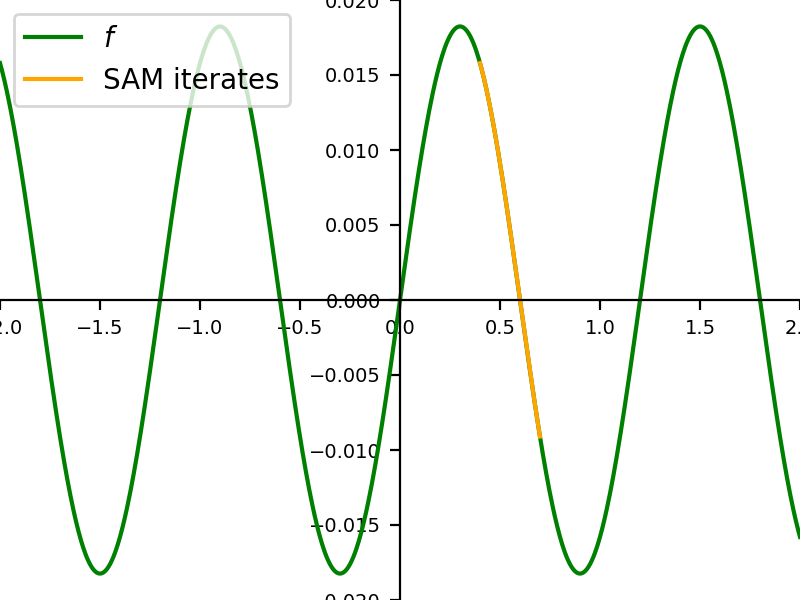}}\hspace{0.1em}
    %\vspace{-0.55em}
    \subfigure[]{\includegraphics[width=0.32\textwidth]{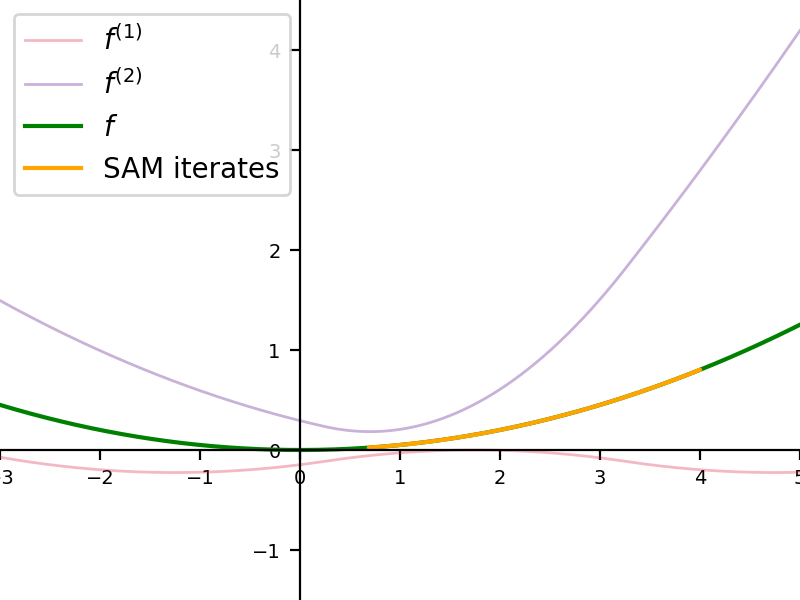}}\hspace{0.1em}
    %\vspace{-0.55em}
    \subfigure[]{\includegraphics[width=0.32\textwidth]{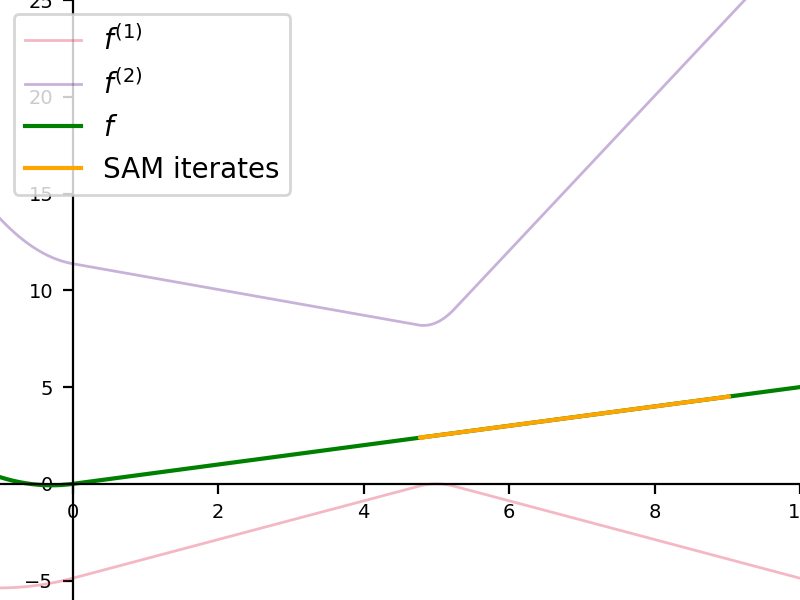}}\hspace{0.1em}
    %\vspace{-0.55em}
    \subfigure[]{\includegraphics[width=0.32\textwidth]{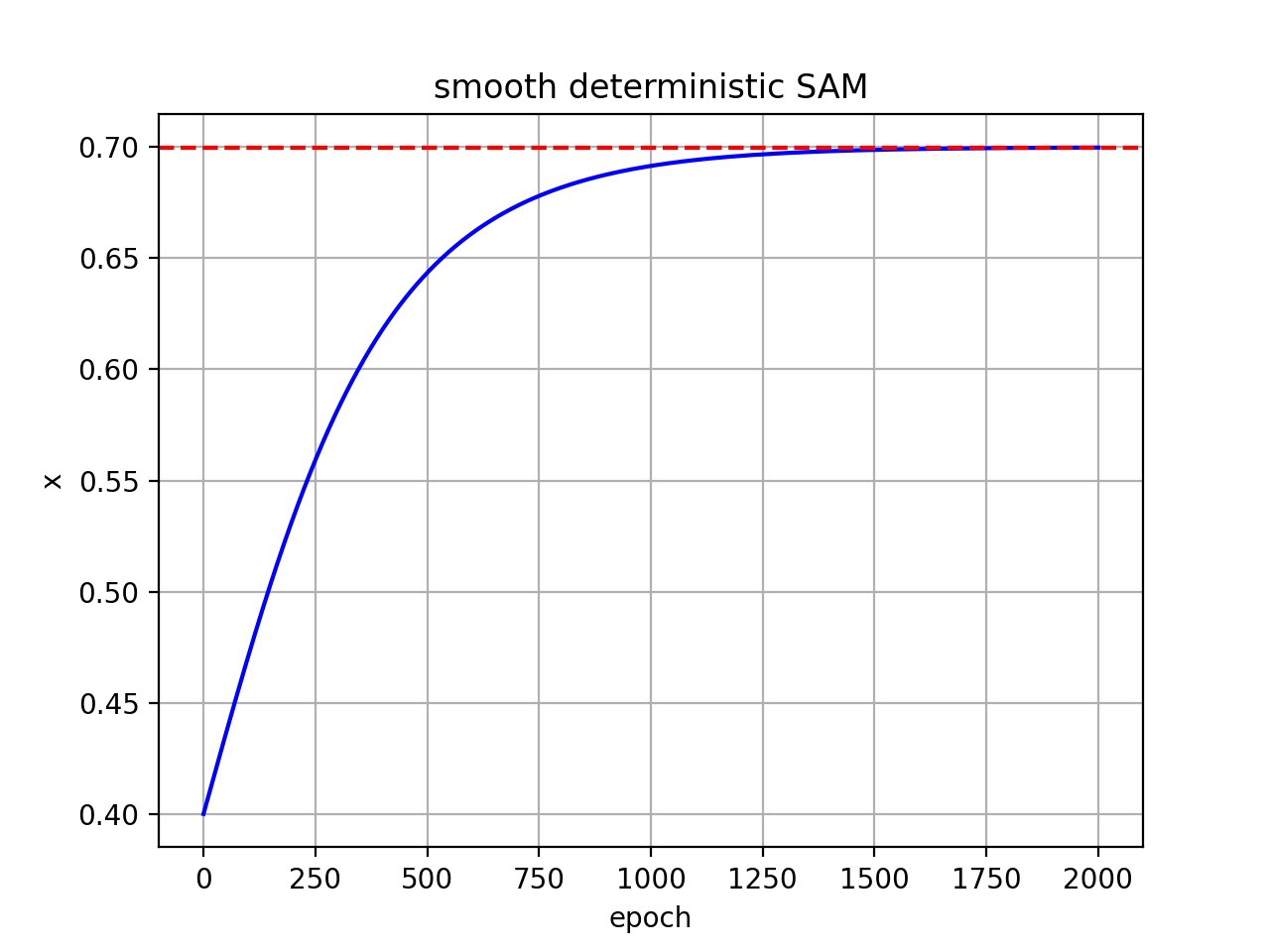}}\hspace{0.1em}
    % \vspace{-0.5em}
    \subfigure[]{\includegraphics[width=0.32\textwidth]{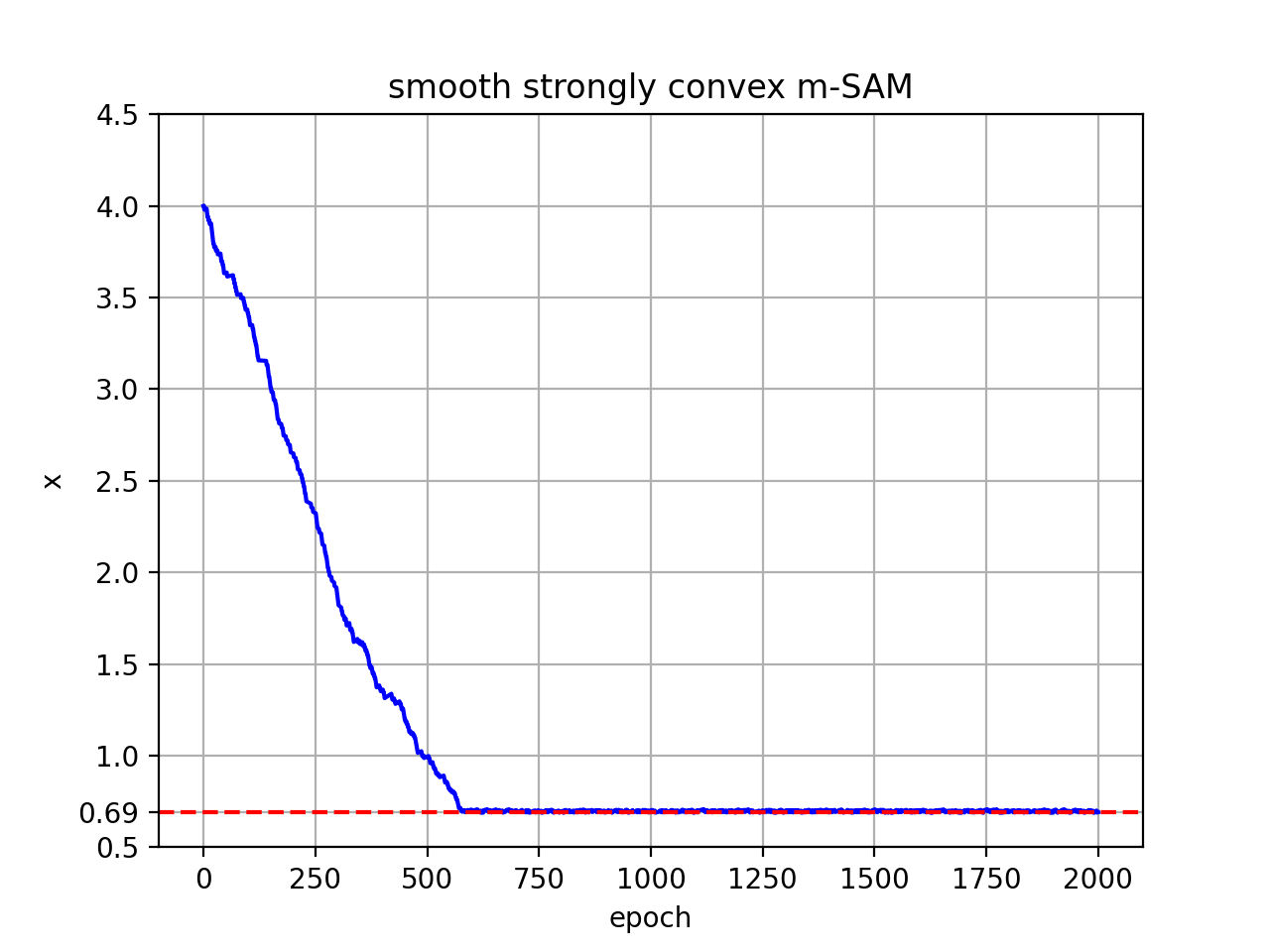}}\hspace{0.1em}
    \subfigure[]{\includegraphics[width=0.32\textwidth]{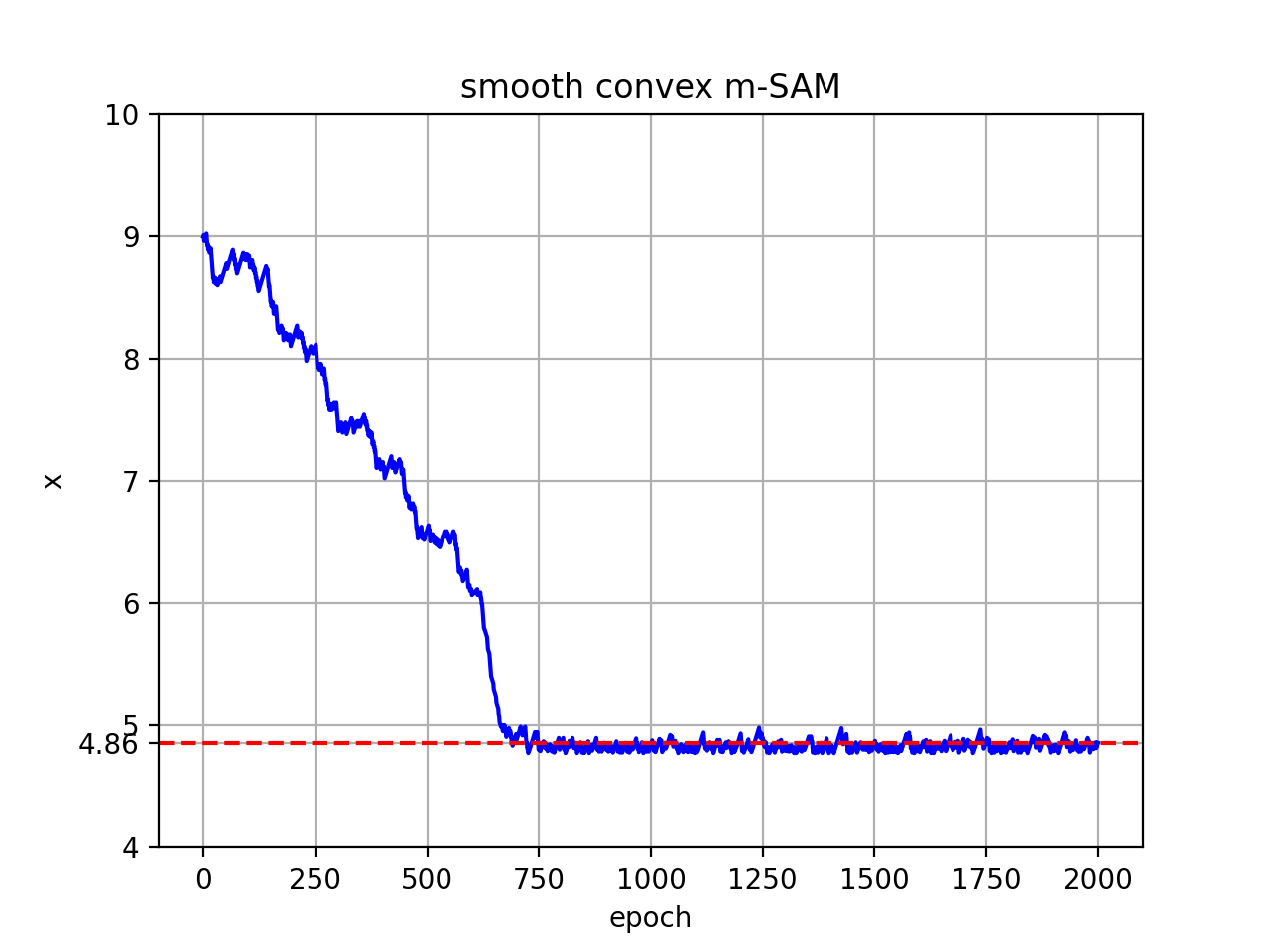}}\hspace{0.1em}
    % \vspace{-0.5em}
        \caption{The results of the SAM simulations on the example functions. The yellow line indicates the trajectory of SAM iterates.
        (a) and (d) display deterministic SAM iterates (with initialization $x_0=0.4$) and the plot of x-coordinate values over epochs, for a smooth nonconvex function as shown in Figure~\ref{fig:subfigures_1}(a) under settings in Theorem~\ref{thm:fb_noncvx_counter}.
        (b) and (e) show $m$-SAM iterates (with initialization $x_0=4$) and the plot of x-coordinate values over epochs, for the smooth strongly convex function as shown in Figure~\ref{fig:subfigures_1}(b) under settings in Theorem~\ref{thm:sto_sc_counter}.
        (c) and (f) demonstrate $m$-SAM iterates (with initialization $x_0=9$) and the plot of x-coordinate values over epochs, for the smooth convex function as shown in Figure~\ref{fig:subfigures_1}(d) under settings in Theorem~\ref{thm:sto_cvx_counter}.
        All plots empirically verify that practical SAM cannot converge all the way to optima. Instead, the iterates get trapped in certain regions.
        }
\label{fig:simulation}
\end{figure}

\end{document}